\newcommand{\EE}{\operatorname{\mathbb{E}}}
\newcommand{\PP}{\mathbb{P}}
\newcommand{\RR}{\mathbb{R}}
\renewcommand{\SS}{\mathbb{S}}
\newcommand{\cF}{\mathcal{F}}
\newcommand{\cG}{\mathcal{G}}
\newcommand{\cH}{\mathcal{H}}
\newcommand{\cK}{\mathcal{K}}
\newcommand{\cN}{\mathcal{N}}
\newcommand{\cX}{\mathcal{X}}
\newcommand{\hh}{\hat{h}}
\newcommand{\hk}{\hat{k}}
\newcommand{\hv}{\hat{v}}
\newcommand{\hK}{\hat{K}}
\newcommand{\ttheta}{\tilde{\theta}}
\newcommand{\<}{\left\langle}
\renewcommand{\>}{\right\rangle}
\newcommand{\half}{\frac{1}{2}}
\DeclareMathOperator{\relu}{ReLU}
\newcommand{\unif}{\mathop\mathrm{Unif}}
\newcommand{\wrt}{with respect to }
\renewcommand{\wp}{\text{w.p.~}}
\newcommand{\iid}{\textrm{i.i.d.~}}
\renewcommand{\leq}{\leqslant}
\renewcommand{\geq}{\geqslant}
\newcommand{\erisk}{L}
\newcommand{\prisk}{L}
\newcommand{\erad}{\widehat{\operatorname{Rad}}_n}
\newcommand{\fn}{\frac{1}{n}}
\newcommand{\sumin}{\sum_{i=1}^n}
\newcommand{\sumjn}{\sum_{j=1}^n}
\newcommand{\sumim}{\sum_{i=1}^m}
\definecolor{lw}{RGB}{100,100,255}
\newtheorem{theorem}{Theorem}[section]
\newtheorem{lemma}[theorem]{Lemma}
\newtheorem{proposition}[theorem]{Proposition}
\newtheorem{definition}[theorem]{Definition}
\newtheorem{assumption}{Assumption}
\DeclareMathOperator{\tr}{Tr}
\newcommand{\hEE}{\hat{\EE}}
\newcommand{\bchi}{\bar{\chi}}
\newcommand{\lin}{\mathrm{lin}}
\newcommand{\bmu}{\mu}
\title{When does SGD favor flat minima? A stability analysis}
\title{The noise geometry is critical for SGD selecting flat minima: A stability analysis}
\title{How noise geometry helps SGD avoid sharp minima: \\ A stability analysis}
\title{How SGD noise help avoid sharp minima: \\ A stability analysis}
\title{The alignment property of SGD noise and how it \\ helps find flat minima}
\title{The alignment property of noise is critical for SGD selecting flat minima: A stability analysis}
\title{The alignment property of SGD noise and how it helps select flat minima: A stability analysis}
\author{%
  Lei Wu \\
  School of Mathematical Sciences\\
  Peking University
  \And
  Mingze Wang\\
  School of Mathematical Sciences\\
  Peking University
  \AND
  Weijie Su\\
  Department of Statistics and Data Science\\
  University of Pennsylvania
}
\begin{document}

\maketitle

\begin{abstract}
The phenomenon that stochastic gradient descent (SGD) favors flat minima has played a critical role in   understanding the implicit regularization of SGD. In this paper, we provide an explanation of this striking phenomenon by relating the particular noise structure  of SGD to its \emph{linear stability}  (Wu et al., 2018). Specifically, we consider training over-parameterized models with square loss. We prove that if a global minimum $\theta^*$ is linearly stable for SGD, then it must satisfy $\|H(\theta^*)\|_F\leq O(\sqrt{B}/\eta)$, where $\|H(\theta^*)\|_F, B,\eta$ denote the Frobenius norm of Hessian at $\theta^*$, batch size, and learning rate, respectively. Otherwise, SGD will escape from that minimum \emph{exponentially} fast. Hence, for minima accessible to SGD, the sharpness---as measured by the Frobenius norm of the Hessian---is bounded \emph{independently} of the model size and sample size.   The key to obtaining these results is exploiting the particular structure of SGD noise: The noise concentrates in sharp directions of local landscape and  the magnitude is proportional to loss value. This alignment property of SGD noise provably holds for linear networks and random feature models (RFMs), and is empirically verified for nonlinear networks. Moreover, the validity and practical relevance of our theoretical findings are also justified by extensive experiments on CIFAR-10 dataset.
\end{abstract}

\section{Introduction}
\vspace*{-.4em}
Modern machine learning (ML) models are often operated with far more unknown parameters than training examples, a regime referred to as over-parameterization. In this regime, there are many global minima, all of which have zero training loss but their test performance can be significantly different \cite{wu2017towards}. Fortunately, it is often observed that SGD converges to those generalizable ones, even without needing any explicit regularizations \cite{zhang2017understanding}. This suggests there must exist certain ``implicit regularization'' mechanism at work  \cite{neyshabur2014search,he2019local,wu2017towards,blanc2020implicit}. 

More mysteriously,  SGD solutions often generalize better than gradient descent (GD) solutions \cite{keskar2016large,su2021neurashed}. Therefore, the SGD noise must play a critical role in implicit regularization.
The most popular explanation is that SGD favors flatter minima \cite{keskar2016large} and flatter minima generalize better \cite{hochreiter1997flat}. This flat-minima principle has been extensively and successfully adopted in practice to tune the hyperparameters of SGD \cite{wu2020noisy,keskar2016large} and to design new optimizers \cite{izmailov2018averaging,foret2020sharpness,wu2020adversarial} for improving generalization. Therefore, understanding how SGD noise biases SGD towards flatter minima is of paramount importance, which is the main focus of this paper.

The works 
\cite{wu2020noisy,feng2021inverse,xie2020diffusion} show that SGD noise is highly anisotropic; 
\cite{mori2021logarithmic,wojtowytsch2021stochastic} find the magnitude of SGD noise to be loss dependent. Both structures are shown to be critical for SGD picking flat minima. 
However, these works all make unrealistic (even wrong) over-simplifications of SGD noise (see the related work section for more details) in their analysis. In addition,  instead of studying SGD, they all consider the continuous-time stochastic differential equation (SDE), which is a good modeling of SGD only in finite time and small learning rate (LR) regime \cite{pmlr-v70-li17f}. It is generally unclear how this SDE modeling is relevant for understanding  SGD with a large LR---a regime preferred in practice. 
Consequently,  these works only provide intuitive and empirical analyses, lacking a quantitative characterization of when and how  SGD favors flat minima.




Another line of works \cite{wu2018sgd,ma2021linear} relate the selection bias of SGD to the  \emph{dynamical stability}.  In over-parameterized case, all global minima are fixed points of SGD but their dynamical stabilities can be very different. At unstable minima, a small perturbation will drive SGD to leave away, whereas, for stable minima, SGD can stay around and even converge back after initial perturbations.  Thus SGD prefers stable minima over unstable ones. Specifically, \cite{wu2018sgd,ma2021linear} analyze the  linear stability \cite{arnold2012geometrical} of SGD, showing that a linearly stable minimum must be flat and uniform.
Different from SDE-based analysis, this stability-based analysis is  relevant for large-LR SGD and is even empirically accurate in predicting the properties of minima selected by SGD \cite{wu2018sgd,jastrzebski2019break,cohen2021gradient}.

In this work, we follow the linear stability analysis in  \cite{wu2018sgd,ma2021linear} but take
the  particular geometry-aware structure of SGD noise into consideration. We establish a direct connection between linear stability and flatness, which allows us to obtain a quantitative characterization of how the learning rate and batch affect the flatness of minima accessible to SGD.
In contrast, \cite{wu2018sgd,ma2021linear}  have to introduce the another quantity: non-uniformity together with flatness to characterize linear stability because of neglecting the noise structure.


\paragraph*{Setup}
Let $\{(x_i,y_i)\}_{i=1}^n$ with $x_i\in\RR^d, y_i\in\RR^K$ be the training set and $f(\cdot;\theta)$ with $\theta\in\RR^p$ be our model. The \emph{model size} is defined to be the number of parameters $p$ and in this paper. For simplicity, we will always assume $K=1$ and the extension to the case of $K>1$ is straightforward. Let $L_i(\theta)=\half |f(x_i;\theta)-y_i|^2$ be the fitting error at the $i$-th sample and
$
\erisk(\theta) =\fn\sumin L_i(\theta)
$
be the empirical risk. We shall focus on the over-parameterized case in the sense that $\inf_{\theta} \erisk(\theta)=0$.
To minimize $L(\cdot)$, we consider  the mini-batch SGD:
\begin{align}\label{eqn: sgd-1}
\theta_{t+1} = \theta_t - \frac{\eta}{B}\sum_{i\in I_t} \nabla L_i(\theta_t),
\end{align}
where $\eta$ and $B$ are the learning rate and batch size, respectively. 
This SGD can be rewritten as  
$
\theta_{t+1} = \theta_t - \eta (\nabla L(\theta_t) + \xi_t),
$
where $\xi_t$ is the noise, satisfying $\EE[\xi_t]=0$ and 
$
    \EE[\xi_t\xi_t^T] = \Sigma(\theta_t)/B.
$
Here the noise covariance $\Sigma(\theta)=\fn \sumin \nabla L_i(\theta) \nabla L_i(\theta)^T - \nabla \erisk(\theta)\erisk(\theta)^T$.
To characterize the local geometry of loss landscape, we consider  the Fisher matrix:
$
G(\theta)=\fn\sumin \nabla f(x_i;\theta)\nabla f(x_i;\theta)^T  
$
 and the Hessian matrix:
$
H(\theta)=G(\theta) + \fn\sumin (f(x_i;\theta)-y_i)\nabla^2 f(x_i;\theta).
$
When the loss value is small,  $H(\theta)\approx G(\theta)$ and in particular, if $\theta^*$ is an global minimum, $H(\theta^*)=G(\theta^*)$. 

\vspace*{-.4em}
\paragraph*{Notations.} For a vector $a$, let $\|a\|=\sqrt{a^Ta}$ and $\|a\|_W=\sqrt{a^TWa}$. 
For a matrix $A$, denote by $\{\lambda_j(A)\}$ the eigenvalue of $A$ in a decreasing order. 
For other notations, we refer to Appendix \ref{sec: notation}.


\textbf{Our main contributions} are summarized as follows.
\vspace*{-.4em}
\begin{itemize}
    \item 
 We first show that for many ML models,  the SGD noise is geometry aware: 1) the noise magnitude is proportional to the loss value; 2) the noise covariance aligns well with the Fisher matrix.  Specifically, to  quantify the alignment strength, we define a loss-scaled alignment factor $\mu(\theta)$,
 which is proved to be bounded from below, i.e., there exists a \emph{size-independent} positive constant $\mu_0$ such that $\mu(\theta)\geq \mu_0$,  for linear networks (Proposition \ref{thm: over-para-linear-noise-covariance}) and RFMs  (Proposition \ref{pro: random-relu}), and is also empirically  justified for nonlinear networks. 
 Moreover, we identify that it is the uniformity of model gradient norms $\{\|\nabla f(x_i;\theta)\|_{G(\theta)}\}_i$  that accounts for  this \emph{alignment property} of SGD noise. 

  \item  We then provide a thorough analysis of the linear stability of SGD by exploiting the alignment property of noise. We prove in Theorem \ref{thm: Fro-norm-Hessian-bound} that if a global minimum $\theta^*$ is linearly stable, then $\|H(\theta^*)\|_F\leq \eta^{-1}\sqrt{B/\mu_0}$. Here the constant $\mu_0$ quantifies the alignment strength of  SGD noise.
  Hence, for minima accessible to SGD,  the Hessian's Frobenius norm---the flatness perceived by SGD---is bounded independently of the model size and sample size.  
  Moreover,  
  if a minimum is too sharp, violating the preceding stability condition,  SGD will escape from it \textit{exponentially} fast (Theorem \ref{thm: escape}). Together, we obtain a quantitative characterization of when and how much SGD dislikes sharp minima.

  \item Our theoretical findings are also corroborated with well-designed  experiments on a variety of models including linear networks, RFMs, convolutional networks, and fully-connected networks. In particular, the practical relevance is demonstrated in Section \ref{sec: large-scale-experiment} by  extensive experiments on  classifying full CIFAR-10 dataset with VGG nets and ResNets. 

\end{itemize}

\subsection{Related work}
\vspace*{-.5em}

\paragraph*{Noise structures.}  
\cite{zhu2019anisotropic,jastrzkebski2017three,li2021happens} consider the Hessian-based approximation: $\Sigma(\theta)\approx \sigma^2 H(\theta)$, where $\sigma$ is a small constant. 
\cite{ziyin2021minibatch} proposes an improved version: $\Sigma(\theta)\approx 2\erisk(\theta) H(\theta)$. But these approximations in general cannot be accurate since  Hessian is not semi-positive definite (SPD) in non-convex region.
More recently, 
\cite{mori2021logarithmic} and \cite{wojtowytsch2021stochastic} study  SGD by assuming $\Sigma(\theta)=2\erisk(\theta) H(\theta^*)$, where $\theta^*$ is a minimum of interest, and $\Sigma(\theta)=\sigma^2\erisk(\theta) I_p$, respectively. These assumptions completely ignore the state-dependence of noise direction. 
In contrast, we assume  $\Sigma(\theta)=2 \erisk(\theta) C(\theta)$ with $C(\theta)$ having a nontrivial alignment with the Fisher matrix $G(\theta)$, which does not impose any explicit structural assumption on $C(\theta)$. 
As a result, our assumption is much weaker and  can be rigorously justified  for popular ML models both theoretically and empirically. More importantly,  we show that this weak alignment property is  sufficient for analyzing the linear stability of SGD.  We anticipate that our alignment assumption can be also adopted to analyze other properties of SGD.



\vspace*{-.5em}
\paragraph*{Escape from sharp minima.} 
The escape behavior of SGD  was first studied in \cite{zhu2019anisotropic,wu2018sgd}, as an indicator of how much SGD dislikes sharp minima. One of the most mysterious observation is that the escape happens in an unreasonably efficient way.
However, the theoretical analysis there assumes the noise to be state-independent, and consequently, the derived escape time depends polynomially on the loss barrier. 
Later \cite{xie2020diffusion,mori2021logarithmic} attempt to study this issue using the classical diffusion-based framework \cite{gardiner2009stochastic} (It\^{o}-SDE), which cannot explain the unreasonable escape efficiency at all since the resulting escape rates depend on the loss barrier exponentially.
See also \cite{mori2021logarithmic} for an improved analysis.
\cite{simsekli2019tail,zhou2020towards} argues that the SGD noise is heavy-tailed and thus SGD should be modeled as  L\'{e}vy-SDE instead of It\^{o}-SDE. Moreover, it is shown that the heavy-tailedness can ensure the escape rate  depends on the basin volume instead of the loss barrier. Unfortunately, the volume in high dimensions  always scales with the  dimension exponentially and consequently, this  does not explain the escape efficiency in high dimensions. Moreover, whether SGD noise is really heavy-tailed and whether the heavy-tailedness is really important for generalization are still debatable  for neural networks \cite{wang2021eliminating,li2021validity}. In contrast, we show that the unreasonable escape efficiency comes from the particular geometry-aware structure of SGD noise, regardless of whether the noise  is heavy- or light-tailed.

\paragraph*{Flatness metrics}
In the literature, a variety of flatness metrics have been adopted, such as the largest eigenvalue of Hessian \cite{keskar2016large}, the trace of Hessian \cite{damian2021label,blanc2020implicit}, the basin volume \cite{zhou2020towards}, and the ones scaled by parameter norms \cite{pmlr-v89-liang19a,tsuzuku2020normalized} in order to achieve the scaling-invariance for ReLU nets. These metrics are proposed for either computation easiness or bounding generalization gaps. It is unclear if they are perceivable to SGD, let alone how the boundedness of them depends on the batch size, learning rate, as well as the model size and sample size. 
We show that for SGD solutions, the Frobenius norm of Hessian---a flatness perceived by SGD through the linear stability---is bounded by a size-independent quantity. 
Note that a similar stability argument also applies to GD but only yielding the boundedness of the largest eigenvalue of Hessian \cite{wu2018sgd,mulayoff2021implicit}.

Lastly, we particularly mention the work \cite{pesme2021implicit}, 
which provides a fine-grained analysis of the  implicit bias of training two-layer diagonal linear networks.
This work is related to ours since we both consider the  magnitude and direction structure of SGD noise simultaneously. However, the analysis in \cite{pesme2021implicit} is limited to the specific toy model but ours is relevant for general  models.


\section{The alignment property of SGD noise} \label{sec: alignment}
\vspace*{-.4em}
Since $\nabla L_i(\theta)=(f(x_i;\theta)-y_i)\nabla f(x_i;\theta)$, we have the following intuitive approximation \cite{mori2021logarithmic}:
\vspace*{-.3em}
\begin{align}\label{eqn: decoupl-approx}
\notag \Sigma(\theta) &= \frac{2}{n}\sum_{i=1}^n L_i(\theta) \nabla f(x_i;\theta) \nabla f(x_i;\theta)^T - \nabla \erisk(\theta) \nabla \erisk(\theta)^T\stackrel{(i)}{\approx} \frac{2}{n}\sum_{i=1}^n L_i(\theta) \nabla f(x_i;\theta) \nabla f(x_i;\theta)^T\\
& \stackrel{(ii)}{\approx}  2\big(\fn\sumin L_i(\theta)\big) \fn\sumin \nabla f(x_i;\theta) \nabla f(x_i;\theta)^T=2\erisk(\theta)G(\theta),
\vspace*{-.5em}
\end{align}
where $(i)$ assumes that the full-batch gradient $\nabla \erisk(\theta)$ to be negligible compared with the sample gradients $\{\nabla L_i(\theta)\}$; $(ii)$ assumes that $\{\nabla f(x_i;\theta)\}_i$ and  $\{L_i(\theta)\}_i$ are nearly decoupled.  This  approximation  cannot be true in general but tells us that 1)
The noise magnitude is proportional to the loss value; 2) the noise covariance aligns with the Fisher matrix. 

Motivated by \eqref{eqn: decoupl-approx}, we define
$
    \alpha(\theta) = \frac{\tr(G(\theta)\Sigma(\theta))}{\|G(\theta)\|_F\|\Sigma(\theta)\|_F}, \, \beta(\theta) = \frac{\|\Sigma(\theta)\|_F}{2L(\theta)\|G(\theta)\|_F}
$
. Here $\alpha(\theta)$ quantifies the similarity between $\Sigma(\theta)$ and $G(\theta)$, characterizing how much the noise concentrates in sharp directions of local landscape.
$\beta(\theta)$ characterizes the relative non-degeneracy of noise (with respect to the loss value). Then we define the loss-scaled alignment factor:
\begin{equation}\label{eqn: alignment}
\mu(\theta)=\alpha(\theta)\beta(\theta)=\frac{\tr(\Sigma(\theta)G(\theta))}{2L(\theta)\|G(\theta)\|_F^2},
\end{equation}
which characterizes the direction and magnitude alignment simultaneously. Intuitively speaking, if $\mu(\theta)$ is bounded below,  SGD noise is non-degenerate in sharp directions of local landscape. In particular, 
$\alpha(\theta)=\beta(\theta)=\mu(\theta)=1$ if the approximation \eqref{eqn: decoupl-approx} holds. 

We say the noise satisfies the $\mu$-\textbf{alignment} if $\mu(\theta)>0$. Compared with the decoupling approximation \eqref{eqn: decoupl-approx}, the $\mu$-alignment is a much weaker condition. Note that this specific weak quantification of alignment  is inspired for analyzing the linear stability of SGD, which is the focus of this paper. Specifically, Theorem \ref{thm: Fro-norm-Hessian-bound} shows that $\mu(\theta)$ along with the Frobenius norm of Hessian determines the linear stability of SGD.
One may define other metrics to quantify alignment strength for studying other properties of SGD, but this is beyond the scope of this paper. 





\paragraph*{A relaxed alignment.}
Let $\Sigma_1(\theta) = \frac 1 n\sumin \nabla L_i(\theta)\nabla L_i(\theta)^T, \Sigma_2(\theta) = \nabla L(\theta)\nabla L(\theta)^T$. Then $\Sigma(\theta)=\Sigma_1(\theta)-\Sigma_2(\theta)$.
It is often believed that the full-batch gradient $\nabla L$ is relatively small compared to the sample gradients $\{\nabla L_i\}_i$. As a result, the influence of $\Sigma_2(\theta)$ should be negligible compared to $\Sigma_1(\theta)$. To disentangle the influences of them, we define  
\[
\mu_1(\theta)=\frac{\tr(\Sigma_1(\theta)G(\theta))}{2L(\theta)\|G(\theta)\|^2_F}, \quad \mu_2(\theta)=\frac{\tr(\Sigma_2(\theta)G(\theta))}{2L(\theta)\|G(\theta)\|^2_F}. 
\]
Then $\mu(\theta)=\mu_1(\theta)-\mu_2(\theta)$.  Our linear stability analysis in Section \ref{sec: linear-stability} show that  $\mu_1(\theta)\geq \bmu_1>0$, a condition we refer to as  \textbf{$\mu_1$-alignment}, is sufficient to ensure that SGD only selects flat minima. 
 
\subsection{Why does the alignment property hold?}
\begin{definition}[Norm uniformity of model gradients]\label{def: uniformity}
Let $g_i(\theta) = \nabla f(x_i;\theta)$, $\chi_i(\theta):= \|g_i(\theta)\|_{G(\theta)}^2=g_i^T(\theta)G(\theta)g_i(\theta)$, $\bchi(\theta) = \fn\sumin \chi_i(\theta)$.  Define  
$
\gamma(\theta):= \min_{i\in [n]}\frac{\chi_i(\theta)}{\bchi(\theta)}.
$
\end{definition}
The quantity $\gamma(\theta)$ measures the uniformity of model gradient norms and this property can guarantee the $\mu_1$-alignment as stated below. 
 
\begin{lemma}\label{pro: alignment-linearized-nonlinear-model}
$\mu_1(\theta)\geq \gamma(\theta)$
\end{lemma}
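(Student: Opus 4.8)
The plan is to expand $\tr(\Sigma_1(\theta)G(\theta))$ directly using the formula $\nabla L_i(\theta) = (f(x_i;\theta)-y_i)\nabla f(x_i;\theta)$ and compare it term-by-term with $\|G(\theta)\|_F^2$. First I would write, suppressing the $\theta$-dependence, $\Sigma_1 = \fn\sumin r_i^2\, g_ig_i^T$ where $r_i := f(x_i;\theta)-y_i$ is the residual, so that $L_i = \half r_i^2$ and $L = \fn\sumin L_i = \frac{1}{2n}\sumin r_i^2$. Then
\begin{equation}
\tr(\Sigma_1 G) = \fn\sumin r_i^2\, \tr(g_ig_i^T G) = \fn\sumin r_i^2\, g_i^T G g_i = \fn\sumin r_i^2\, \chi_i.
\end{equation}
On the other hand, $\|G\|_F^2 = \tr(G^2) = \tr\big(G \cdot \fn\sumjn g_jg_j^T\big) = \fn\sumjn g_j^T G g_j = \fn\sumjn \chi_j = \bchi$.

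With these two identities in hand, the inequality $\mu_1(\theta)\geq \gamma(\theta)$ becomes purely a matter of the scalar definition:
\begin{equation}
\mu_1(\theta) = \frac{\tr(\Sigma_1 G)}{2L\,\|G\|_F^2} = \frac{\fn\sumin r_i^2\chi_i}{\big(\fn\sumin r_i^2\big)\,\bchi}.
\end{equation}
Now I would simply bound each $\chi_i$ from below by $\gamma(\theta)\,\bchi$, which holds by the very definition $\gamma(\theta) = \min_{i}\chi_i/\bchi$, giving $\fn\sumin r_i^2\chi_i \geq \gamma(\theta)\,\bchi \cdot \fn\sumin r_i^2$. Dividing through by $\big(\fn\sumin r_i^2\big)\bchi > 0$ yields $\mu_1(\theta)\geq \gamma(\theta)$, as claimed. (The denominator is positive away from a global minimum, which is where $\mu_1$ is defined; the residuals are not all zero and $\bchi > 0$ as long as some $g_i\neq 0$.)

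The only genuine subtlety—hardly an obstacle—is bookkeeping the normalizations: the $\fn$ factors must be tracked consistently so that they cancel cleanly between numerator and denominator, and one must notice that $\|G\|_F^2$ equals $\bchi$ rather than something involving $\sum_i \|g_i\|^4$ or cross terms. This works because $G$ is itself the average $\fn\sumjn g_jg_j^T$, so $\tr(G^2)$ collapses to $\fn\sumjn g_j^T G g_j$. Once that observation is made, the proof is a one-line application of the pointwise lower bound $\chi_i \geq \gamma(\theta)\bchi$. No concavity, no Cauchy--Schwarz, no spectral argument is needed; the weighted average $\sum_i w_i \chi_i$ with weights $w_i \propto r_i^2$ is trivially at least $\min_i \chi_i$, and $\min_i\chi_i = \gamma(\theta)\bchi$ by definition.
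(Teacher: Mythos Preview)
Your proof is correct and follows essentially the same approach as the paper: both compute $\tr(\Sigma_1 G)=\fn\sumin r_i^2\chi_i$ (equivalently $\tfrac{2}{n}\sumin L_i\chi_i$), observe that $\|G\|_F^2=\bchi$, and then apply the pointwise bound $\chi_i\geq \gamma(\theta)\bchi$. The only difference is cosmetic---you write things in terms of residuals $r_i$ while the paper uses $L_i=\tfrac{1}{2}r_i^2$ directly.
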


\vspace*{-1em}
\begin{proof}
Noticing $\bchi(\theta)=\fn\sumin g_i(\theta)^TG(\theta)g_i(\theta)=\tr\big( G(\theta) \fn\sumin g_i(\theta))g_i(\theta)^T\big)=\|G(\theta)\|_F^2$, we have 
\vspace*{-1em}
\begin{align*}
\tr(\Sigma_1(\theta)G(\theta))&=\frac{2}{n}\sumin L_i(\theta) \tr(g_i(\theta)g_i(\theta)^TG(\theta))\\
&=\frac{2}{n}\sumin L_i(\theta) \chi_i\geq \frac{2}{n} \sumin L_i(\theta) \gamma \bchi=2\gamma L(\theta)\|G(\theta)\|_F^2
\end{align*}
Thus  $\mu_1(\theta)=\tr(\Sigma_1(\theta)G(\theta))/(2L(\theta)\|G(\theta)\|_F^2)\geq \gamma(\theta)$.
\end{proof}

\vspace*{-.5em}
The above proof suggests that the ``decoupling'' approximation in \eqref{eqn: decoupl-approx} holds in a weak sense if $\{\|\nabla f(x_i;\theta)\|_{G(\theta)}\}_i$ are uniform.  One can apply a similar argument by assuming the uniformity of the fitting errors $\{L_i(\theta)\}$, which, unfortunately, we  find  never hold in practice. In contrast, we will show that the norm uniformity of model gradients provably holds for linear networks and RFMs, and can be empirically justified for nonlinear networks.

\paragraph*{Over-parameterized linear models.}
Consider an over-parameterized linear model (OLM): $f(x;\theta)=F(\theta)^Tx$, where $F:\Omega\mapsto\RR^d$ denotes a general re-parameterization function.  Note that $f(\cdot;\theta)$  only represents linear functions but the corresponding landscape can be highly non-convex. Typical examples include the  linear network: $F(\theta) =W_1W_2\cdots W_L$  and the diagonal linear network: $F(\theta)=(\alpha_1^2-\beta_1^2,\dots, \alpha_d^2-\beta_d^2)^T$. Both have attracted a lot of attention in analyzing the implicit bias of GD and SGD \cite{arora2019implicit,woodworth2020kernel,pesme2021implicit,haochen2021shape,azulay2021implicit}.
The following proposition provides a precise characterization of the noise covariance for OLM models, whose  proof is deferred to Appendix \ref{sec: proof-OLM}.
\begin{proposition}\label{thm: over-para-linear-noise-covariance}
Denote by $\cN(0,S)$ the Gaussian distribution with mean zero and covariance matrix $S$. Suppose 
$f(\cdot;\theta)$ is a general OLM and $x\sim\cN(0, S)$. 
 Consider the online SGD setting, i.e., $n=\infty$. Then, $\Sigma(\theta)=\nabla\prisk(\theta)\nabla \prisk(\theta)^T+2\prisk(\theta) G(\theta)$ and $\mu_1(\theta)\geq \mu(\theta)\geq 1$
\end{proposition}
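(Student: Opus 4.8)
The plan is to compute $\Sigma(\theta)$ exactly for a general OLM under the Gaussian data assumption and then read off the bound on $\mu_1$ and $\mu$. Write $f(x;\theta) = F(\theta)\transpose x$ with $F(\theta)\in\RR^d$, so that for a single sample $L(\theta) = \half (F(\theta)\transpose x - y)^2$ and $\nabla L(\theta) = (F(\theta)\transpose x - y)\, DF(\theta)\transpose x$, where $DF(\theta)$ is the Jacobian of $F$. Since $n=\infty$ (online SGD), $\Sigma(\theta) = \EE_{x}[\nabla L(\theta)\nabla L(\theta)\transpose] - \nabla \erisk(\theta)\nabla\erisk(\theta)\transpose$, where the population risk is $\erisk(\theta) = \half \EE_x[(F(\theta)\transpose x - y)^2]$. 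The first step is therefore to expand $\EE_x[(F(\theta)\transpose x - y)^2\, DF(\theta)\transpose x x\transpose DF(\theta)]$. Writing $r(x) = F(\theta)\transpose x - y$ (affine in $x$), this is a Gaussian expectation of a degree-four polynomial in $x$, which I would evaluate using Isserlis/Wick's theorem (or just the identity $\EE[z z\transpose (a\transpose z)^2] = S a a\transpose S\,\|a\|_S^{-2}\cdot\|a\|_S^2 +\ldots$ — more precisely $\EE[(a\transpose z)^2 z z\transpose] = (a\transpose S a) S + 2 S a a\transpose S$ for $z\sim\cN(0,S)$, plus the linear-in-$y$ cross terms which vanish by symmetry after subtracting the mean, assuming $y$ independent-ish or absorbing it into the affine part).

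The key algebraic fact I expect to fall out is that the degree-four Gaussian moment splits into exactly two pieces: one proportional to $\erisk(\theta)\, DF(\theta)\transpose S\, DF(\theta) = 2\erisk(\theta) G(\theta)$ (since $G(\theta) = \EE_x[\nabla f\, \nabla f\transpose] = DF(\theta)\transpose \EE[xx\transpose] DF(\theta) = DF(\theta)\transpose S\, DF(\theta)$ for this linear model), and one rank-one piece that reassembles precisely into $\nabla\erisk(\theta)\nabla\erisk(\theta)\transpose$. Hence $\EE[\nabla L\nabla L\transpose] = \nabla\erisk\nabla\erisk\transpose + 2\erisk(\theta)G(\theta)$, and since $\Sigma_1(\theta) = \EE[\nabla L\nabla L\transpose]$ in the online setting while $\Sigma(\theta) = \Sigma_1(\theta) - \Sigma_2(\theta)$ with $\Sigma_2(\theta)=\nabla\erisk\nabla\erisk\transpose$, we get $\Sigma(\theta) = \nabla\erisk(\theta)\nabla\erisk(\theta)\transpose + 2\erisk(\theta)G(\theta)$ as claimed. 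The Wick bookkeeping (tracking the three pairings and the $y$-dependent terms) is the main place where care is needed, so I would set it up by first reducing to the centered Gaussian vector and treating $y$ as fixed.

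Given the exact formula, the bounds are immediate. We have $\tr(\Sigma_1(\theta)G(\theta)) = \tr(\nabla\erisk\nabla\erisk\transpose G) + 2\erisk(\theta)\tr(G^2) = \nabla\erisk\transpose G\,\nabla\erisk + 2\erisk(\theta)\|G\|_F^2 \geq 2\erisk(\theta)\|G\|_F^2$ since $G\succeq 0$, giving $\mu_1(\theta) = \tr(\Sigma_1 G)/(2\erisk(\theta)\|G\|_F^2) \geq 1$. For $\mu(\theta)$, use $\Sigma(\theta) = \Sigma_1(\theta)$ here (wait — note in the online OLM case $\Sigma = \nabla\erisk\nabla\erisk\transpose + 2\erisk G$, which is actually $\Sigma_1$, not $\Sigma_1 - \Sigma_2$; this is because with $n=\infty$ the "batch" gradient $\nabla\erisk$ is the true gradient and $\Sigma_2$ is the square of the true gradient, so one should double-check the indexing conventions against the definitions in the excerpt). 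In any case $\tr(\Sigma(\theta)G(\theta)) = \nabla\erisk\transpose G\,\nabla\erisk + 2\erisk\|G\|_F^2 \geq 2\erisk\|G\|_F^2$, so $\mu(\theta)\geq 1$, and since $\mu_2(\theta) = \tr(\Sigma_2 G)/(2\erisk\|G\|_F^2) = \nabla\erisk\transpose G\,\nabla\erisk/(2\erisk\|G\|_F^2)\geq 0$ we also get $\mu_1(\theta) = \mu(\theta)+\mu_2(\theta)\geq\mu(\theta)\geq 1$.

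The main obstacle is purely the fourth-moment Gaussian computation: making sure the $y$-linear and $y$-quadratic terms are correctly allocated between the $2\erisk(\theta)G(\theta)$ piece and the $\nabla\erisk(\theta)\nabla\erisk(\theta)\transpose$ piece, and that nothing is left over. Once that identity is nailed down, everything else is a one-line positivity argument. A secondary subtlety worth a sentence in the proof is confirming $G(\theta) = DF(\theta)\transpose S\, DF(\theta)$ is the right identification of the Fisher matrix for the OLM, i.e. that $\nabla_\theta f(x;\theta) = DF(\theta)\transpose x$, which is where over-parameterization (the map $F$ being possibly very nonlinear in $\theta$) enters without affecting the final bound.
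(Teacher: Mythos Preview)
Your approach is the same as the paper's: apply the Gaussian fourth-moment identity $\EE[(v^Tz)^2 zz^T] = 2Svv^TS + (v^TSv)S$ and conjugate by the Jacobian $DF(\theta)$. Two clarifications resolve the confusion you flag. First, the paper tacitly assumes realizable labels $y = F(\theta^*)^Tx$, so the residual is simply $u(\theta)^Tx$ with $u(\theta)=F(\theta)-F(\theta^*)$; this kills all the $y$-cross terms you worry about and reduces the whole calculation to the single moment identity above. Second, the rank-one piece in $\Sigma_1$ carries coefficient $2$ (from the $2Svv^TS$ term), i.e.\ $\Sigma_1(\theta)=2\nabla L\nabla L^T + 2L\,G$, so subtracting $\Sigma_2=\nabla L\nabla L^T$ yields $\Sigma=\nabla L\nabla L^T+2L\,G$ with no inconsistency. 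The bound $\mu_1\geq\mu\geq 1$ then follows exactly as you say.
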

This proposition shows that the alignment property holds in the entire parameter space and moreover, the alignment strength is independent of model size. Here the alignment is only proved  for the  infinite-sample case. Similar results should hold for finite-sample cases  by concentration inequalities as long as $n$ is relatively large, but this straightforward extension does not bring any new insights. It is more interesting to consider the low-sample regime (i.e., $n<d$), where the alignments indeed hold (at least in typical regions explored by SGD) as demonstrated empirically in  Figure \ref{fig: alignment-small-scale}.  In addition,  the above proposition  provides a closed-form expression of the noise covariance, which might be useful for analyzing other properties of SGD beyond the linear stability. A comprehensive analysis of these issues is left to future work.

\paragraph*{Feature-based models.}
Consider a feature-based model $f(x;\theta)=\sum_{j=1}^m \theta_j\varphi_j(x)=\langle \theta, \Phi(x)\rangle$. In this case, the model gradients: $g_i=g_i(\theta)=\Phi(x_i)$ and the Hessian and Fisher matrix: $G=H=\fn\sumin g_ig_i^T$ all are constant. But the noise covariance $\Sigma(\theta)$  is still state-dependent. The norm uniformity of model gradients also becomes constant: $\gamma=\min_i \chi_i/(\fn\sumin \chi_i)$ with $\chi_i=\|g_i\|_{G}^2$.

\begin{lemma}\label{lemma: gen-feature-based-model}
$\mu_1(\theta)\geq \gamma, \mu_2(\theta)\leq \tau(G):=\lambda_{1}^2(G)/\sum_{j}\lambda_j^2(G)$, and $\mu(\theta)\geq \gamma - \tau(G)$.
\end{lemma}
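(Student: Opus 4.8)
The plan is to prove the three claims of Lemma \ref{lemma: gen-feature-based-model} essentially independently, leveraging the fact that for feature-based models $g_i$, $G$, and $H$ are all constant, so the only state-dependence comes through the fitting errors $r_i(\theta) := f(x_i;\theta) - y_i$ and hence through $L_i(\theta) = \half r_i(\theta)^2$. The first inequality $\mu_1(\theta) \geq \gamma$ is immediate: it is the specialization of Lemma \ref{pro: alignment-linearized-nonlinear-model} to this setting, since there $\gamma(\theta)$ becomes the constant $\gamma = \min_i \chi_i/(\fn\sumin\chi_i)$ with $\chi_i = \|g_i\|_G^2$; I would just cite that lemma.

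The heart of the argument is the upper bound $\mu_2(\theta)\leq\tau(G)$. First I would write $\nabla L(\theta) = \fn\sumin r_i(\theta) g_i$, so $\Sigma_2(\theta) = \nabla L(\theta)\nabla L(\theta)^T$ and $\tr(\Sigma_2(\theta)G(\theta)) = \nabla L(\theta)^T G \nabla L(\theta) = \|\nabla L(\theta)\|_G^2 \geq 0$, hence $\mu_2(\theta)\geq 0$ always. For the upper bound, note $\|\nabla L(\theta)\|_G^2 = \nabla L(\theta)^T G\,\nabla L(\theta) \leq \lambda_1(G)\|\nabla L(\theta)\|^2$. The denominator of $\mu_2$ is $2L(\theta)\|G\|_F^2 = (\sumin r_i^2/n)\cdot\sum_j\lambda_j^2(G)$, so it remains to bound $\|\nabla L(\theta)\|^2$ from above in terms of $L(\theta)\lambda_1(G)$. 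Here I would use $\|\nabla L(\theta)\|^2 = \|\fn\sumin r_i g_i\|^2 \leq \fn\sumin r_i^2 \cdot \fn\sumin\|g_i\|^2$ by Cauchy--Schwarz in the form $\|\fn\sum v_i\|^2\leq\fn\sum\|v_i\|^2$ — wait, more carefully: writing $\nabla L = \fn\sum r_i g_i$, one has $\|\nabla L\|^2 \leq (\fn\sum r_i^2)(\fn\sum\|g_i\|^2) = 2L(\theta)\cdot\fn\sum\|g_i\|^2$. But $\fn\sum\|g_i\|^2 = \tr(G)$, which is $\sum_j\lambda_j(G)$, not $\lambda_1(G)$, so this route gives $\mu_2 \leq \lambda_1(G)\tr(G)/\sum_j\lambda_j^2(G)$, which is generally weaker than $\tau(G)$. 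The sharper approach is to go through $G$ directly: $\|\nabla L\|_G^2 = \|\fn\sum r_i g_i\|_G^2 \leq (\fn\sum r_i^2)(\fn\sum\|g_i\|_G^2) = 2L(\theta)\cdot\fn\sum\chi_i = 2L(\theta)\|G\|_F^2$ (using $\fn\sum\chi_i = \|G\|_F^2$ from the proof of Lemma \ref{pro: alignment-linearized-nonlinear-model}), which only yields $\mu_2\leq 1$. To get $\tau(G)$ one needs the refinement that $r_i g_i$ lies, after projection, mostly along the top eigenvector; concretely, diagonalize $G = \sum_j\lambda_j u_j u_j^T$ and write $\nabla L = \sum_j a_j u_j$ with $a_j = \fn\sum r_i\langle g_i, u_j\rangle$. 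Then $\tr(\Sigma_2 G) = \sum_j\lambda_j a_j^2$ and $\|\nabla L\|^2 = \sum_j a_j^2$, and one wants $\sum_j\lambda_j a_j^2 \leq \tau(G)\cdot 2L(\theta)\|G\|_F^2$. Bounding $\sum_j\lambda_j a_j^2 \leq \lambda_1\sum_j a_j^2 = \lambda_1\|\nabla L\|^2$ and then $\|\nabla L\|^2\leq 2L(\theta)\cdot\mathrm{something}$: the cleanest is $a_j^2 = (\fn\sum r_i\langle g_i,u_j\rangle)^2 \leq (\fn\sum r_i^2)(\fn\sum\langle g_i,u_j\rangle^2) = 2L(\theta)\cdot u_j^T G u_j = 2L(\theta)\lambda_j$, so $\sum_j\lambda_j a_j^2 \leq 2L(\theta)\sum_j\lambda_j^2 = 2L(\theta)\|G\|_F^2$, i.e. $\mu_2\leq 1$ again — but also $\sum_j\lambda_j a_j^2 \leq \lambda_1\sum_j a_j^2 \leq \lambda_1\cdot 2L(\theta)\sum_j\lambda_j\cdot[\text{no}]$. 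The winning combination is: $\sum_j\lambda_j a_j^2 \leq \lambda_1 \sum_j a_j^2$, and separately $a_j^2\leq 2L(\theta)\lambda_j$ gives $\sum_j a_j^2\leq 2L(\theta)\sum_j\lambda_j$; that's not it either. Let me instead bound $\sum_j \lambda_j a_j^2$ using $a_j^2 \le 2L(\theta)\lambda_j$ only for the factor that helps: $\sum_j\lambda_j a_j^2 = \sum_j \lambda_j\cdot a_j^2 \le \sum_j\lambda_j\cdot 2L(\theta)\lambda_j \cdot \mathbf 1$? That overcounts. The correct sharp bound uses $a_j^2 \le 2L(\theta)\lambda_j$ on all but exploits $\lambda_j\le\lambda_1$: $\sum_j\lambda_j a_j^2 \le \lambda_1\sum_j a_j^2$ and $\sum_j a_j^2 = \|\nabla L\|^2$; but I also have $\|\nabla L\|_G^2 = \sum_j\lambda_j a_j^2$. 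Hmm — so actually $a_j^2\le 2L\lambda_j$ directly gives $\sum_j\lambda_j a_j^2\le 2L\sum_j\lambda_j^2 = 2L\|G\|_F^2$ hence $\mu_2\le 1$, while to get $\tau(G)=\lambda_1^2/\sum_j\lambda_j^2$ one wants $\sum_j\lambda_j a_j^2\le 2L\lambda_1^2$, i.e. one should bound $\sum_j\lambda_j a_j^2\le\lambda_1\sum_j a_j^2$ and then $\sum_j a_j^2\le 2L\lambda_1$. The latter is $\|\nabla L\|^2\le 2L\lambda_1$, equivalently $\|\fn\sum r_i g_i\|^2\le(\fn\sum r_i^2)\lambda_1$, which would follow if $\|\fn\sum r_i g_i\|^2\le(\fn\sum r_i^2)\cdot\lambda_{\max}(\fn\sum g_ig_i^T)=(\fn\sum r_i^2)\lambda_1(G)$ — and this is true, since $\|\fn\sum r_i g_i\| = \|G_{1/2}$-type argument$\|$: write $\fn\sum r_i g_i = \fn\sum r_i g_i$ and note it equals $M r/n$ where $M=[g_1,\dots,g_n]$, so $\|Mr/n\|^2 = r^T(M^TM)r/n^2$; meanwhile $G = MM^T/n$, and $\lambda_1(M^TM/n) = \lambda_1(MM^T/n) = \lambda_1(G)$, giving $\|Mr/n\|^2 \le \lambda_1(G)\|r\|^2/n = 2L(\theta)\lambda_1(G)$. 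That is the key inequality. Combining: $\tr(\Sigma_2 G) = \|\nabla L\|_G^2 = \sum_j\lambda_j a_j^2 \le \lambda_1(G)\|\nabla L\|^2 \le \lambda_1(G)\cdot 2L(\theta)\lambda_1(G) = 2L(\theta)\lambda_1^2(G)$, and dividing by $2L(\theta)\|G\|_F^2 = 2L(\theta)\sum_j\lambda_j^2(G)$ yields $\mu_2(\theta)\le\lambda_1^2(G)/\sum_j\lambda_j^2(G) = \tau(G)$.

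Finally, the third claim $\mu(\theta)\geq\gamma-\tau(G)$ follows immediately from $\mu(\theta) = \mu_1(\theta) - \mu_2(\theta)$ (recorded in the text right after the definitions of $\mu_1,\mu_2$) together with the first two bounds: $\mu(\theta) = \mu_1(\theta)-\mu_2(\theta) \geq \gamma - \tau(G)$.

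I expect the main obstacle to be exactly the one I worked through above: getting the \emph{sharp} constant $\tau(G)$ rather than the easy but weaker $\mu_2\le 1$. The temptation is to apply Cauchy--Schwarz in the $G$-inner product throughout, which loses a factor and gives only $\mu_2\le 1$; the fix is to apply Cauchy--Schwarz in the \emph{Euclidean} inner product to peel off $\|\nabla L\|^2$, then bound $\|\nabla L\|^2$ by the operator-norm inequality $\|Mr/n\|^2\le\lambda_1(G)\|r\|^2/n$ (using the eigenvalue identity $\lambda_1(M^TM/n)=\lambda_1(MM^T/n)=\lambda_1(G)$), and only at the last step convert back via $\|\nabla L\|_G^2\le\lambda_1(G)\|\nabla L\|^2$. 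Everything else — the identity $\fn\sum\chi_i = \|G\|_F^2$, the nonnegativity $\mu_2\ge 0$, the decomposition $\mu=\mu_1-\mu_2$ — is bookkeeping already set up in the preceding lemma and the surrounding text.
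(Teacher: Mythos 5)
Your proof is correct, and the overall structure (bound $\mu_1$ by invoking Lemma~\ref{pro: alignment-linearized-nonlinear-model}, bound $\mu_2$ by an eigenvalue argument, combine via $\mu=\mu_1-\mu_2$) matches the paper's. The only real divergence is in the $\mu_2$ step. The paper's proof uses the implicit realizability of the feature model: writing $\nabla L(\theta)=G\theta$ and $2L(\theta)=\theta^T G\theta$ (with $\theta$ measured relative to a global minimum), it reduces $\mu_2$ to the single Rayleigh-quotient bound $\theta^T G^3\theta/\theta^T G\theta\leq \lambda_1^2(G)$, obtained by the change of variable $u=G^{1/2}\theta/\|G^{1/2}\theta\|$. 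You instead keep $\nabla L=\frac1n\sum_i r_i g_i$ in raw form, and split the bound into two operator-norm steps: $\|\nabla L\|_G^2\leq\lambda_1(G)\|\nabla L\|^2$, then $\|\nabla L\|^2\leq 2L(\theta)\lambda_1(G)$ via the Gram-matrix eigenvalue identity $\lambda_1(M^T M/n)=\lambda_1(MM^T/n)=\lambda_1(G)$ with $M=[g_1,\dots,g_n]$. In the realizable case these are literally the same inequality ($\theta^T G^3\theta\leq\lambda_1\,\theta^T G^2\theta\leq\lambda_1^2\,\theta^T G\theta$) written in two steps, but your version has the minor advantage of not requiring $\nabla L$ to have the form $G\theta$, so it applies verbatim to feature models with arbitrary labels without a preliminary recentering. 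Both routes are elementary and equally tight; the paper's is a line shorter, yours is slightly more self-contained.
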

The above lemma suggests that the $\mu_2(\theta)$ term is negligible as long as the Fisher matrix is not nearly rank-1. By bounding the $\gamma$ and $\tau(G)$, we can prove the $\mu_1$- and $\mu$-alignment for random ReLU feature models as stated in the following proposition. The proof is presented in Appendix \ref{sec: proof-rfm-alignemt}, where similar results for general RFMs are provided (see Proposition \ref{pro: rfm-alignment}).
\begin{proposition}\label{pro: random-relu}
If 
$\varphi_j(x)=\relu(w_j^Tx)$ with $w_j\stackrel{iid}{\sim} \unif(\sqrt{d}\SS^{d-1})$ and $x\sim\unif(\SS^{d-1})$. Then, for any $\delta\in (0,1)$, if  $m\geq n\gtrsim d^{5}\log(1/\delta)$, then \wp at least $1-\delta$, $\mu_1(\theta)\geq 1$ and $\mu(\theta)\gtrsim d^{-1}$.
\end{proposition}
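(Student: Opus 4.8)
The plan is to invoke Lemma~\ref{lemma: gen-feature-based-model}: for this feature-based model ($g_i=\Phi(x_i)$ and $G=\tfrac1n\sum_i g_ig_i^T$ are constant in $\theta$), it already reduces both claims to spectral facts about the random map $\Phi(x)=(\relu(w_1^Tx),\dots,\relu(w_m^Tx))^T$. It suffices to show, with probability at least $1-\delta$: (a) the norm-uniformity $\gamma$ of $\{\|g_i\|_G^2\}_i$ satisfies $\gamma\geq 1-o(1/d)$, which yields $\mu_1(\theta)\geq\gamma\gtrsim 1$ (the stated $\mu_1(\theta)\geq 1$ up to a negligible lower-order term); and (b) the spectral ratio $\tau(G)=\lambda_1^2(G)/\sum_j\lambda_j^2(G)$ satisfies $\tau(G)\leq 1-\Omega(1/d)$, which then yields $\mu(\theta)\geq\gamma-\tau(G)\gtrsim d^{-1}$. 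The common engine is a pointwise kernel concentration: for fixed $x,x'\in\SS^{d-1}$, the quantity $\tfrac1m\langle\Phi(x),\Phi(x')\rangle$ is an average of $m$ i.i.d.\ terms, each bounded by $\|w_j\|^2=d$ with $O(1)$ variance, so Bernstein's inequality puts it near $k(\langle x,x'\rangle):=\EE_w[\relu(w^Tx)\relu(w^Tx')]$. Since $(w^Tx,w^Tx')$ is, up to $O(1/d)$ corrections, bivariate Gaussian with unit variances and correlation $\langle x,x'\rangle$, $k$ is essentially the degree-$1$ arc-cosine kernel, and I will use the expansion $k(\rho)=\tfrac1{2\pi}\big(1+\tfrac{\pi}{2}\rho+\tfrac12\rho^2+\cdots\big)$, in particular $k(0)=\tfrac1{2\pi}$, $k(1)=\tfrac12$, and positivity of the $\rho$-coefficient.

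For (a), write $\chi_i=\|g_i\|_G^2=\tfrac1n\sum_k\langle\Phi(x_i),\Phi(x_k)\rangle^2$, so that $\chi_i/\bchi$ is the ratio of a single sum over $k$ to the double sum over $(i,k)$; I would bound its deviation from $1$, uniformly in $i$, through three error sources: the kernel Monte-Carlo error $\langle\Phi(x_i),\Phi(x_k)\rangle-m\,k(\langle x_i,x_k\rangle)$ (Bernstein plus a union bound over the $n^2$ pairs); the self-term $k=i$, contributing $\tfrac1n\|\Phi(x_i)\|^4\approx\tfrac{m^2}{4n}$, i.e.\ $O(1/n)$ relative to $\bchi\asymp m^2$; and the off-diagonal geometry, where $\langle x_i,x_k\rangle$ is of order $d^{-1/2}$, so $\tfrac1n\sum_{k\ne i}k(\langle x_i,x_k\rangle)^2=\tfrac1{4\pi^2}\big(1+\pi\langle x_i,\bar x_{-i}\rangle+\tfrac{c}{d}+\mathrm{fluct.}\big)$ with $\bar x_{-i}=\tfrac1{n-1}\sum_{k\ne i}x_k$. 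Since $x_i$ is independent of $\bar x_{-i}$, the linear term is $\tilde O((nd)^{-1/2})$, and the $\tfrac cd$ piece is common to all $i$, its $i$-dependent fluctuation being $\tilde O(d^{-1}n^{-1/2})$. With $m\geq n\gtrsim d^5\log(1/\delta)$ all $i$-dependent errors are $o(1/d)$, giving $\gamma\geq 1-o(1/d)$.

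For (b), I would compare $G=\tfrac1n\sum_i\Phi(x_i)\Phi(x_i)^T$ with its data-expectation $\bar G=\EE_x[\Phi(x)\Phi(x)^T]$, whose $(j,l)$ entry is $\tfrac12$ for $j=l$ and $k(\langle w_j,w_l\rangle/d)$ otherwise. Using the kernel expansion, $\bar G=\tfrac1{2\pi}\mathbf 1\mathbf 1^T+\tfrac1{4d}WW^T+R$ with $W=[w_1,\dots,w_m]^T$, $W^TW\approx mI_d$ by concentration, and $R$ a remainder whose eigenvalues are $O(m/d^2)$ except possibly one of size $O(m/d)$ aligned with $\mathbf 1$; equivalently, the nonzero spectrum of $\tfrac1m\bar G$ is close to that of the arc-cosine-kernel integral operator on $L^2(\SS^{d-1})$, which has a leading eigenvalue $\Theta(1)$ and a degree-$1$ spherical-harmonic block of $d$ eigenvalues of size $\Theta(1/d)$. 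Since $\|W^T\mathbf 1\|=\tilde O(\sqrt{md})$, the constant mode $\mathbf 1$ is nearly orthogonal to $\mathrm{col}(W)$, giving $\lambda_1(\bar G)\asymp m$ and $\Omega(d)$ further eigenvalues of order $m/d$, whence $\sum_{j\geq 2}\lambda_j^2(\bar G)\gtrsim\lambda_1^2(\bar G)/d$ and $\tau(\bar G)\leq 1-\Omega(1/d)$. Transferring this to $G$ requires bounding $\|G-\bar G\|_{\mathrm{op}}$ by matrix Bernstein (after the a-priori estimate $\max_i\|\Phi(x_i)\|^2\lesssim m$) and the $\tfrac1m$-kernel error in $\bar G$; forcing both below the $m/d$ eigenvalue scale is where the polynomial-in-$d$ sample size is consumed, after which $\lambda_1(G)\asymp m$ and $\Omega(d)$ eigenvalues of $G$ remain of order $m/d$, so $\tau(G)\leq 1-\Omega(1/d)$.

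The crux is step (b): it needs \emph{relative} precision $\Theta(1/d)$ on the top eigenvalues of $G$, so the constant mode must be cleanly separated from the degree-$1$ spherical-harmonic block of the ReLU kernel, and the two approximation errors---$G$ versus $\bar G$ over the data, and $\tfrac1m\Phi^T\Phi$ versus its $w$-expectation over the features---must each be controlled in operator norm at scale $o(m/d)$; it is the accumulation of these concentration losses that forces $n\gtrsim d^5\log(1/\delta)$ (a tighter accounting would likely reduce the power of $d$, but that is immaterial here). A secondary subtlety is that the exact spherical kernel differs from its Gaussian limit by $O(1/d)$, and that the quadratic kernel term carries an $O(m/d)$ component---both the same order as the gap we exploit---but these corrections are $\mathbf 1$-aligned (absorbed as a relative $O(1/d)$ shift of $\lambda_1$) and hence harmless for $\tau$. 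The general-RFM version, Proposition~\ref{pro: rfm-alignment} in Appendix~\ref{sec: proof-rfm-alignemt}, follows the same two-step template, with the arc-cosine expansion replaced by abstract non-degeneracy hypotheses on the feature kernel.
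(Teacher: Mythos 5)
Your reduction to Lemma~\ref{lemma: gen-feature-based-model} and the two structural inputs---rotational symmetry of $\chi(\cdot)$ for $\mu_1$, and a $\Theta(1/d)$ gap in the arc-cosine kernel spectrum for $\mu$---match the paper's backbone, but the concentration engine is genuinely different. The paper first proves a general RFM result (Proposition~\ref{pro: rfm-alignment}) bounding $\mu_1$ and $\mu_2$ in terms of the \emph{population} kernel operator $\cK$ on $L^2(\rho)$: it controls $\lambda_1(\hK)$ via a variational characterization in the RKHS plus Rademacher-complexity uniform convergence (Lemma~\ref{lemma: variation-principle}, Lemma~\ref{lemma: rkhs-unitball}, contraction), bounds $\|\hK\|_F^2$ from below by a McDiarmid argument on the double sum $\frac{1}{n^2}\sum_{i,k}k^2(x_i,x_k)$ (Lemma~\ref{lemma: Gram-Frobenius-norm}), transfers from the empirical kernel $\hk$ to the true kernel $k$ via Lemma~\ref{lemma: uniform-approx-kernel}, and then simply reads off $\lambda_1(\cK)\sim 1$, $\sum_{i\geq 2}\lambda_i^2(\cK)\sim 1/d$ from the known spherical-harmonic decomposition of the arc-cosine kernel. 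You instead work with $\bar G=\EE_x[\Phi\Phi^T]$ in $\RR^{m\times m}$, perform an explicit mode decomposition $\frac{1}{2\pi}\mathbf 1\mathbf 1^T+\frac{1}{4d}WW^T+R$ (don't forget the $(\tfrac12-\tfrac1{2\pi})I_m$ diagonal correction from $\kappa(1)=\tfrac12$, which is $O(1)$ per eigenvalue and hence harmless only because $m/d\gg 1$), and pass $G\to\bar G$ by matrix Bernstein at scale $o(m/d)$, together with entrywise Bernstein plus a union bound for the empirical $\gamma$. Both routes succeed under the stated $n\gtrsim d^5\log(1/\delta)$; the paper's path is cleaner because the Rademacher/McDiarmid machinery treats all the function-class fluctuations uniformly and the spherical-harmonic spectrum is quoted rather than re-derived, whereas your approach is more elementary and self-contained but has more delicate bookkeeping (separating the $\mathbf 1$-mode from the degree-$1$ block, keeping the operator-norm error below the $m/d$ eigenvalue scale, and an a priori control of $\max_i\|\Phi(x_i)\|^2$). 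Note also that the literal claim $\mu_1(\theta)\geq 1$ comes out in both proofs only up to an $O(\epsilon_n)$ deficit, which you correctly flag; the paper's own proof gives $\mu_1\geq 1-C\epsilon_n$ rather than $\geq 1$ exactly.
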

  
Although feature-based models are linear, their analysis is still applicable to  understand nonlinear models, as long as the nonlinear model  \emph{locally}  behaves like the linearized one: $f_{\lin}(x;\theta):=f(x;\theta^*) + \langle \theta-\theta^*, \nabla f(x;\theta^*)\rangle$ with  $\nabla f(x;\theta^*)$ learned from data. Hence, Proposition \ref{pro: random-relu} can explain why the alignment property holds in a local region around global minima. Note that this is sufficient for characterizing the linear stability of SGD, which is a local property  in nature. 


\subsection{Empirical validations}\label{sec: experiment-alignment}
Figure \ref{fig: alignment-sgdpath} reports the values of $\alpha(\theta_t),\beta(\theta_t),\mu(\theta_t)$ during the SGD training of four types of models, including the linear networks and RFMs analyzed above, and fully-connected networks (FCN) and  convolutional neural networks (CNN). 
First, one can see that $\alpha(\theta_t)$'s are quite close to $1$ during the entire training, suggesting the strong concentration of SGD noise in sharp directions of local landscape. Second,  $\beta(\theta_t)$'s keep bounded below, implying the noise magnitudes are sufficiently large \wrt the training loss. As a result,    $\mu(\theta_t)$'s are significantly positive for all models examined. In particular, for the linear networks,  the alignment holds in the low-sample regime, where $n<d$.

\paragraph*{The size independence.}
Figure \ref{fig: alignment-modelsize} further examines how the extent of over-parameterization affects the  alignment strength. One can see clearly that for linear networks and RFMs, $\mu(\theta)$'s are independent of the model size, which confirms our theoretical findings proved above. In addition, we also observe that  for nonlinear networks, the alignment strength is also (nearly) independent of the model size. For instance, for CNN, the value of $\mu(\theta)$ only decreases from around $1.05$ to $1.0$ as  the model size is increased by more than two orders of magnitude.

\begin{figure}[!h]
\centering
\captionsetup[subfloat]{farskip=.3pt,captionskip=0.2pt}
\subfloat[]{\label{fig: alignment-sgdpath}
\includegraphics[width=0.24\textwidth]{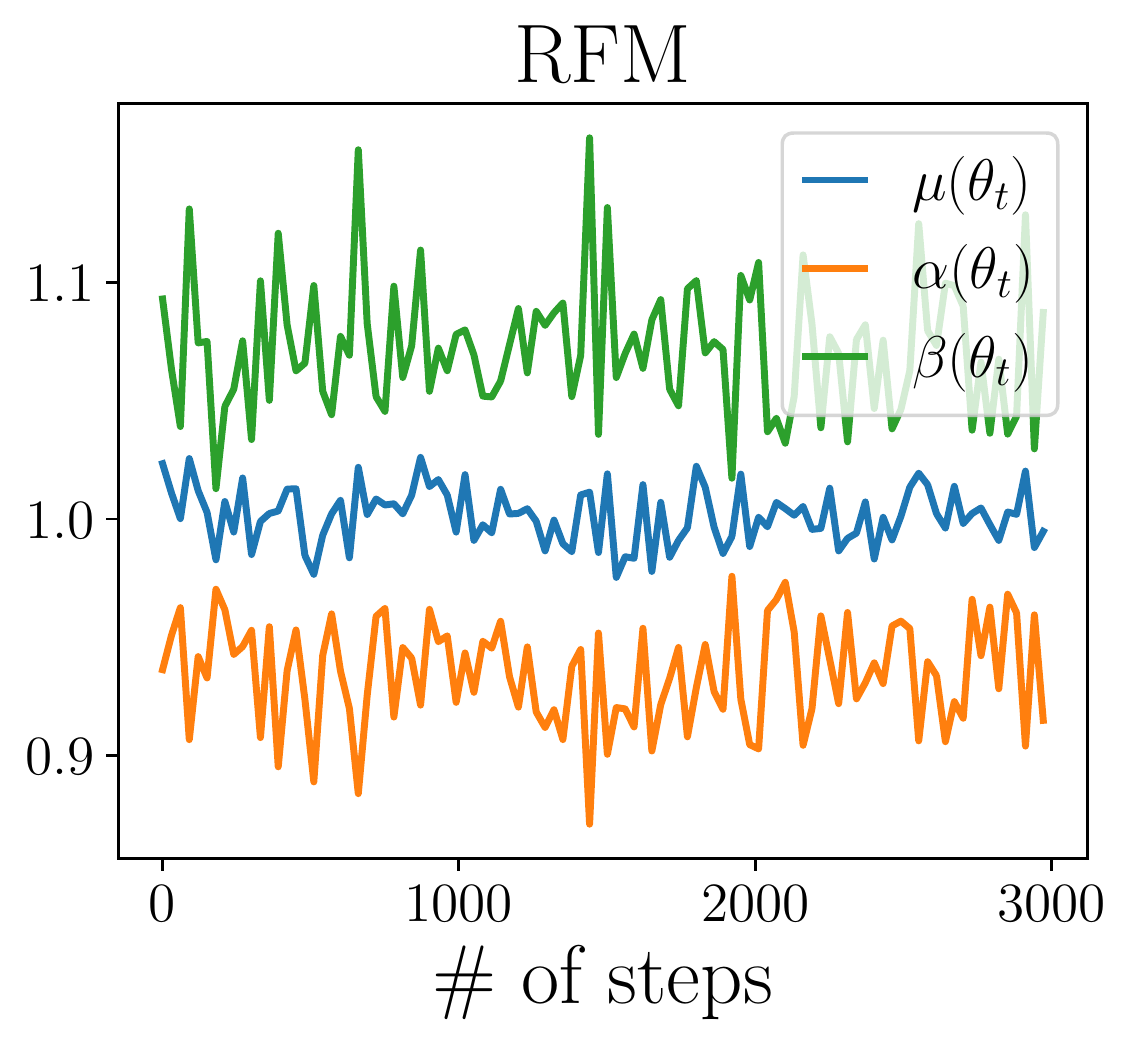}
\includegraphics[width=0.24\textwidth]{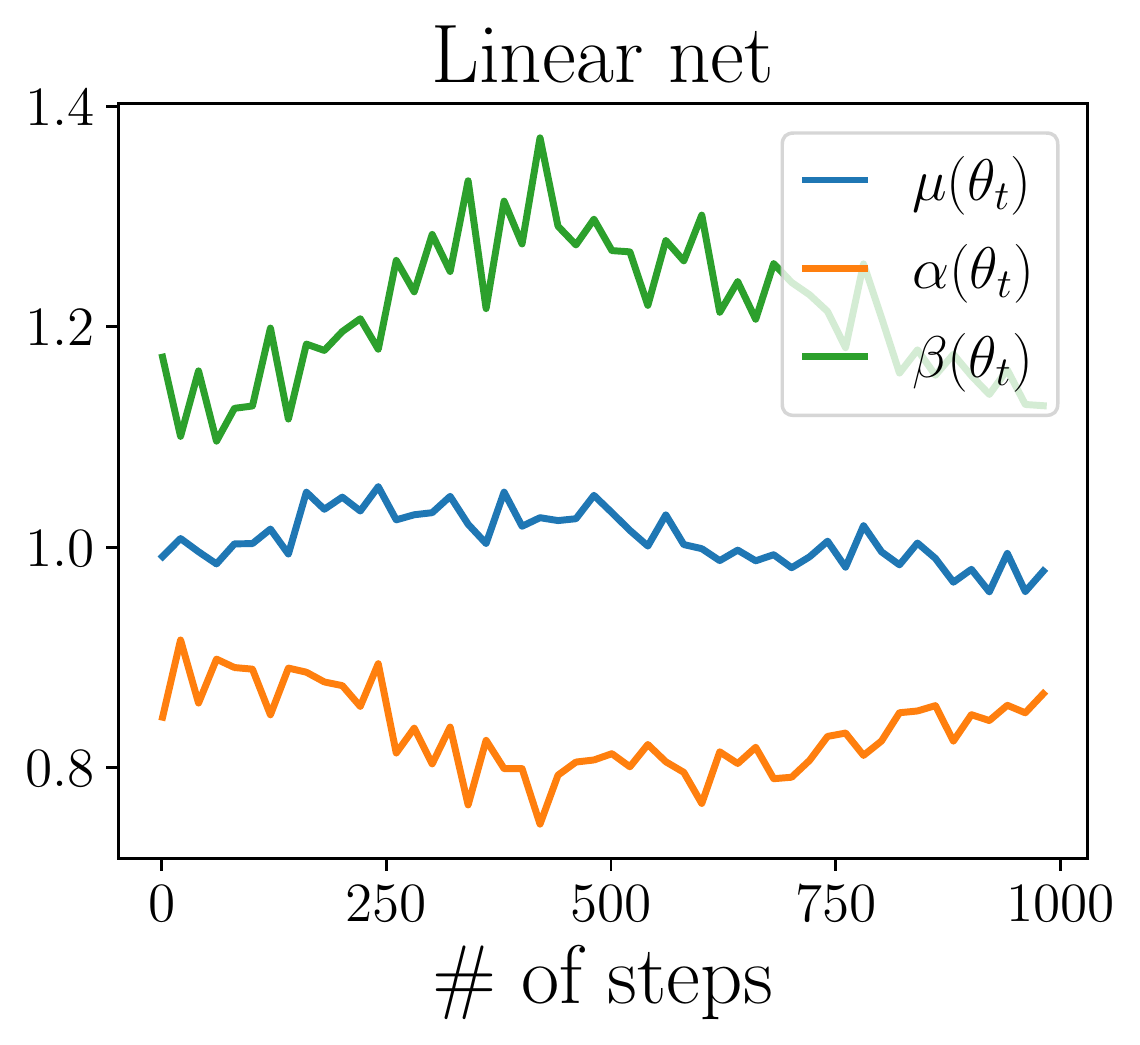}
\includegraphics[width=0.24\textwidth]{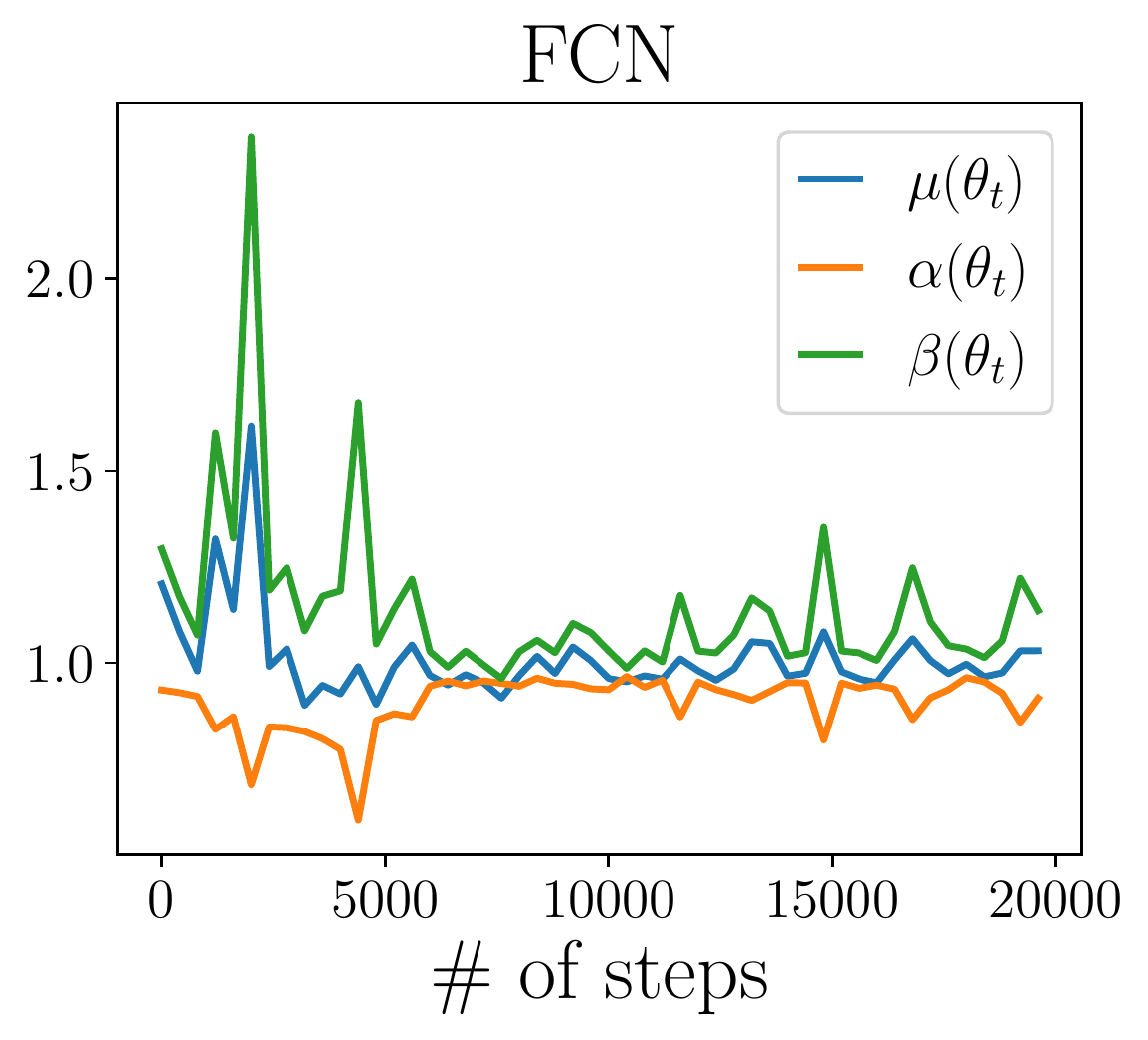}
\includegraphics[width=0.24\textwidth]{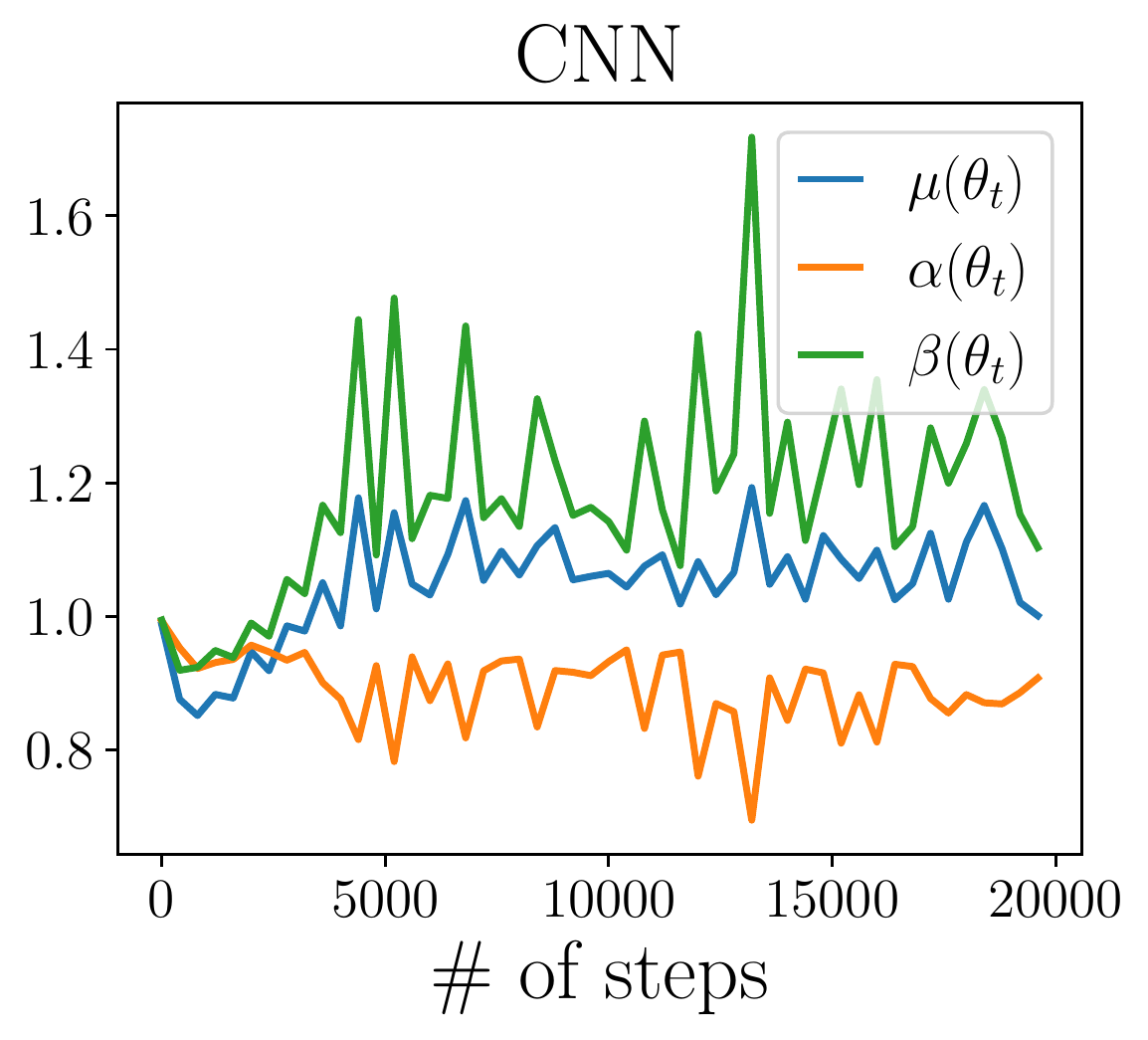}
}

\vspace*{-.5em}
\subfloat[]{\label{fig: alignment-modelsize}
\includegraphics[width=0.24\textwidth]{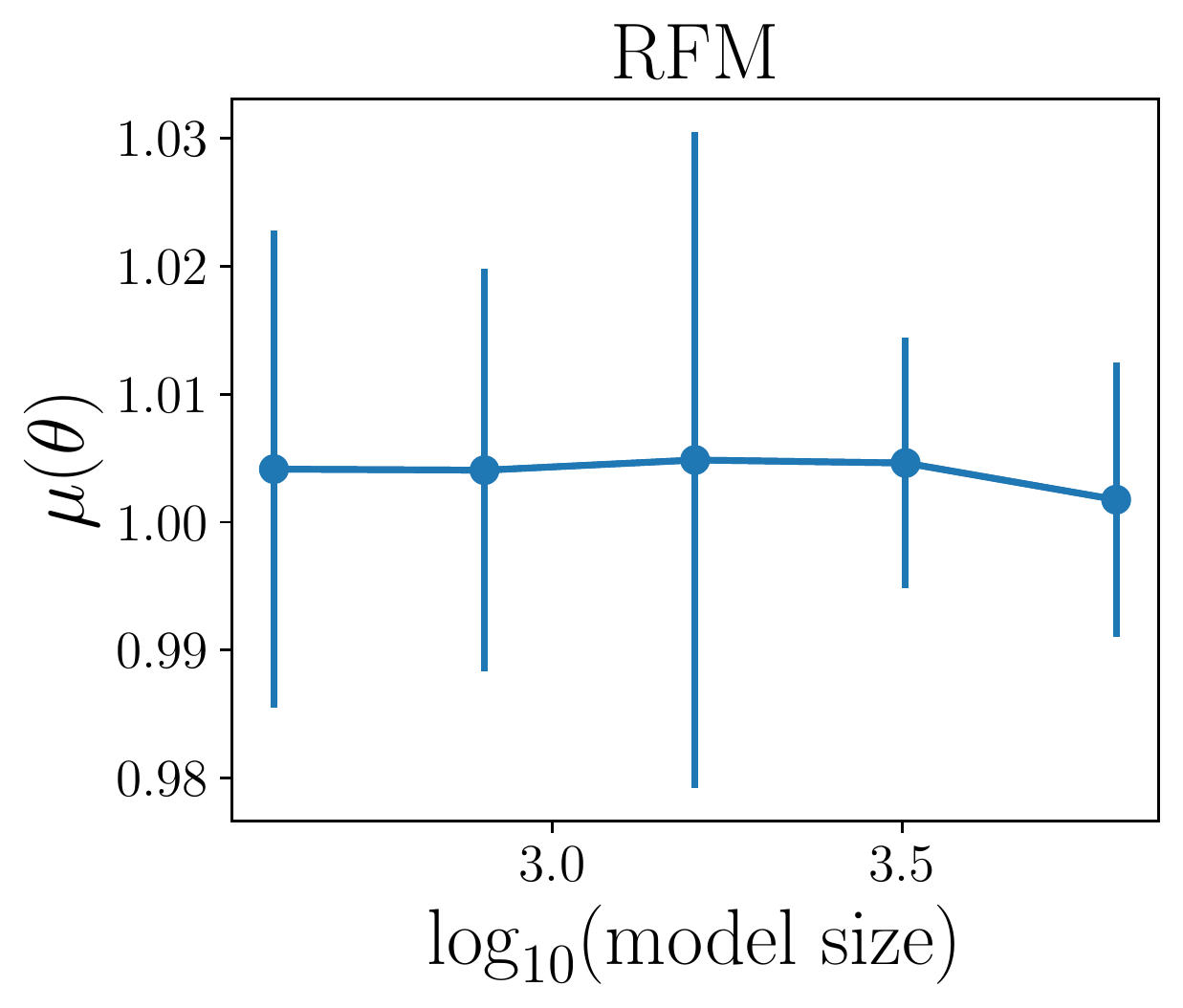}
\includegraphics[width=0.242\textwidth]{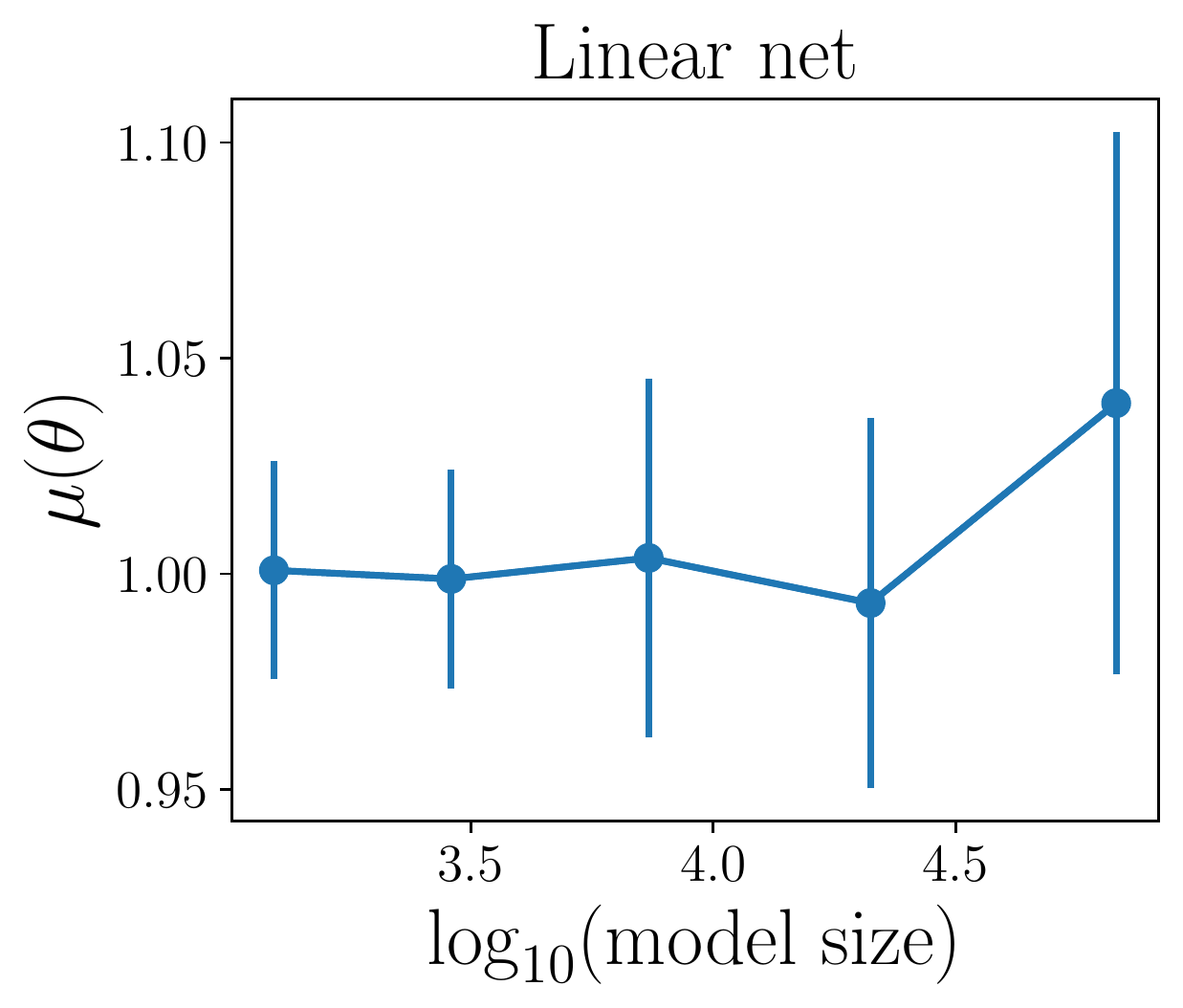}
  \includegraphics[width=0.23\textwidth]{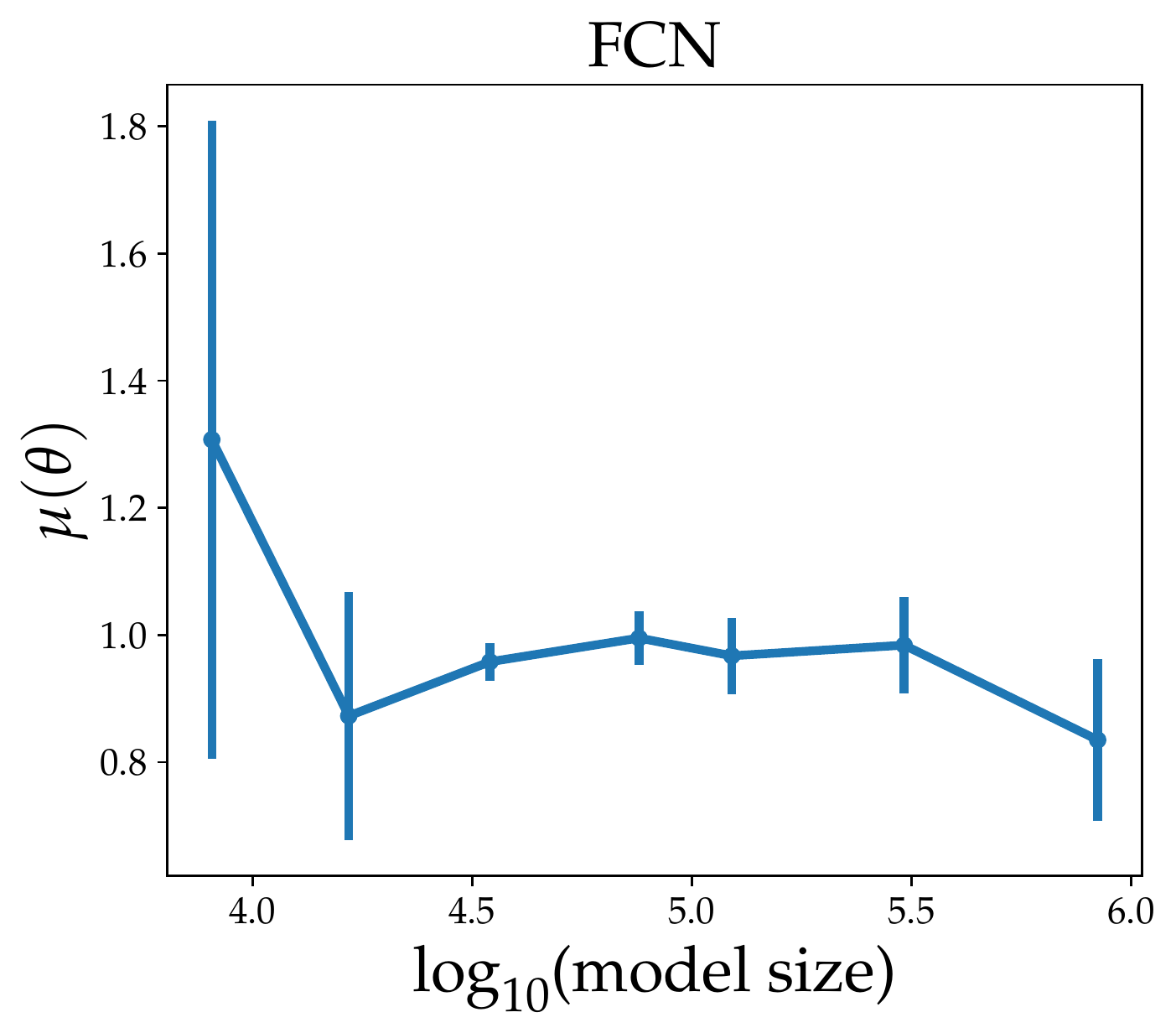}
\includegraphics[width=0.235\textwidth]{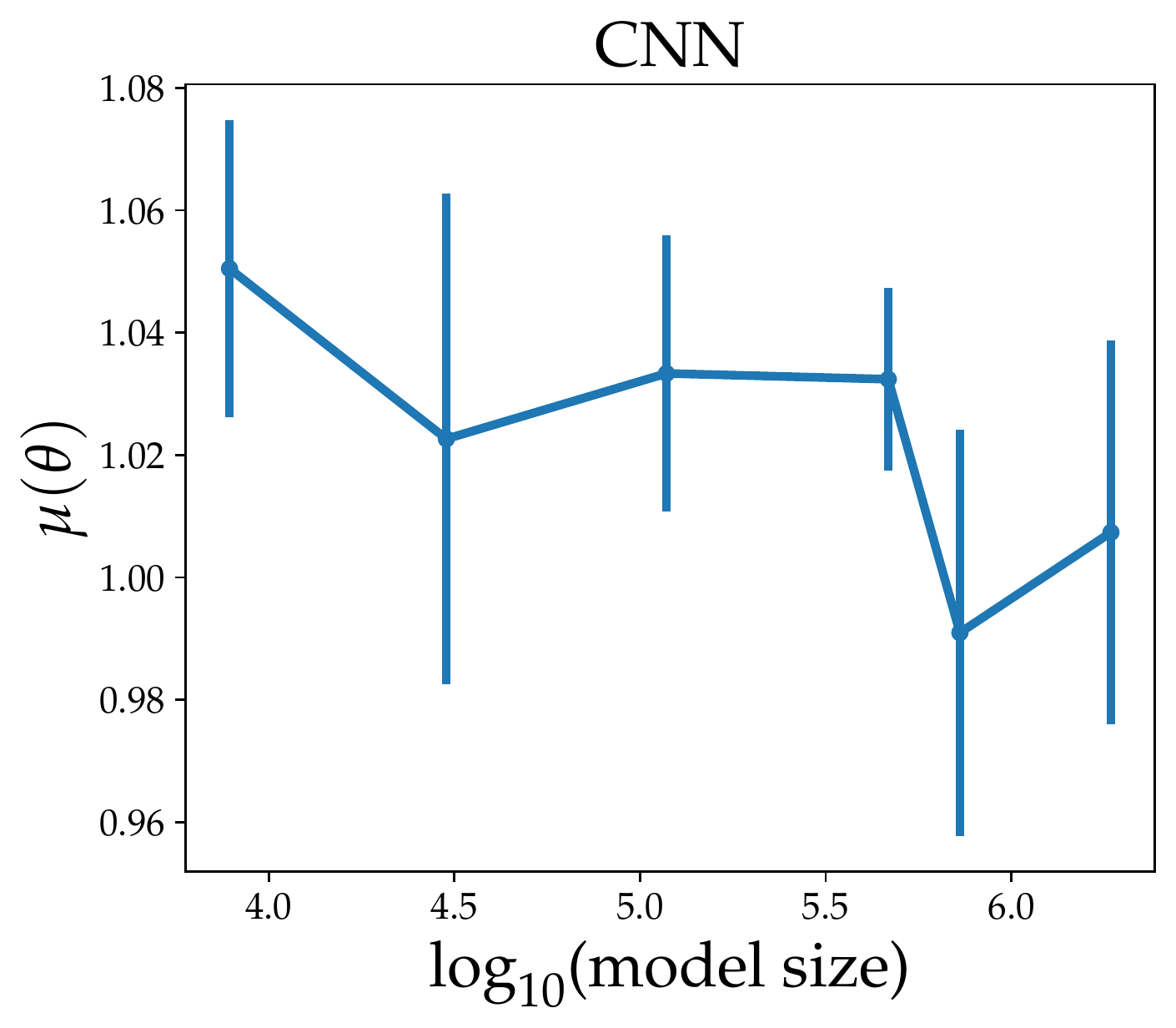}
}
\vspace*{-.5em}
\caption{\small \textbf{The alignment property of SGD noise}. Four types of models, including the RFM, linear network, fully-connected network (FCN), and convolutional neural network (CNN), are examined. We refer to Appendix \ref{sec: app-experiment-setup} for the experimental setup. Note that the linear network is trained in a low-sample regime ($n=100,d=50$).
(a) The alignment factors  during  training. (b) How the alignment strength of convergent solution changes with the over-parameterization. The error bar corresponds to the standard deviation over $5$ runs.
}
\label{fig: alignment-small-scale}
\vspace*{-1em}
\end{figure}

\paragraph*{The norm uniformity of model gradients.} Figure \ref{fig: feature-norms-small-scale} 
 shows the norm uniformity of model gradients, where we report the values of $\gamma(\theta)$ in the entire SGD trajectory. On can see that   $\gamma(\theta)$ is indeed bounded below, implying that the SGD noise  satisfies the $\mu_1$-alignment according to Lemma \ref{pro: alignment-linearized-nonlinear-model}. 

\begin{figure}[!h]
\centering

\includegraphics[width=0.24\textwidth]{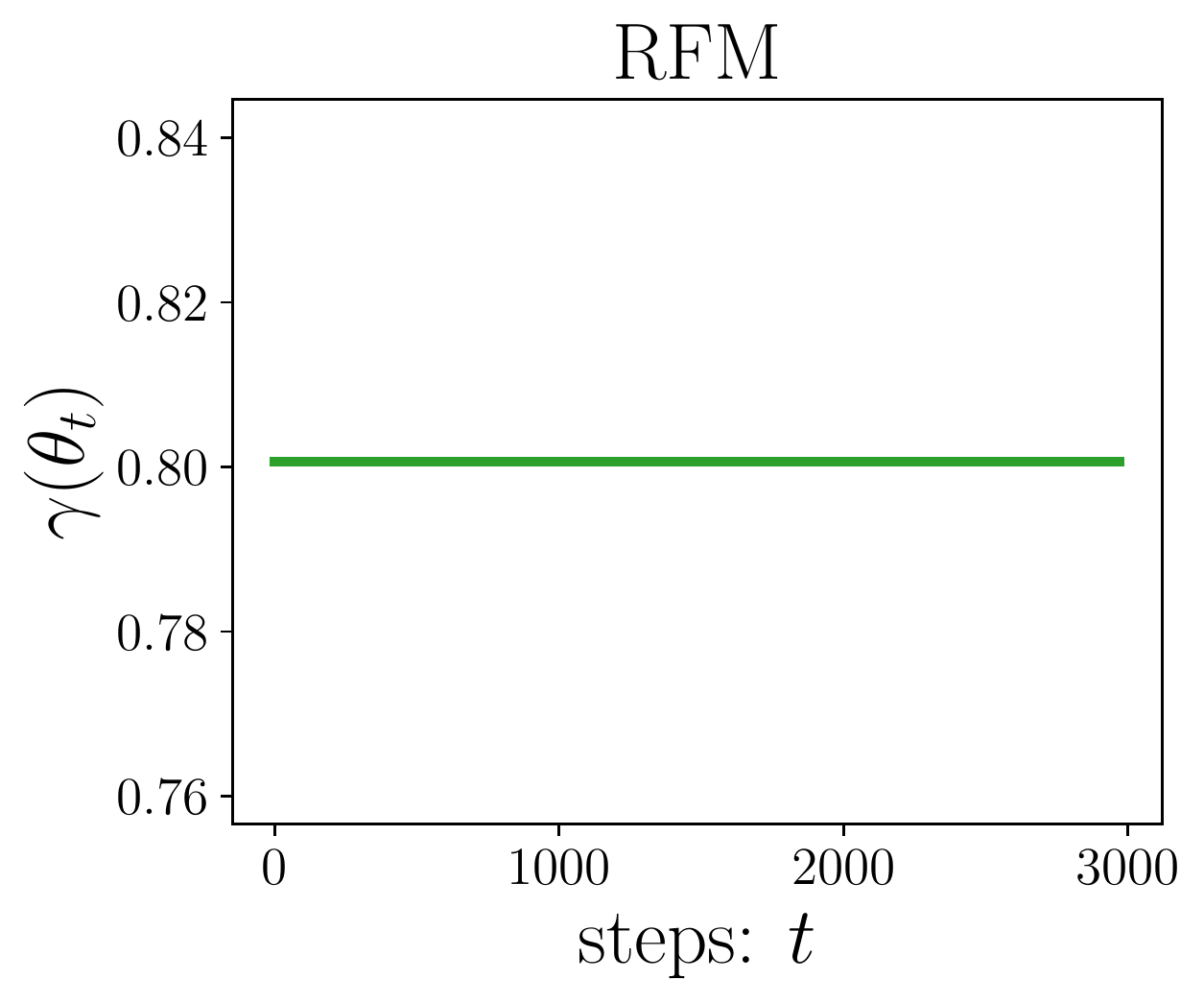}
\includegraphics[width=0.24\textwidth]{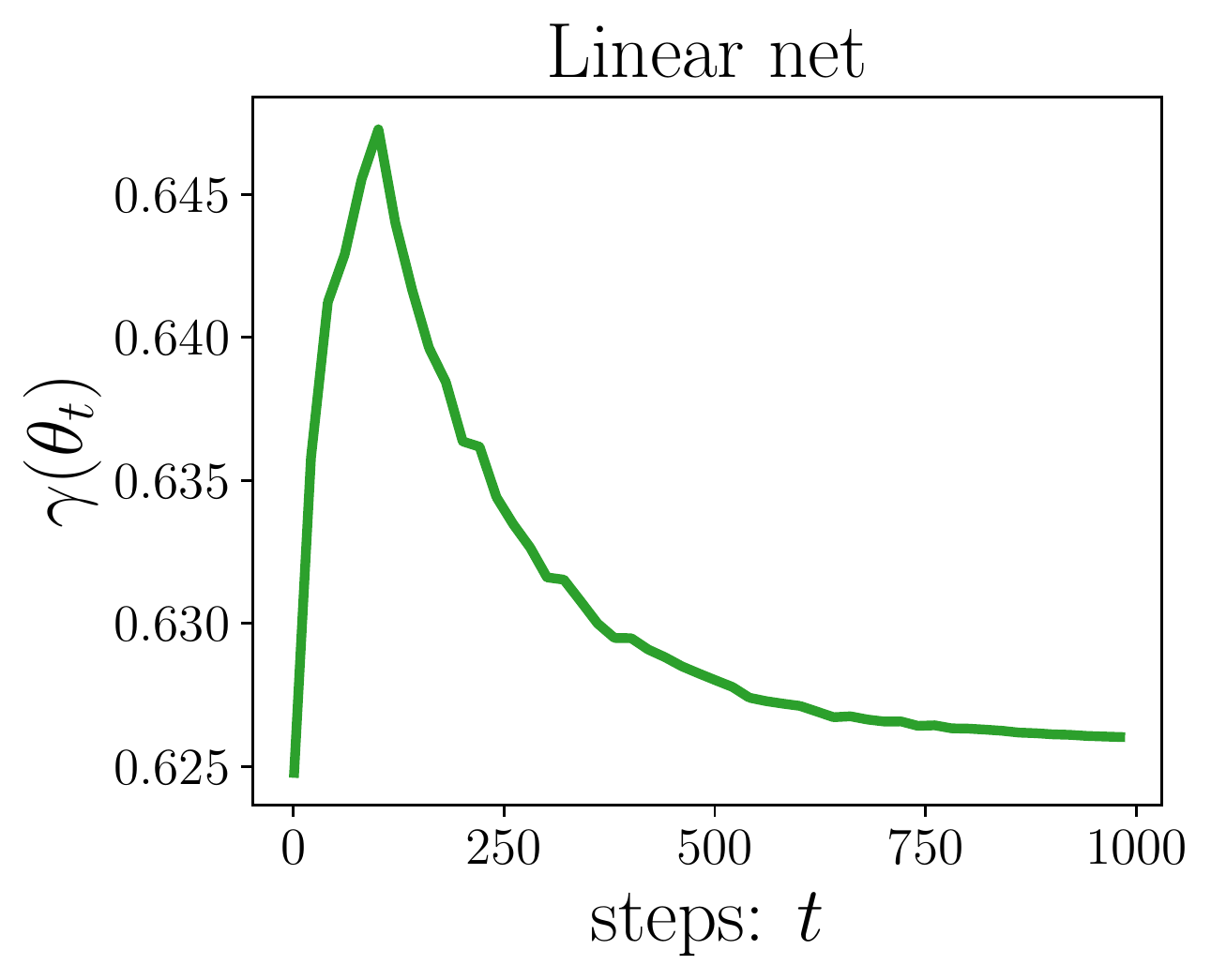}
\hspace*{-.4em}
\includegraphics[width=0.24\textwidth]{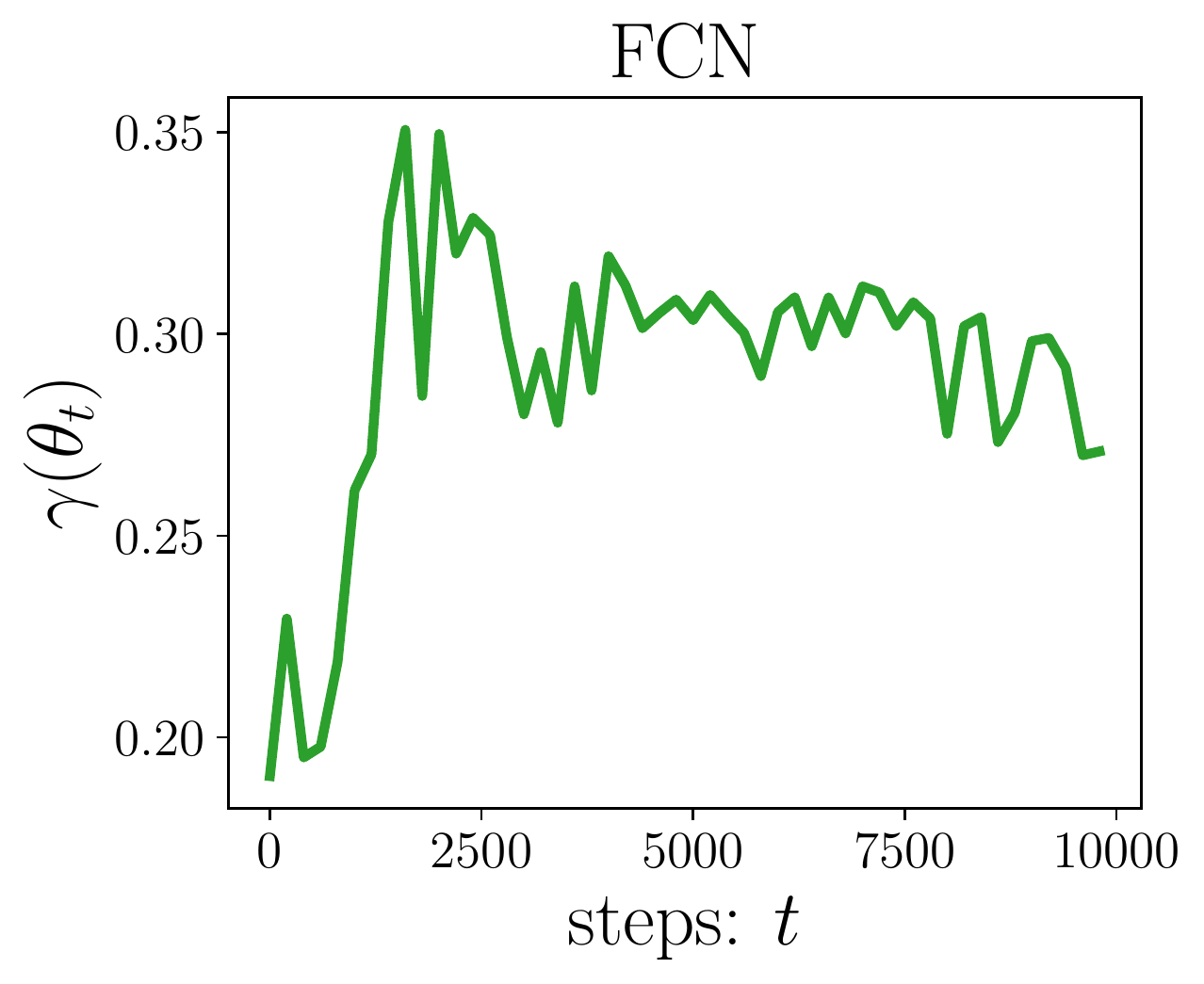}
\hspace*{-.4em}
\includegraphics[width=0.24\textwidth]{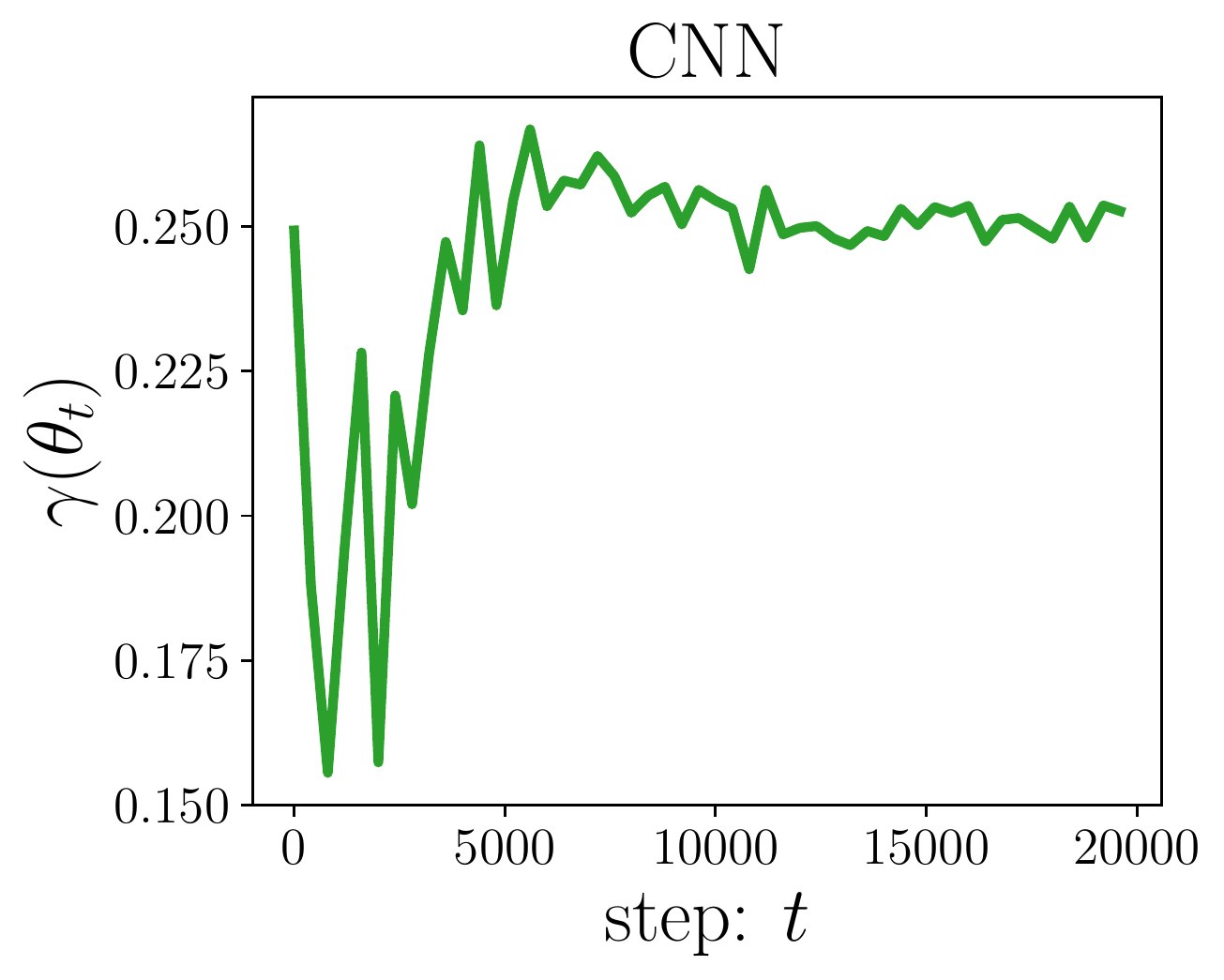}

\vspace*{-.5em}
\caption{\small \textbf{The norm uniformity of model gradients.} The values of $\gamma(\theta_t)$ in the entire SGD trajectory are reported. It is shown that the norms of model gradient norms are  uniform during the entire training process.
}
\label{fig: feature-norms-small-scale}
\vspace*{-1em}
\end{figure}

Note that in experiments, we only show that the alignment property  is satisfied in typical regions explored by SGD, including the random initialization and the convergent region. 
In contrast, 
for OLMs and RFMs, we in fact prove the alignment property  for the entire parameter space.  

\vspace*{-.5em}
\section{The linear stability analysis}
\vspace*{-.5em}
\label{sec: linear-stability}

Let $\theta^*$ be a global minimum of $L(\cdot)$. When $\theta_t$ is close to $\theta^*$, the local dynamical behavior of SGD  can be characterized by linearizing the dynamics around $\theta^*$:
\begin{equation}\label{eqn: linearized-SGD}
\begin{aligned}
\ttheta_{t+1} &= \ttheta_t - \frac{\eta}{B} \sum_{i\in I_t} \nabla^2 L_i (\theta^*) (\ttheta_t-\theta^*),
\end{aligned}
\end{equation}
where $\nabla^2 L_i (\theta^*) = \nabla f(x_i;\theta^*)\nabla f(x_i;\theta^*)^T$.  This corresponds to the local quadratic approximation of the loss $L(\cdot)$ or the local linearization of the model around $\theta^*$:
\begin{equation}\label{eqn: linearized-model}
  f_{\lin}(x;\theta) = f(x;\theta^*) + \langle \theta-\theta^*, \nabla f(x;\theta^*)\rangle.
\end{equation}
Specifically, \eqref{eqn: linearized-SGD} is exactly the SGD of training the linearized model \eqref{eqn: linearized-model}.

\begin{definition}[Linear stability]\label{def: linear-stability}
A global minimum $\theta^*$ is said to be \textit{linearly stable} if there exists a $C>0$ such that it holds for the linearized dynamics \eqref{eqn: linearized-SGD} that
$
\EE[\erisk(\ttheta_t)]\leq C \EE[\erisk(\ttheta_0)], \forall t\geq 0.
$
\end{definition}

\vspace*{-.5em}
It is well-known in dynamical system that the local behavior of the original nonlinear dynamics  can be characterized by the linearized one if the local quadratic approximation   is non-degenerate.
However, in over-parameterized case, the local quadratic approximation is degenerate in flat directions. Consequently, one may be concerned about the relevance of local quadratic approximation and the resulting linear stability analysis. Fortunately,  the stability in Definition \ref{def: linear-stability} is particularly measured with the change of loss value. Thus the instability mostly comes from noise perturbations in sharp directions and the alignment property guarantees that noise mostly   concentrates in sharp directions. Consequently, the flat directions contribute little to the instability. In sharp directions, the local quadratic approximation is always valid, thereby explaining the relevance of linear stability analysis. The rigorous formulation of this intuition is left to future work and  we instead resort to numerical experiments to demonstrate the validity in this paper.


For simplicity, we will use  $\theta_t$ to denote $\ttheta_t$; let $\theta^*=0$ and $g_i = \nabla f(x_i;\theta^*)$. For the linearized model $f_{\lin}(\cdot;\theta)$, we have 
$
  \erisk(\theta) = \frac{1}{2n}\sumin |\theta^Tg_i|^2=\half\theta^T H\theta,  G=H=\fn\sumin g_ig_i^T,
$
where we omit the dependence on $\theta^*$ for simplicity. Note that the Fisher and Hessian matrix  are constant but the noise covariance  
$
\Sigma(\theta) = \fn\sumin |g_i^T\theta|^2 g_ig_i^T- H\theta\theta^TH
$
is still state-dependent.

Before considering the specific linearized SGD \eqref{eqn: linearized-SGD}, we first have a general result.
\begin{lemma}\label{lemma: loss-update}
Consider a general SGD:
$
\theta_{t+1} = \theta_t - \eta (\nabla\erisk(\theta_t) + \xi_t)
$
for the linearized model \eqref{eqn: linearized-model}, where $(\xi_t)_{t\geq 1}$ are any noises satisfying $\EE[\xi_t]=0, \EE[\xi_t\xi_t^T]=S(\theta_t)$. Then we have 
\begin{equation}\label{eqn: loss-update}
\EE[\erisk(\theta_{t+1})]=\EE[r(\theta_t)\erisk(\theta_t)+\eta^2 v(\theta_t)],
\end{equation}
where 
$
  \nu(\theta)=\tr(H S(\theta))/2
$
and $r(\theta)\geq 0$. Moreover, if $\eta\leq 2/\lambda_{1}(H)$, then $r(\theta)\leq 1$.
\end{lemma}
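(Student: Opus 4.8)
The plan is to directly expand $\erisk(\theta_{t+1})$ using the quadratic form $\erisk(\theta)=\tfrac12\theta^T H\theta$ and the update rule $\theta_{t+1}=\theta_t-\eta(\nabla\erisk(\theta_t)+\xi_t)=\theta_t-\eta(H\theta_t+\xi_t)$, then take conditional expectation over $\xi_t$. Writing $\theta_{t+1}=(I-\eta H)\theta_t-\eta\xi_t$, I would compute
\[
\erisk(\theta_{t+1})=\tfrac12\big((I-\eta H)\theta_t-\eta\xi_t\big)^T H\big((I-\eta H)\theta_t-\eta\xi_t\big).
\]
Expanding the product gives three groups of terms: a pure-$\theta_t$ term $\tfrac12\theta_t^T(I-\eta H)H(I-\eta H)\theta_t$, cross terms linear in $\xi_t$, and a pure-noise term $\tfrac{\eta^2}{2}\xi_t^T H\xi_t$. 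Taking $\EE[\,\cdot\mid\theta_t]$ and using $\EE[\xi_t\mid\theta_t]=0$ kills the cross terms, and $\EE[\xi_t^T H\xi_t\mid\theta_t]=\tr(H\,\EE[\xi_t\xi_t^T\mid\theta_t])=\tr(HS(\theta_t))$ handles the last term, yielding $\nu(\theta_t)=\tr(HS(\theta_t))/2$. Then applying the tower property gives \eqref{eqn: loss-update}.

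The only real content is identifying $r(\theta_t)$ from the first term and proving $r(\theta)\ge 0$ always and $r(\theta)\le 1$ under $\eta\le 2/\lambda_1(H)$. The natural definition is
\[
r(\theta)=\frac{\theta^T(I-\eta H)H(I-\eta H)\theta}{\theta^T H\theta}
\]
when $\theta^T H\theta\neq 0$ (and $r(\theta)$ can be set to anything, say $0$, on the kernel of $H$, since both sides of \eqref{eqn: loss-update} vanish there). To analyze this, I would diagonalize: since $H$ is symmetric PSD, write $H=\sum_j \lambda_j u_ju_j^T$ with $\lambda_j\ge 0$, and set $c_j=(u_j^T\theta)$. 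Then $\theta^T H\theta=\sum_j\lambda_j c_j^2$ and the numerator equals $\sum_j\lambda_j(1-\eta\lambda_j)^2 c_j^2$. Since every coefficient $\lambda_j(1-\eta\lambda_j)^2\ge 0$, we immediately get $r(\theta)\ge 0$. For the upper bound, note $(1-\eta\lambda_j)^2\le 1$ exactly when $0\le \eta\lambda_j\le 2$, i.e. when $\eta\le 2/\lambda_j$; since $\lambda_1(H)=\lambda_{\max}(H)$ this holds for all $j$ once $\eta\le 2/\lambda_1(H)$, so each term in the numerator is dominated by the corresponding term $\lambda_j c_j^2$ in the denominator, giving $r(\theta)\le 1$.

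I do not anticipate a genuine obstacle here; the main point requiring a little care is the degenerate/flat-direction case, where $H\theta$ and $\theta^T H\theta$ may vanish. One should verify that \eqref{eqn: loss-update} still holds in that case — which it does, because if $\theta_t\in\ker H$ then $\erisk(\theta_t)=0$, the deterministic part of $\theta_{t+1}$ stays in $\ker H$, and the noise contributes exactly $\eta^2\nu(\theta_t)$ — so the identity is an identity for any choice of $r(\theta_t)$ on $\ker H$, and we simply declare $r\ge 0$ there by fiat. A secondary bookkeeping point is that $S(\theta_t)$ is the \emph{conditional} covariance $\EE[\xi_t\xi_t^T\mid\theta_t]$ (equivalently $\Sigma(\theta_t)/B$ in the mini-batch instantiation), so all expectations above should be taken conditionally on $\theta_t$ first and then the tower property invoked; this is routine but worth stating explicitly.
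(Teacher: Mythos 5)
Your proof is correct and is essentially the same argument as the paper's: expand the quadratic form under the update, take expectation to drop the cross terms and turn the noise term into $\tfrac{\eta^2}{2}\tr(HS(\theta_t))$, then bound $r(\theta)$ by examining the eigenvalues of $H$. Your formula $r(\theta)=\theta^T(I-\eta H)H(I-\eta H)\theta/(\theta^TH\theta)$ is identical to the paper's $1-2\eta\,\tfrac{\theta^TH^2\theta}{\theta^TH\theta}+\eta^2\tfrac{\theta^TH^3\theta}{\theta^TH\theta}$, and your eigendecomposition of $\theta$ with coefficients $\lambda_j(1-\eta\lambda_j)^2 c_j^2$ is a reparametrization of the paper's substitution $u=H^{1/2}\theta/\|H^{1/2}\theta\|$, which yields $r(\theta)=\|u-\eta Hu\|^2=\sum_j(\eta\lambda_j-1)^2 s_j^2$ (this is Lemma~\ref{lemma: GD-factor} in the paper). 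Your two small additions --- explicitly handling $\theta\in\ker H$ and stating the conditional-expectation/tower-property bookkeeping --- are correct and slightly more careful than the paper's writeup, but do not change the route.
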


\begin{proof}
Using the fact  $\EE[\xi_t]=0$ and $\EE[\xi_t\xi_t^T]=S(\theta_t)$, we have
\begin{equation}
\begin{aligned}\label{eqn: loss-update-1}
\EE[\erisk(\theta_{t+1})] &= \EE[\half (\theta_t-\eta \nabla \erisk(\theta_t) +\eta\xi_t)^T H (\theta_t -\eta \nabla \erisk(\theta_t)+ \eta \xi_t)]\\
&= \EE[\erisk(\theta_t)-\eta \nabla \erisk(\theta)^TH \theta_t +\frac{\eta^2}{2} \nabla \erisk(\theta_t)^TH\nabla \erisk(\theta_t)]+\frac{\eta^2}{2}\EE[\tr(H S(\theta_t))]\\
&=\EE[r(\theta_t)L(\theta_t)+\eta^2\nu(\theta_t)],
\end{aligned}
\end{equation}
where $r(\theta)=1-2\eta \frac{\theta^T H^2\theta}{\theta^T H\theta}+\eta^2 \frac{\theta^T H^3 \theta}{\theta^TH\theta}$ since $\nabla \erisk(\theta)=H\theta$. By Lemma \ref{lemma: GD-factor}, $r(\theta)\geq 0$ and if $\eta\leq 2/\lambda_1(H)$, then $r(\theta)\leq 1$.
\end{proof}

\vspace*{-.7em}
The two terms $r(\theta_t)L(\theta_t)$ and $\eta^2 v(\theta_t)$ denote the contributions from the full-batch gradient $\nabla \erisk(\theta_t)$ and the noise $\xi_t$, respectively. The stability is affected by both terms simultaneously.  It is well-known that if $\theta^*$ is linearly stable for GD, then
$
  \lambda_1(H(\theta^*))\leq 2/\eta
$ (see, e.g., \cite{wu2018sgd,mulayoff2021implicit}).
This also holds for SGD but SGD imposes a stricter  condition because of the extra $\eta^2 \nu(\theta_t)$ term. Specifically, 
\begin{equation}\label{eqn: loss-update-3}
\EE[\erisk(\theta_{t+1})] =\EE[r(\theta_t)\erisk(\theta_{t})+\eta^2\nu(\theta_t)]\geq \eta^2\EE[\nu(\theta_t)]=0.5\eta^2 \tr(HS(\theta)).
\end{equation}
Therefore, the more $S(\theta)$ aligns with  $H$, the more unstable  that minimum is. Specifically, let $S(\theta)=2\sigma^2 L(\theta)C(\theta)$. Then, 
$
	\EE[\erisk(\theta_{t+1})]\geq \eta^2 \sigma^2 \EE[L(\theta_t)\tr(HC(\theta))].
$
Thus to ensure a stable convergence, we should roughly have 
$
	\tr(HC(\theta))\leq \frac{1}{\sigma^2\eta^2}.
$
We next show that this can lead to a flatness control by utilizing the alignment between $C(\theta)$ and $H$.


\vspace*{-.2em}
\subsection{The linear stability imposes size-independent flatness constraints}
\vspace*{-.5em}

For the mini-batch SGD, the following theorem characterizes how the batch size and learning rate affect the flatness---as measured by the Frobenius norm of Hessian---of minima accessible to SGD.

\begin{theorem}\label{thm: Fro-norm-Hessian-bound}
Let $\theta^*$ be a global minimum that is linearly stable.  Denote by $\mu(\theta)$ the alignment factors for the linearized SGD \eqref{eqn: linearized-SGD} and model \eqref{eqn: linearized-model}. If $\mu(\theta)\geq \bmu_0$, then 
$
\|H(\theta^*)\|_F\leq \frac{1}{\eta}\sqrt{\frac{B}{\mu_0}}.
$
\end{theorem}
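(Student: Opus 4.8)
The plan is to squeeze a \emph{self-reinforcing lower bound} on the expected loss out of Lemma~\ref{lemma: loss-update} and then contradict the definition of linear stability. First I would apply Lemma~\ref{lemma: loss-update} to the linearized SGD \eqref{eqn: linearized-SGD}, which is exactly a general SGD for the linearized model \eqref{eqn: linearized-model} with noise covariance $S(\theta)=\Sigma(\theta)/B$, where $\Sigma(\theta)=\fn\sumin|g_i^T\theta|^2 g_ig_i^T-H\theta\theta^TH$. This yields $\EE[\erisk(\theta_{t+1})]=\EE[r(\theta_t)\erisk(\theta_t)]+\eta^2\EE[\nu(\theta_t)]$ with $\nu(\theta)=\tfrac{1}{2B}\tr(H\Sigma(\theta))$ and, crucially, $r(\theta)\geq 0$ unconditionally. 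Dropping the nonnegative term $r(\theta_t)\erisk(\theta_t)$ gives the one-step inequality $\EE[\erisk(\theta_{t+1})]\geq \tfrac{\eta^2}{2B}\,\EE[\tr(H\Sigma(\theta_t))]$, which is the SGD analogue of \eqref{eqn: loss-update-3}.

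Next I would plug in the alignment hypothesis. For the linearized model $G=H$, so $\mu(\theta)\geq\mu_0$ is literally the statement $\tr(\Sigma(\theta)H)=2\mu(\theta)\erisk(\theta)\|H\|_F^2\geq 2\mu_0\erisk(\theta)\|H\|_F^2$ pointwise. Taking expectations and substituting into the previous display produces the geometric recursion $\EE[\erisk(\theta_{t+1})]\geq \tfrac{\eta^2\mu_0\|H\|_F^2}{B}\,\EE[\erisk(\theta_t)]$, hence by induction
\[
\EE[\erisk(\theta_t)]\ \geq\ \Big(\tfrac{\eta^2\mu_0\|H\|_F^2}{B}\Big)^{t}\,\EE[\erisk(\theta_0)]\qquad\text{for all }t\geq 0 .
\]

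To conclude, I would argue by contradiction. Assume $\|H(\theta^*)\|_F>\tfrac1\eta\sqrt{B/\mu_0}$; then the ratio $\rho:=\tfrac{\eta^2\mu_0\|H\|_F^2}{B}$ satisfies $\rho>1$. Since $H\neq 0$ in this case, we may pick an initial perturbation $\theta_0$ with $\erisk(\theta_0)=\tfrac12\theta_0^TH\theta_0>0$, and then $\EE[\erisk(\theta_t)]/\EE[\erisk(\theta_0)]\geq\rho^t\to\infty$, so no finite constant $C$ can satisfy $\EE[\erisk(\theta_t)]\leq C\,\EE[\erisk(\theta_0)]$ — contradicting Definition~\ref{def: linear-stability}. (If $H=0$ the claimed bound is trivial.) Therefore $\|H(\theta^*)\|_F\leq\tfrac1\eta\sqrt{B/\mu_0}$, i.e. the ``escape happens'' picture: sharpness beyond this threshold forces exponential growth of the loss, which is precisely the content separately recorded in Theorem~\ref{thm: escape}.

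I expect the difficulties to be bookkeeping rather than conceptual. The two points to be careful about are: (i) the bound $\mu(\theta)\geq\mu_0$ must hold at \emph{every} iterate visited by the dynamics — this is fine because, for the linearized model, $\Sigma(\theta)$, $\erisk(\theta)$ are both degree-two homogeneous in $\theta$, so $\mu(\theta)$ depends only on the direction of $\theta$ and is assumed bounded below on the relevant region; and (ii) the contradiction with Definition~\ref{def: linear-stability} must be drawn with a strictly exponential rate, which is exactly why the sharp form of the hypothesis is the strict inequality $\rho>1$ and why the theorem's conclusion is an ``$\leq$''. Note that, unlike the stability direction, this argument never uses the upper bound $r(\theta)\leq 1$, so it requires no a priori constraint on $\eta$ relative to $\lambda_1(H)$.
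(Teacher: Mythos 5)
Your proof is correct and follows essentially the same route as the paper: drop the nonnegative $r(\theta_t)\erisk(\theta_t)$ term from Lemma~\ref{lemma: loss-update}, use the identity $\tr(H\Sigma(\theta))=2\mu(\theta)\erisk(\theta)\|H\|_F^2$ (since $G=H$ for the linearized model) together with $\mu(\theta)\geq\bmu_0$ to obtain the geometric recursion $\EE[\erisk(\theta_{t+1})]\geq\frac{\bmu_0\eta^2\|H\|_F^2}{B}\EE[\erisk(\theta_t)]$, and then invoke Definition~\ref{def: linear-stability}. The only difference is cosmetic: the paper states ``to ensure stability we must have $\bmu_0\eta^2\|H\|_F^2/B\leq 1$'' while you spell out the induction and the contradiction with a strict $\rho>1$, and you correctly flag the edge cases ($H=0$; needing $\erisk(\theta_0)>0$) and the fact that only $r(\theta)\geq 0$, not $r(\theta)\leq 1$, is used.
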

\vspace*{-1em}
\begin{proof}
By \eqref{eqn: loss-update-3} and the definition of $\mu(\theta)$, we have 
\begin{align}\label{eqn: loss-update-2}
\EE[\erisk(\theta_{t+1})]\geq \frac{\eta^2}{2B}\EE[\tr(H\Sigma(\theta_t))]\geq\frac{\eta^2\|H\|_F^2}{B}\EE[\mu(\theta_t)\erisk(\theta_t)]\geq \frac{\bmu_0 \eta^2\|H\|_F^2}{B}\EE[\erisk(\theta_t)].
\end{align}
To ensure the stability, we must have $\bmu_0\eta^2\|H\|_F^2/B\leq 1$, leading to $\|H\|_F\leq \sqrt{B/\bmu_0}/\eta$.
\end{proof}

We have shown in Section \ref{sec: alignment} that $\mu_0$ is (nearly) size-independent, and thus the obtained upper bound of flatness is also (nearly) size-independent.
As a comparison, for GD, the linear stability can only ensure $\lambda_{\max}(H(\theta^*))\leq 2/\eta$. This gives a bound of the Hessian's Frobenius norm: $\|H(\theta^*)\|_F\leq 2\sqrt{p}/\eta$, depending on the model size explicitly. The comparison of two bounds partially explains why SGD tends to select flatter minima than GD.

We show below that the $\mu$-alignment can be further relaxed to the $\mu_1$-alignment. The proof is  similar to the one of Theorem \ref{thm: Fro-norm-Hessian-bound} and deferred to   Appendix \ref{sec: proof-linear-stability}.
\begin{proposition}\label{pro: flatness-control-mu1-alignment}
Under the setting of Theorem \ref{thm: Fro-norm-Hessian-bound}, if the noise of linearized SGD satisfies
$\mu_1(\theta)\geq \bmu_1$, then 
$
  \|H(\theta^*)\|_F\leq \min\left(\frac{B}{\sqrt{(B-1)\bmu_1}},\frac{2B}{\bmu_1}\right)\frac{1}{\eta}.
$
\end{proposition}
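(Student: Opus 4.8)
The plan is to run the one-step loss recursion of Lemma~\ref{lemma: loss-update} for the linearized SGD, but keeping the two pieces of the noise covariance $\Sigma=\Sigma_1-\Sigma_2$ separate. For the linearized model \eqref{eqn: linearized-model} (with $\theta^*=0$) one has $\nabla\erisk(\theta)=H\theta$ and $\Sigma_2(\theta)=\nabla\erisk(\theta)\nabla\erisk(\theta)^T=H\theta\theta^TH$, so that $\tr(H\Sigma_2(\theta))=\theta^TH^3\theta$. Substituting $S(\theta)=\Sigma(\theta)/B$ into \eqref{eqn: loss-update} and expanding $r(\theta)\erisk(\theta)=\erisk(\theta)-\eta\,\theta^TH^2\theta+\tfrac{\eta^2}{2}\theta^TH^3\theta$, the $\theta^TH^3\theta$ contributions from the gradient term ($+\tfrac{\eta^2}{2}$) and the noise term ($-\tfrac{\eta^2}{2B}$) partially cancel, leaving
\begin{align*}
\EE[\erisk(\theta_{t+1})]=\EE\Big[\erisk(\theta_t)-\eta\,\theta_t^TH^2\theta_t+\frac{\eta^2(B-1)}{2B}\,\theta_t^TH^3\theta_t+\frac{\eta^2}{2B}\tr(H\Sigma_1(\theta_t))\Big].
\end{align*}
The $\mu_1$-alignment then lower bounds the last term, $\tr(H\Sigma_1(\theta))\geq 2\bmu_1\erisk(\theta)\|H\|_F^2$, so everything reduces to bounding the combination $-\eta\,\theta^TH^2\theta+\tfrac{\eta^2(B-1)}{2B}\theta^TH^3\theta$ from below by a multiple of $\erisk(\theta)=\tfrac12\theta^TH\theta$.

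Two ways of doing this last step give the two terms in the stated minimum. For the bound $2B/(\bmu_1\eta)$, simply discard the nonnegative term $\tfrac{\eta^2(B-1)}{2B}\theta^TH^3\theta$ and use $\theta^TH^2\theta\leq\lambda_1(H)\,\theta^TH\theta\leq 2\|H\|_F\,\erisk(\theta)$, which yields $\EE[\erisk(\theta_{t+1})]\geq\big(1-2\eta\|H\|_F+\eta^2\bmu_1\|H\|_F^2/B\big)\EE[\erisk(\theta_t)]$; since a uniform multiplicative factor exceeding $1$ would make $\EE[\erisk(\theta_t)]$ grow geometrically, linear stability forces $\eta^2\bmu_1\|H\|_F^2/B\leq 2\eta\|H\|_F$, i.e. $\|H\|_F\leq 2B/(\bmu_1\eta)$. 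For the bound $B/(\sqrt{(B-1)\bmu_1}\,\eta)$ (sharper once $B\geq 2$), instead keep the $\theta^TH^3\theta$ term and apply Cauchy--Schwarz $(\theta^TH^2\theta)^2\leq(\theta^TH\theta)(\theta^TH^3\theta)=2\erisk(\theta)\,\theta^TH^3\theta$; minimizing the resulting scalar quadratic in $a=\theta^TH^2\theta$ gives $-\eta\,\theta^TH^2\theta+\tfrac{\eta^2(B-1)}{2B}\theta^TH^3\theta\geq-\tfrac{B}{B-1}\erisk(\theta)$, hence $\EE[\erisk(\theta_{t+1})]\geq\big(\eta^2\bmu_1\|H\|_F^2/B-\tfrac1{B-1}\big)\EE[\erisk(\theta_t)]$, and stability forces $\eta^2\bmu_1\|H\|_F^2\leq B^2/(B-1)$. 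Taking the smaller of the two bounds finishes the proof; I would also remark that these mirror Theorem~\ref{thm: Fro-norm-Hessian-bound}, where the clean identity $\mu=\mu_1-\mu_2$ for the full covariance absorbs the $\Sigma_2$ term and no such case split is needed.

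The main obstacle is precisely the sign-indefinite role of $\Sigma_2(\theta)$: it reduces the noise variance, which \emph{helps} stability and hence only weakens the flatness bound, so one cannot simply drop it; yet the very same quantity $\theta^TH^3\theta$ also reappears with the opposite sign from the curvature of the gradient map. Discarding either copy loses either the correct $B$-scaling or a constant factor, so one must recombine the residual $+\tfrac{\eta^2(B-1)}{2B}\theta^TH^3\theta$ with the gradient's linear term $-\eta\,\theta^TH^2\theta$ and exploit the operator inequalities relating $\theta^TH\theta$, $\theta^TH^2\theta$, $\theta^TH^3\theta$; choosing the right pairing (Cauchy--Schwarz versus the cruder $\lambda_1(H)\leq\|H\|_F$) is what produces the two regimes, the second one being the estimate that remains meaningful when $B=1$. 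A minor point to verify is that the coefficient $\tfrac{B-1}{2B}$ relies on the with-replacement convention $\EE[\xi_t\xi_t^T]=\Sigma(\theta_t)/B$ fixed in the setup.
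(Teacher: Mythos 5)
Your proof is correct and follows essentially the same route as the paper. You perform the identical decomposition $\Sigma=\Sigma_1-\Sigma_2$, cancel the $\theta^TH^3\theta$ contribution of $\Sigma_2$ against the curvature part of $r(\theta)$ to produce the coefficient $\tfrac{\eta^2(B-1)}{2B}$, invoke the $\mu_1$-alignment on $\Sigma_1$, and close with the same one-dimensional minimization (the paper routes it through Lemma~\ref{lemma: appendix-matrix-lemma}, you through Cauchy--Schwarz $(\theta^TH^2\theta)^2\leq(\theta^TH\theta)(\theta^TH^3\theta)$; both yield the lower bound $-\tfrac{B}{B-1}\erisk(\theta)$), while for the second bound both drop the now-nonnegative $\theta^TH^3\theta$ term and use $\lambda_1(H)\leq\|H\|_F$, giving the same two quantities inside the minimum.
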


When $B\gg 1$, the bound becomes
$
B/(\eta\sqrt{(B-1)\bmu_1})\approx \sqrt{B/\bmu_1}/\eta
$
, which is the same as the case of $\mu$-alignment.  Thus the influence of $\nabla L$ is indeed negligible compared with  $\{\nabla L_i\}_i$.

Note that  the linear stability is local in nature and hence our analysis essentially only needs  the $\mu_1$-alignment to hold locally around minima of interest. Experiments in Figure \ref{fig: feature-norms-small-scale} shows that $\gamma(\theta^*)$ is always bounded below, i.e., the norm uniformity of model gradients is satisfied. Combining with Proposition \ref{pro: rfm-alignment}, we can conclude that the alignment property assumed in Proposition \ref{pro: flatness-control-mu1-alignment}   holds.

\paragraph*{Tightness of our analysis.}
In the analysis above, we only inspect the instability caused by the noise, with the full-batch gradient completely ignored. Therefore, we anticipate that our bound is tighter in small-batch regime, where the noise term dominates the full-batch term. We will see that our numerical experiments indeed confirm this tightness in small-scale regime.
However, for obtaining the tightest bound,   one may need to consider both components simultaneously; this is much more complicated and left to future work.

\paragraph*{Numerical validations.}
Figure \ref{fig: size-independence-a} numerically shows how the Frobenius norm of Hessian (not only the upper bound) changes with extent of over-parameterization, where the trace of Hessian is also plotted for comparison. One can see that the Frobenius norm indeed keeps almost unchanged as increasing the model size but the Hessian trace increases significantly. Figure \ref{fig: size-independence-b} further shows the ratio between the Frobenius norm and our upper bound in the training process and two  batch sizes are examined. We have two observations. First, the correctness of our bound holds for the entire SGD trajectory, suggesting that the linear stability analysis is relevant for the entire training process. Second, as expected, the theoretical bound is indeed  tighter for the case with a smaller batch size.


\begin{figure}[!h]
\centering
\captionsetup[subfloat]{farskip=.3pt,captionskip=0.2pt}
\subfloat[]{\label{fig: size-independence-a}
\includegraphics[width=0.23\textwidth]{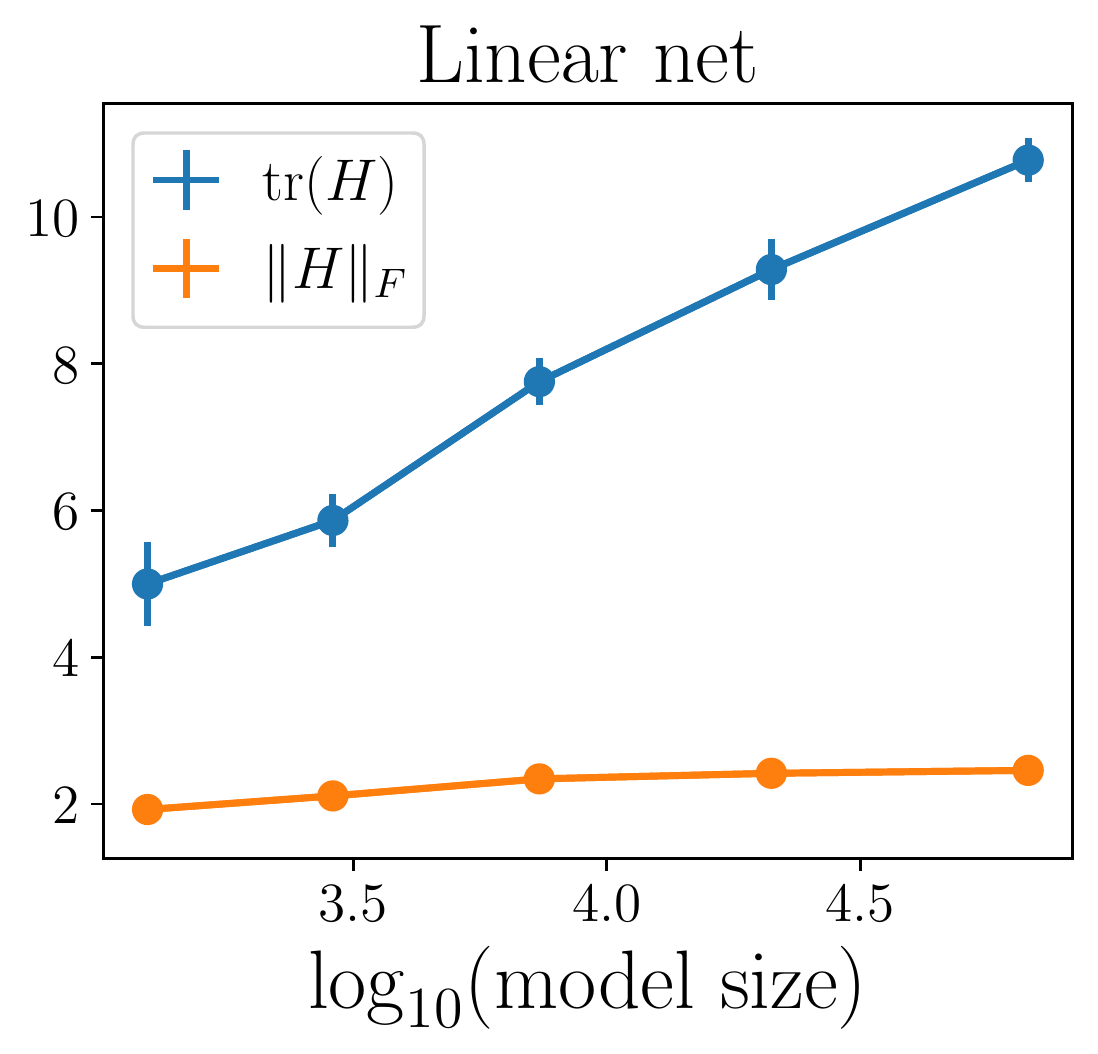}
\includegraphics[width=0.23\textwidth]{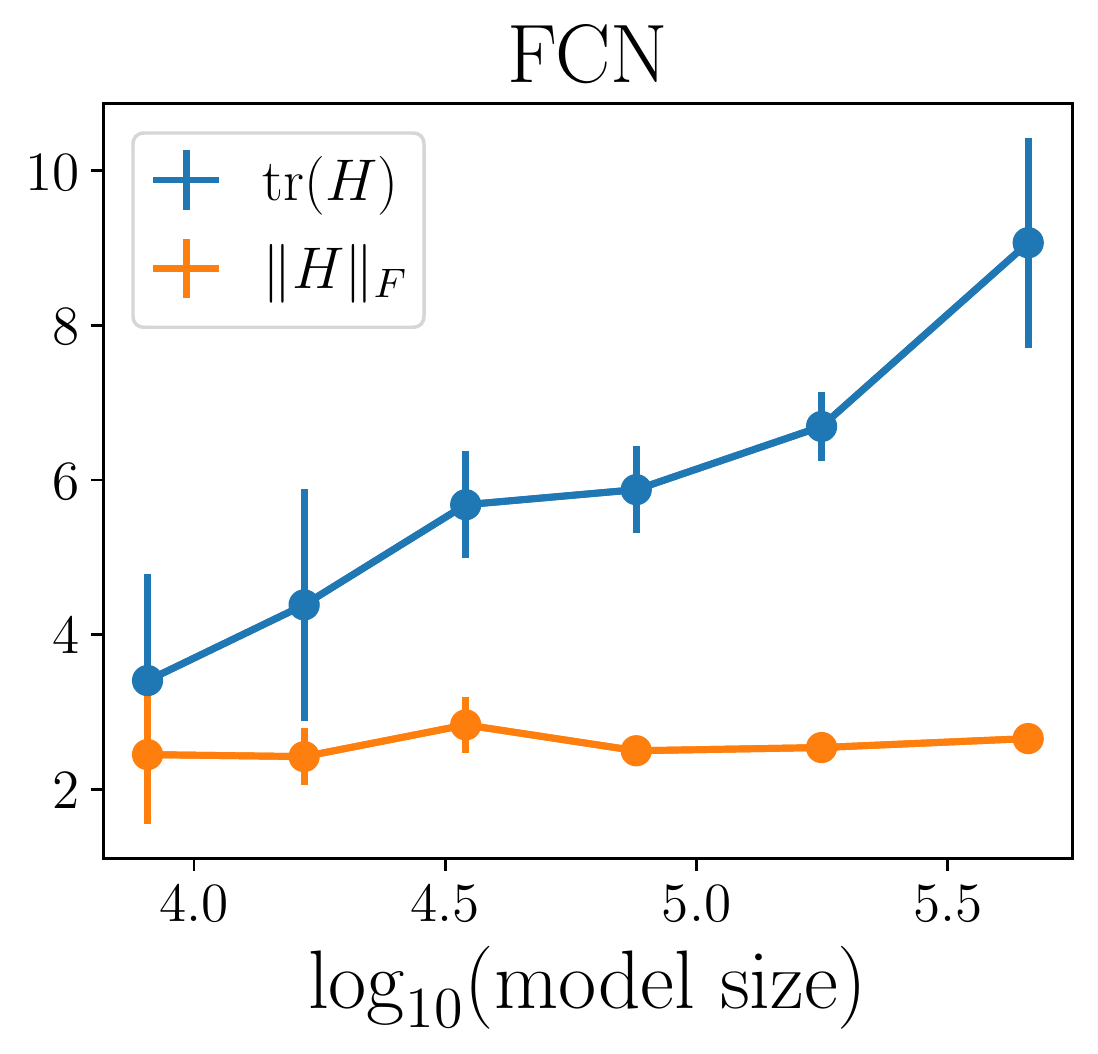}
\includegraphics[width=0.23\textwidth]{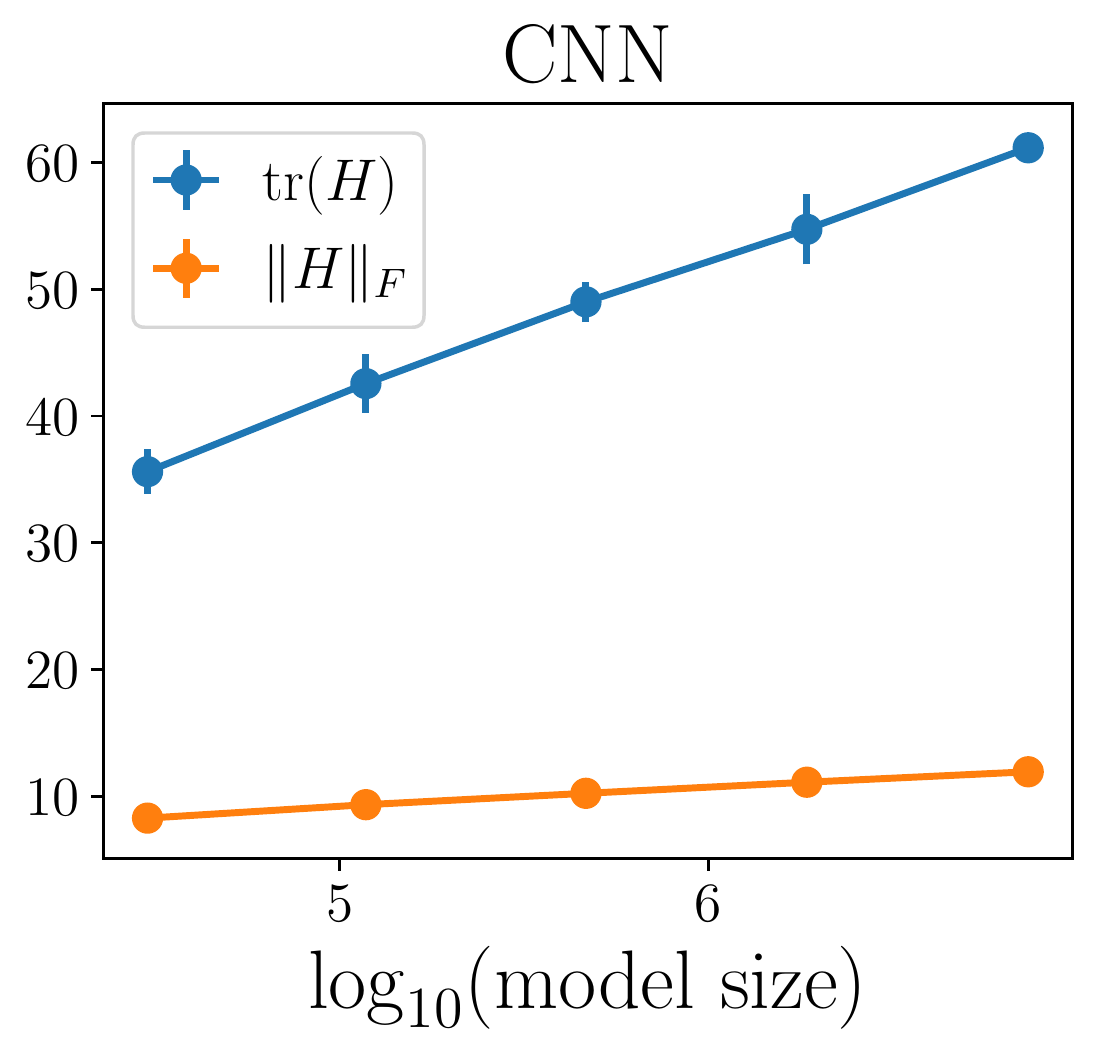}
}
\subfloat[]{\label{fig: size-independence-b}
  \includegraphics[width=0.27\textwidth]{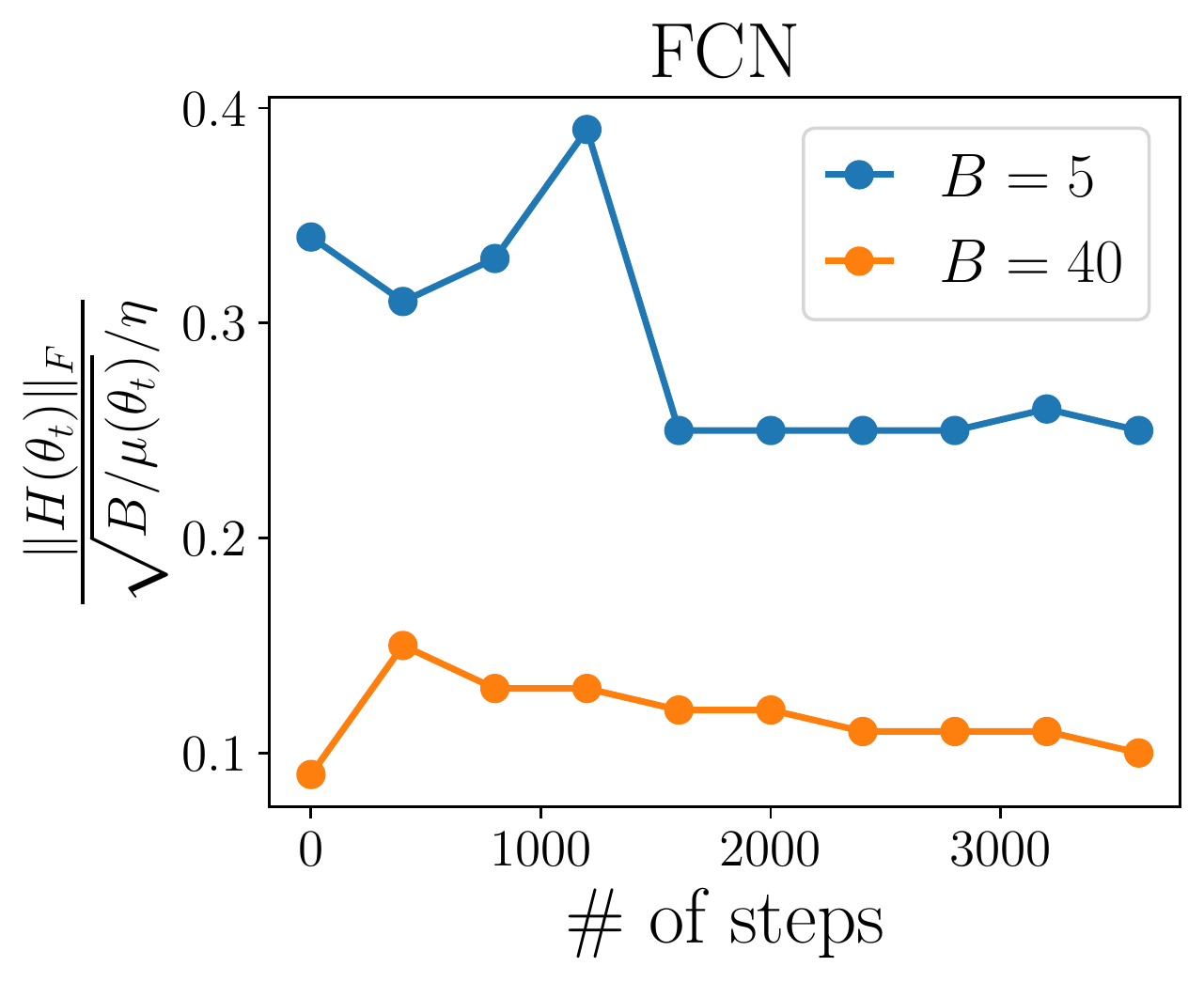}
}
\vspace*{-.4em}
\caption{\small (a) The Frobenius norm and trace of Hessian vs. model size. The error bar corresponds to the standard deviation estimated over $5$ runs. (b) The ratio between the Frobenius norm and our upper bound in the training process. Here $B=5$ and $B=40$ are examined.  For more experiment details, we refer to Appendix \ref{sec: app-experiment-setup}.}
\label{fig: size-independence}
\vspace*{-1em}
\end{figure}

\subsection{SGD escapes from sharp minima exponentially fast}
\label{sec: escape}
 The following theorem shows that the pure noise-driven escape from a sharp minimum is \emph{exponentially fast}, whose proof  follows trivially from \eqref{eqn: loss-update-2}.
\begin{theorem}\label{thm: escape}
Under the setting of Theorem \ref{thm: Fro-norm-Hessian-bound},
if $\|H(\theta^*)\|_F>\frac{1}{\eta}\sqrt{\frac{B}{\bmu_0}}$, then the linearized SGD satisfies $\EE[\erisk(\theta_t)]\geq \gamma_0^t\EE[\erisk(\theta_0)]$ with $\gamma_0=\frac{\eta^2\bmu_0}{B}\|H(\theta^*)\|_F^2>1$.
\end{theorem}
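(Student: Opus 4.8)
The plan is simply to iterate the one-step lower bound \eqref{eqn: loss-update-2} that was already established in the proof of Theorem \ref{thm: Fro-norm-Hessian-bound}. Recall that applying Lemma \ref{lemma: loss-update} to the linearized SGD \eqref{eqn: linearized-SGD} with noise covariance $S(\theta_t)=\Sigma(\theta_t)/B$, discarding the nonnegative term $r(\theta_t)\erisk(\theta_t)$ (here $r(\theta)\geq 0$ and $\erisk(\theta_t)\geq 0$ since the loss is a square), and then invoking the definition of $\mu(\theta)$ (with $G=H$ for the linearized model) together with the alignment hypothesis $\mu(\theta)\geq\bmu_0$ yields, for every $t\geq 0$,
\[
\EE[\erisk(\theta_{t+1})]\;\geq\;\frac{\eta^2}{2B}\EE[\tr(H\Sigma(\theta_t))]\;\geq\;\frac{\eta^2\bmu_0\|H(\theta^*)\|_F^2}{B}\,\EE[\erisk(\theta_t)]\;=\;\gamma_0\,\EE[\erisk(\theta_t)].
\]
This is exactly \eqref{eqn: loss-update-2}, and it holds at \emph{every} iterate because \eqref{eqn: linearized-SGD} is genuinely SGD on the quadratic model, so Lemma \ref{lemma: loss-update} applies verbatim at each step; note that only $r(\theta)\geq 0$ is used here, so no upper restriction on $\eta$ is needed (the condition $\eta\leq 2/\lambda_1(H)$ in Lemma \ref{lemma: loss-update} is relevant only for the \emph{upper} bound $r(\theta)\leq1$, which we do not invoke).

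Next I would record that the hypothesis $\|H(\theta^*)\|_F>\frac1\eta\sqrt{B/\bmu_0}$ is, after squaring and rearranging, precisely the statement that $\gamma_0=\eta^2\bmu_0\|H(\theta^*)\|_F^2/B>1$, i.e.\ the per-step multiplicative factor strictly exceeds one. Then a one-line induction on $t$ closes the argument: the base case $t=0$ is trivial, and given $\EE[\erisk(\theta_t)]\geq\gamma_0^t\EE[\erisk(\theta_0)]$, multiplying by $\gamma_0>0$ and applying the displayed inequality gives $\EE[\erisk(\theta_{t+1})]\geq\gamma_0^{t+1}\EE[\erisk(\theta_0)]$.

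There is essentially no obstacle: the entire substance of the theorem is already contained in \eqref{eqn: loss-update-2}, so the proof is, as stated, trivial. The only point worth being careful about is that the alignment bound $\mu(\theta)\geq\bmu_0$ is needed throughout the region where the linearization \eqref{eqn: linearized-model} is used, rather than just near $\theta_0$ — but this is exactly the standing assumption inherited from the setting of Theorem \ref{thm: Fro-norm-Hessian-bound}, and for the linearized dynamics the alignment factor is a property of the fixed data $\{g_i\}$ and the state $\theta$ only, so the same constant $\bmu_0$ governs all iterates.
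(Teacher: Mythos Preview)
Your proposal is correct and matches the paper's approach exactly: the paper states that the theorem ``follows trivially from \eqref{eqn: loss-update-2}'', and you have simply written out that trivial iteration explicitly. The only content is the one-step lower bound $\EE[\erisk(\theta_{t+1})]\geq \gamma_0\,\EE[\erisk(\theta_t)]$ already recorded in \eqref{eqn: loss-update-2}, plus the observation that the sharpness hypothesis is equivalent to $\gamma_0>1$.
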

Hence, linearized SGD takes roughly $\log_{\gamma_0}(1/\varepsilon)$ steps to escape from a $O(\varepsilon)$-loss region to a $O(1)$-loss region. The escape time depends on the loss barrier only logarithmically and is independent of the parameter space dimension. Due to the local closeness between linearized SGD and the original SGD, this partially explains the unreasonable escape efficiency of SGD for training big models. In contrast, the escape rates of existing works \cite{xie2020diffusion,zhu2019anisotropic,zhou2020towards,mori2021logarithmic} are either exponentially slow with respect to the loss barrier or suffer from the curse of dimensionality.

Figure \ref{fig: escape} shows the trajectories of  SGD escaping from sharp minima. It is demonstrated that the escape is indeed exponentially fast and specifically, $10$ steps are enough for SGD  escaping to a high-loss region for all the models examined. In addition, we observe that the escape is still exponentially fast in the high-loss region, although our analysis only applies to a local region. How can we explain this nonlocal escape behavior? We leave the study of this interesting phenomenon to future work.


\begin{figure}[!h]
\centering
\includegraphics[width=0.23\textwidth]{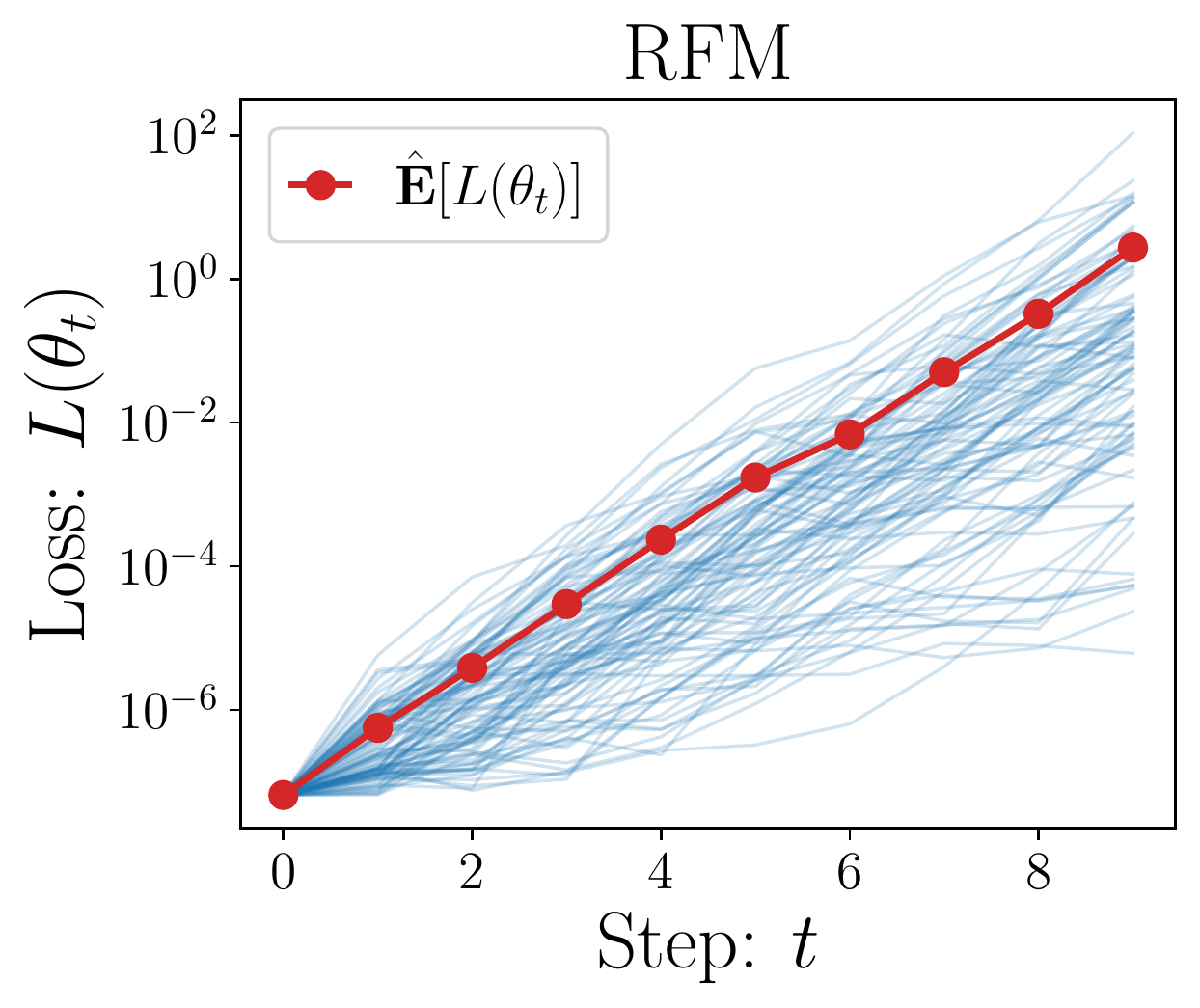}
\includegraphics[width=0.23\textwidth]{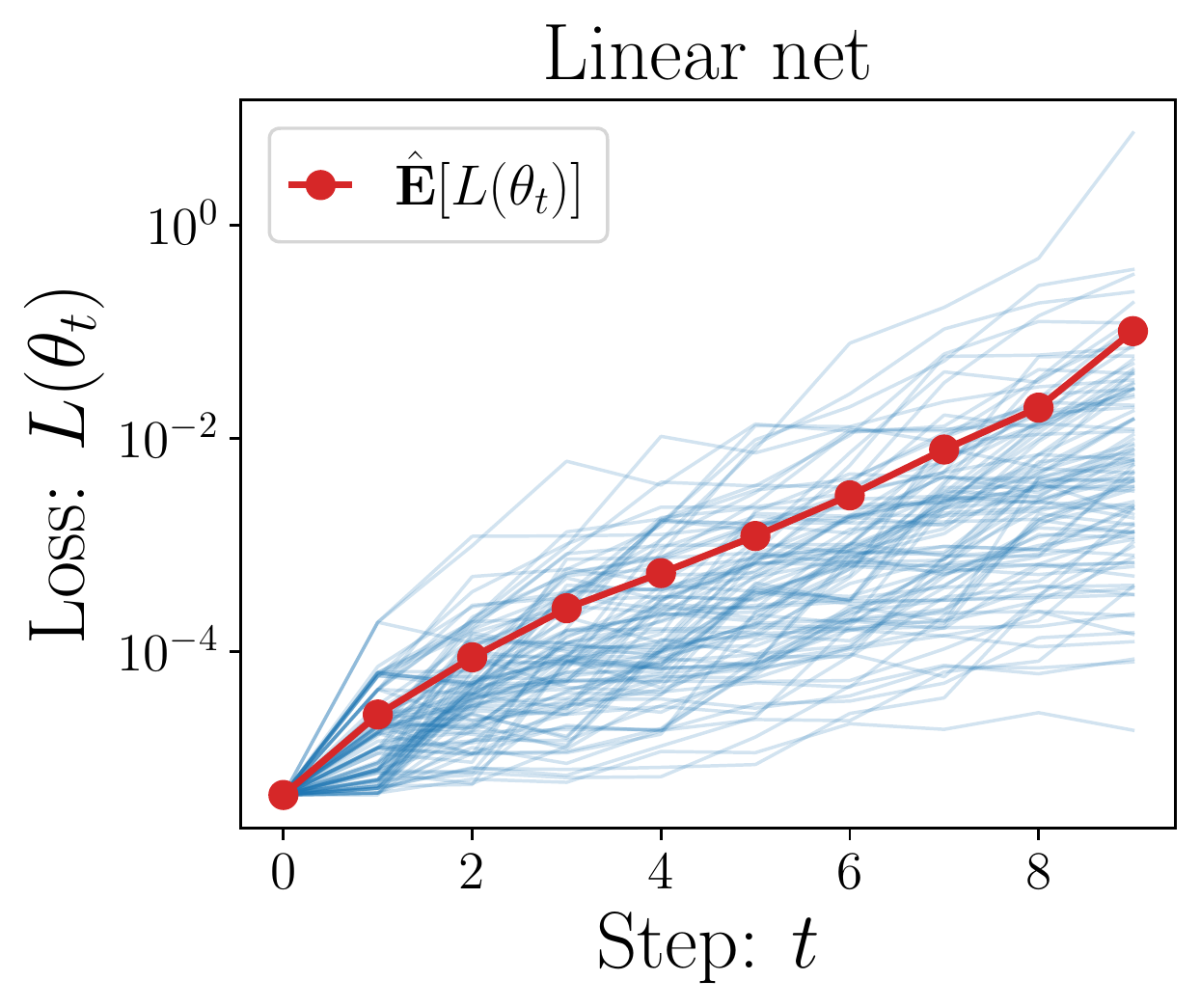}
\includegraphics[width=0.23\textwidth]{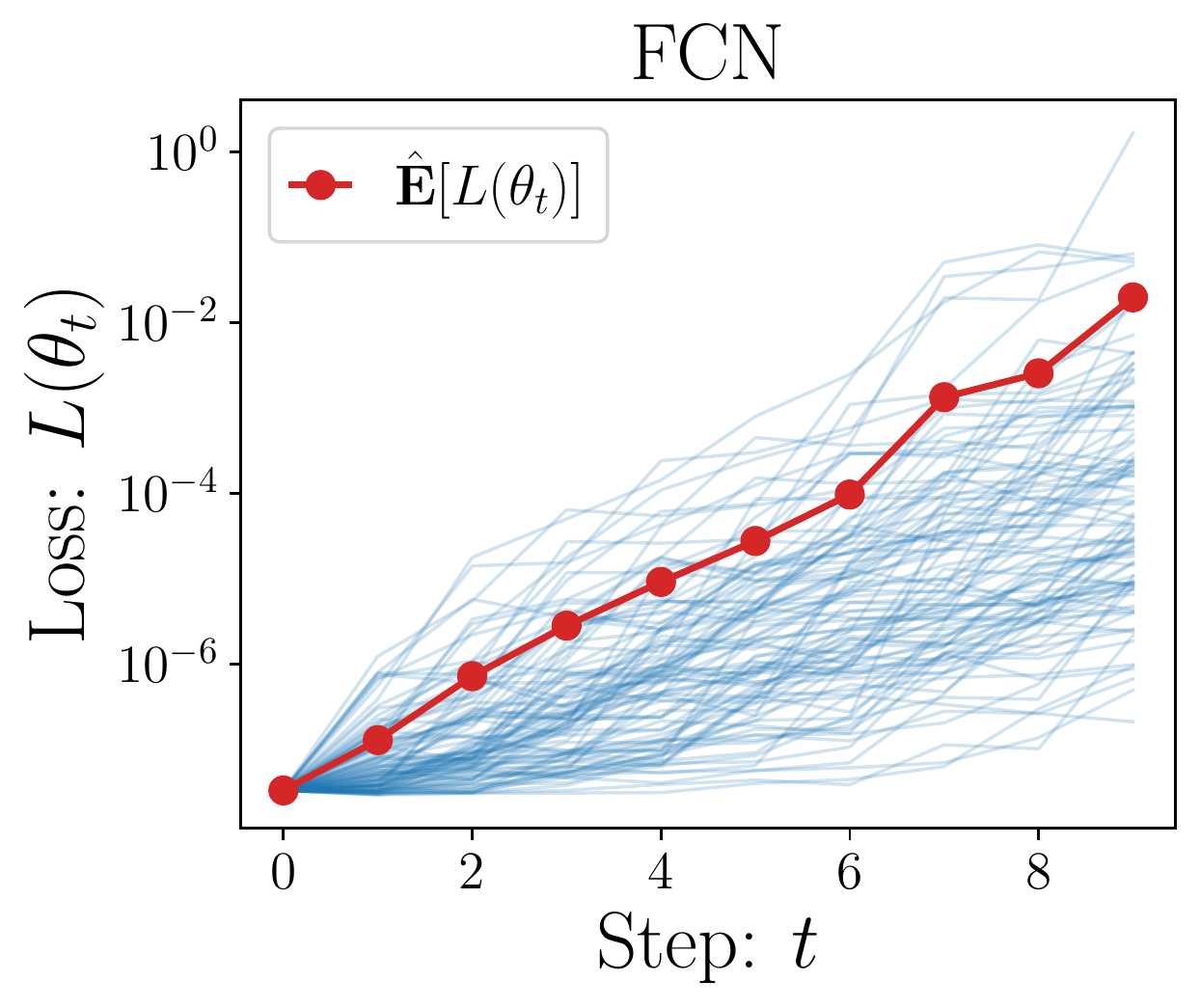}
\includegraphics[width=0.23\textwidth]{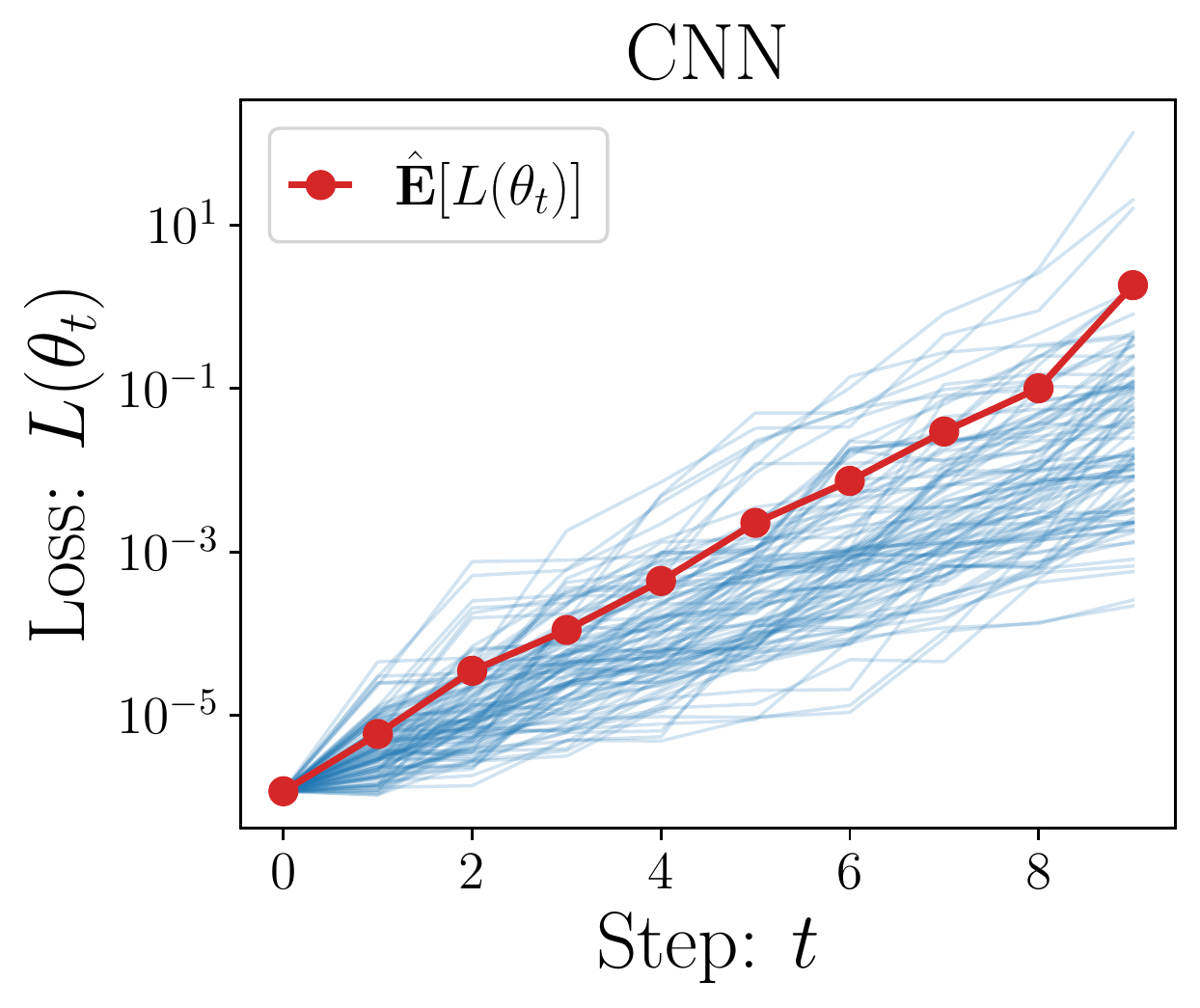}
\vspace*{-.5em}
\caption{\small \textbf{The exponentially fast escape from sharp minima.} The blue curves are  $200$ trajectories of SGD; the red curve  corresponds to the average. The sharp minimum is found by GD. When GD nearly converge,  we switch to SGD with the same learning rate. This choice ensures that the minimum is stable for GD, and thus the escape is purely driven by SGD noise. For more experimental details, we refer to Appendix \ref{sec: app-experiment-setup}.
}
\label{fig: escape}
\vspace*{-1em}
\end{figure}

\subsection{The importance of the noise structure}
\vspace*{-.4em}

\paragraph*{The magnitude structure.}
Theorem \ref{thm: escape} together with its proof suggests that the loss dependence of noise magnitude  is critical for obtaining the exponentially fast escape. The intuition is as follows. When $\theta_t$ is perturbed by noise to $\theta_{t+1}$ where $\erisk(\theta_{t+1})>\erisk(\theta_t)$, the noise magnitude  becomes larger there and thus $\theta_{t+1}$ is easier to be perturbed to a larger-loss region. This positive feedback drives SGD to leave exponentially fast. On the contrary, the  following lemma shows that if the noise is uniformly bounded, the noise-driven escape is at most linear in time. 
\vspace*{-.1em}
\begin{lemma}
Under the setting of Lemma \ref{lemma: loss-update}, assume $\eta\leq 2/\lambda_1(H)$ and $\EE[HS(\theta)]\leq 2\sigma^2$. Then
$
  \EE[\erisk(\theta_{t})-\erisk(\theta_0)]\leq \eta^2\sigma^2t.
$
\end{lemma}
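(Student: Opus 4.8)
The plan is to start from the loss-update identity established in Lemma \ref{lemma: loss-update}, namely
\begin{equation*}
\EE[\erisk(\theta_{t+1})]=\EE[r(\theta_t)\erisk(\theta_t)+\eta^2\nu(\theta_t)],
\end{equation*}
where $\nu(\theta)=\tr(HS(\theta))/2$ and, under the stepsize assumption $\eta\leq 2/\lambda_1(H)$, we have $0\leq r(\theta_t)\leq 1$. The key point is that this time we are given a \emph{uniform} upper bound on the noise contribution, $\EE[\tr(HS(\theta))]\leq 2\sigma^2$, so that $\EE[\nu(\theta_t)]\leq \sigma^2$ for every $t$. Dropping the nonnegative full-batch term by using $r(\theta_t)\leq 1$, we obtain the one-step inequality $\EE[\erisk(\theta_{t+1})]\leq \EE[\erisk(\theta_t)]+\eta^2\sigma^2$.

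The second step is a trivial telescoping of this recursion. Summing the inequality $\EE[\erisk(\theta_{s+1})]-\EE[\erisk(\theta_s)]\leq \eta^2\sigma^2$ over $s=0,1,\dots,t-1$ gives $\EE[\erisk(\theta_t)]-\EE[\erisk(\theta_0)]\leq \eta^2\sigma^2 t$, which is exactly the claimed bound. There is essentially no obstacle here: the only things to be careful about are (i) invoking $r(\theta_t)\leq 1$, which requires the stepsize bound $\eta\leq 2/\lambda_1(H)$ that is part of the hypothesis, and (ii) interpreting the matrix inequality $\EE[HS(\theta)]\leq 2\sigma^2$ correctly — here it should be read as $\EE[\tr(HS(\theta))]\leq 2\sigma^2$ (a scalar statement), which is the quantity that actually enters $\nu(\theta)$ through $\nu(\theta)=\tr(HS(\theta))/2$. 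With that reading, the proof is two lines and the main ``work'' is purely the contrast it draws with Theorem \ref{thm: escape}: a state-independent noise bound yields at most linear-in-$t$ growth, whereas the loss-proportional magnitude structure of genuine SGD noise produces the geometric (exponential) growth $\gamma_0^t$ of Theorem \ref{thm: escape}.

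So in summary: apply Lemma \ref{lemma: loss-update}, use $r(\theta_t)\le 1$ and $\EE[\nu(\theta_t)]\le\sigma^2$ to get $\EE[\erisk(\theta_{t+1})]\le\EE[\erisk(\theta_t)]+\eta^2\sigma^2$, then telescope. I do not foresee a hard part; if anything, the subtlety is only notational, in matching the hypothesis $\EE[HS(\theta)]\le 2\sigma^2$ to the scalar $\nu(\theta)$ appearing in the earlier lemma.
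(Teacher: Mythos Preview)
Your proposal is correct and mirrors the paper's proof essentially line for line: invoke Lemma \ref{lemma: loss-update}, use $r(\theta_t)\leq 1$ (from $\eta\leq 2/\lambda_1(H)$) together with $\EE[\nu(\theta_t)]\leq\sigma^2$ to obtain $\EE[\erisk(\theta_{t+1})]\leq \EE[\erisk(\theta_t)]+\eta^2\sigma^2$, then telescope. Your reading of the hypothesis as $\EE[\tr(HS(\theta))]\leq 2\sigma^2$ is exactly the intended one.
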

We set $\eta\leq 2/\lambda_1(H)$ to avoid the exponential escape caused by the full-batch component.
\vspace*{-.5em}

\begin{proof}
By Lemma \ref{lemma: GD-factor}, when $\eta\leq 2/\lambda_1(H)$, $r(\theta)\leq 1$. Thus Lemma \ref{lemma: loss-update} implies $\EE[\erisk(\theta_{t+1})]\leq \EE[\erisk(\theta_t)]+\eta^2 \sigma^2$, which implies $\EE[\erisk(\theta_t)]\leq \EE[\erisk(\theta_0)]+\eta^2\sigma^2t$.
\end{proof}
\vspace*{-.5em}

\paragraph*{The direction structure.}
We now turn to consider the impact of direction structure. Consider general SGDs: $\theta_{t+1}=\theta_t-\eta (\nabla \erisk(\theta_t)+\xi_t)$ with $\EE[\xi_t\xi_t]=S(\theta_t)/B$ for the linearized model \eqref{eqn: linearized-model}. We compare  two type of (unrealistic) noises: 
\begin{itemize}
\item \textit{Geometry-aware noise:} $S_1(\theta)=2\erisk(\theta)H$. 
\item \textit{Isotropic noise:} $S_2(\theta)=2\sigma^2 \erisk(\theta) I_p$ with $\sigma^2=\tr(H)/p$. Here, the value of $\sigma^2$ is chosen to ensure two types of noises have the same total variance  for a fair comparison \cite{zhu2019anisotropic}. 
\end{itemize}
Note that $p$ denotes the model size.
Analogous to Theorem \ref{thm: Fro-norm-Hessian-bound}, for the second isotropic SGD, 
\[
\EE[\erisk(\theta_{t+1})]\geq \frac{\eta^2}{2B}\tr(HS(\theta))\geq \EE[\erisk(\theta_t)] \frac{\sigma^2\eta^2}{B} \tr(H)=  \EE[\erisk(\theta_t)] \frac{\eta^2 }{pB} \tr(H)^2.
\]
Hence, the instability decreases with the parameter-space dimension and the resulting flatness constraint is  $\tr(H(\theta^*))\leq \sqrt{pB}/\eta$, depending on the model size explicitly.  In contrast, for the fist noise, Theorem \ref{thm: Fro-norm-Hessian-bound} implies $\|H(\theta^*)\|_F\leq \sqrt{B}/\eta$, independent of the model size. This difference can be intuitively explained as follows. The isotropic noise wastes most energy in perturbing SGD along flat directions, which barely affects the instability. 
In contrast, the geometry-aware noise focuses  most energy on perturbing SGD along sharp directions, causing much more instability.

\section{Larger-scale experiments}
\label{sec: large-scale-experiment}
\vspace*{-.5em}

We have provided small-scale experiments to justify the validity of our theoretical findings  for a variety of ML models. 
Here we turn to demonstrate the practical relevance  by consider the classification of the  CIFAR-10  dataset  \cite{krizhevsky2009learning} with  VGG nets \cite{vgg} and ResNets \cite{he2016deep}.
In training, all explicit regularizations  are removed to keep  consistent with our theoretical analysis. 
More details of the experimental setup can be found in  Appendix \ref{sec: app-experiment-setup}.

 \paragraph*{The alignment property and escaping behavior.}
  Figure \ref{fig: large-scale-1} reports the alignment strength of SGD noise during training for VGG-19 and ResNet-110. One can see that the alignment factors are significantly positive and similar results are also observed for a variety of VGG nets and ResNets of different depths and can be found in Figure \ref{fig: cifar10-large-app} of Appendix \ref{sec: app-experiment-result}. On can see that the alignment strength is nearly independent of the model size. Figure \ref{fig: large-scale-2} shows the behavior of escaping from sharp minima for VGG-19 and ResNet-110. One can still observe that the escape is exponentially fast and similar observation for other ResNets and VGG nets can be found in Figure \ref{fig: large-scale-escape} in the appendix.  These observations suggest that our theoretical findings also hold for this practical setting.
\begin{figure}[!h]
\captionsetup[subfloat]{farskip=0pt,captionskip=0pt}
\centering
\subfloat[]{\label{fig: large-scale-1}
  \hspace*{-1em}
  \includegraphics[width=0.22\textwidth]{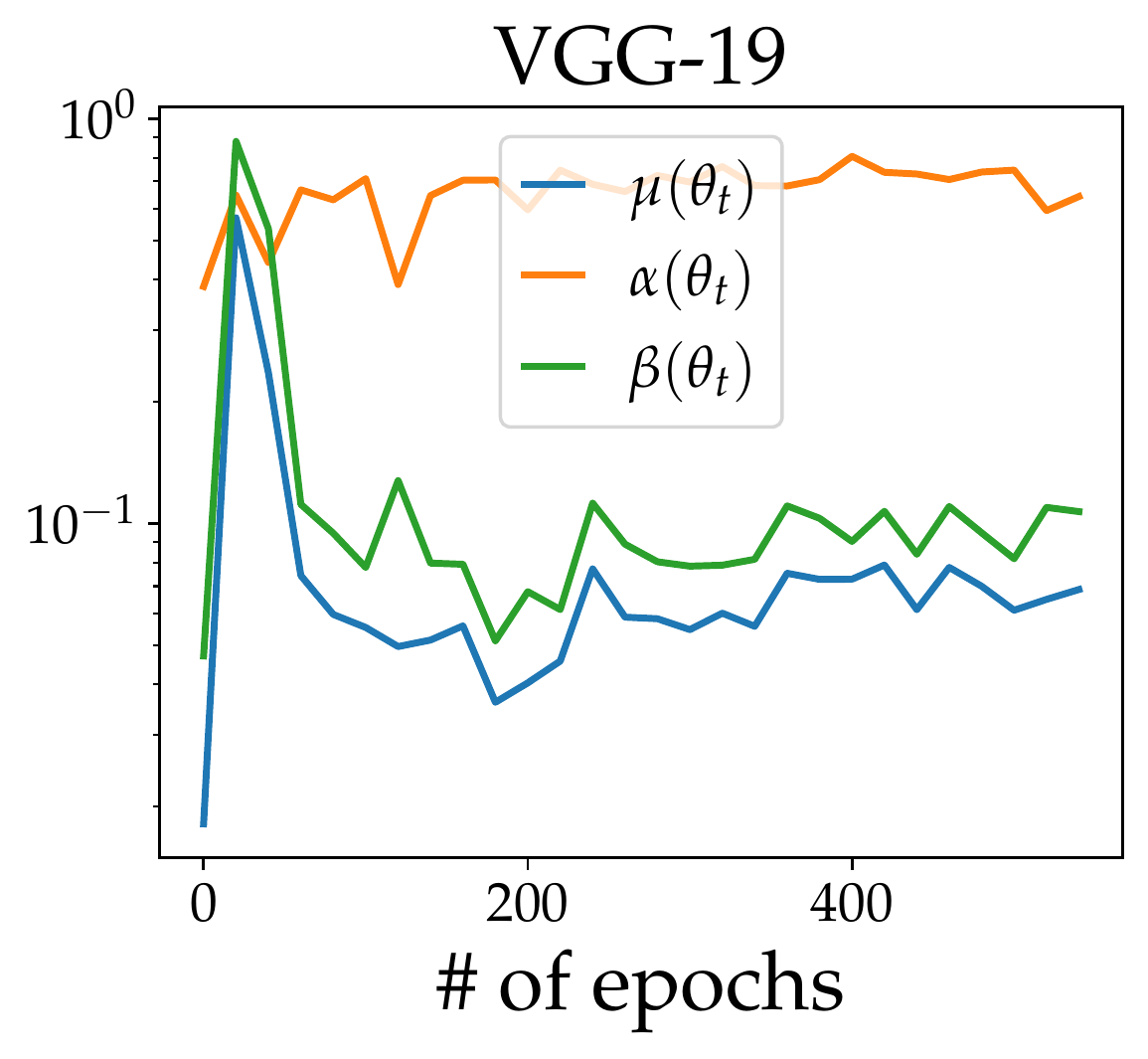}
  \includegraphics[width=0.22\textwidth]{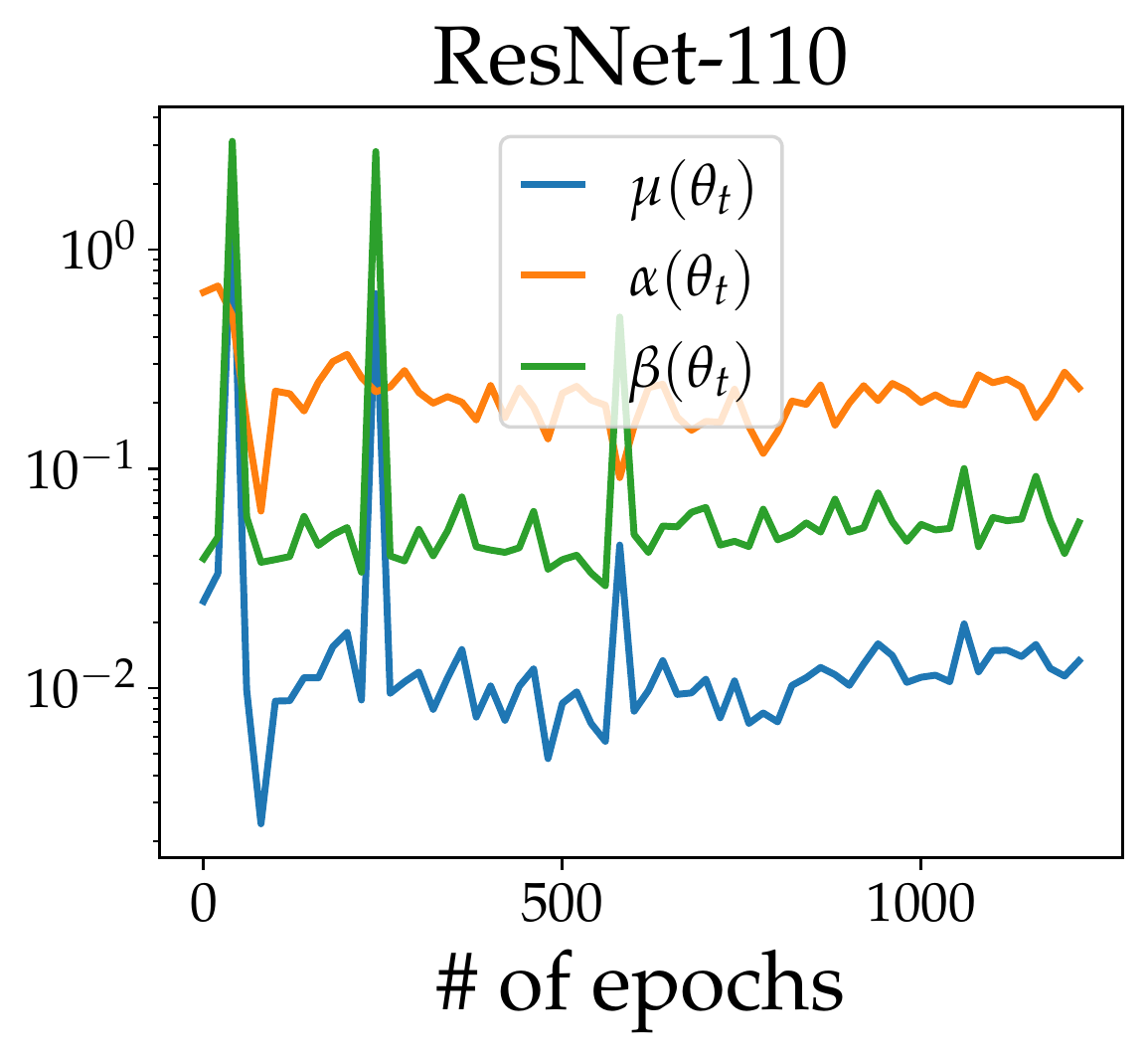}
}
\hspace*{.2em}
\subfloat[]{\label{fig: large-scale-2}
  \hspace*{-1em}
    \includegraphics[width=0.236\textwidth]{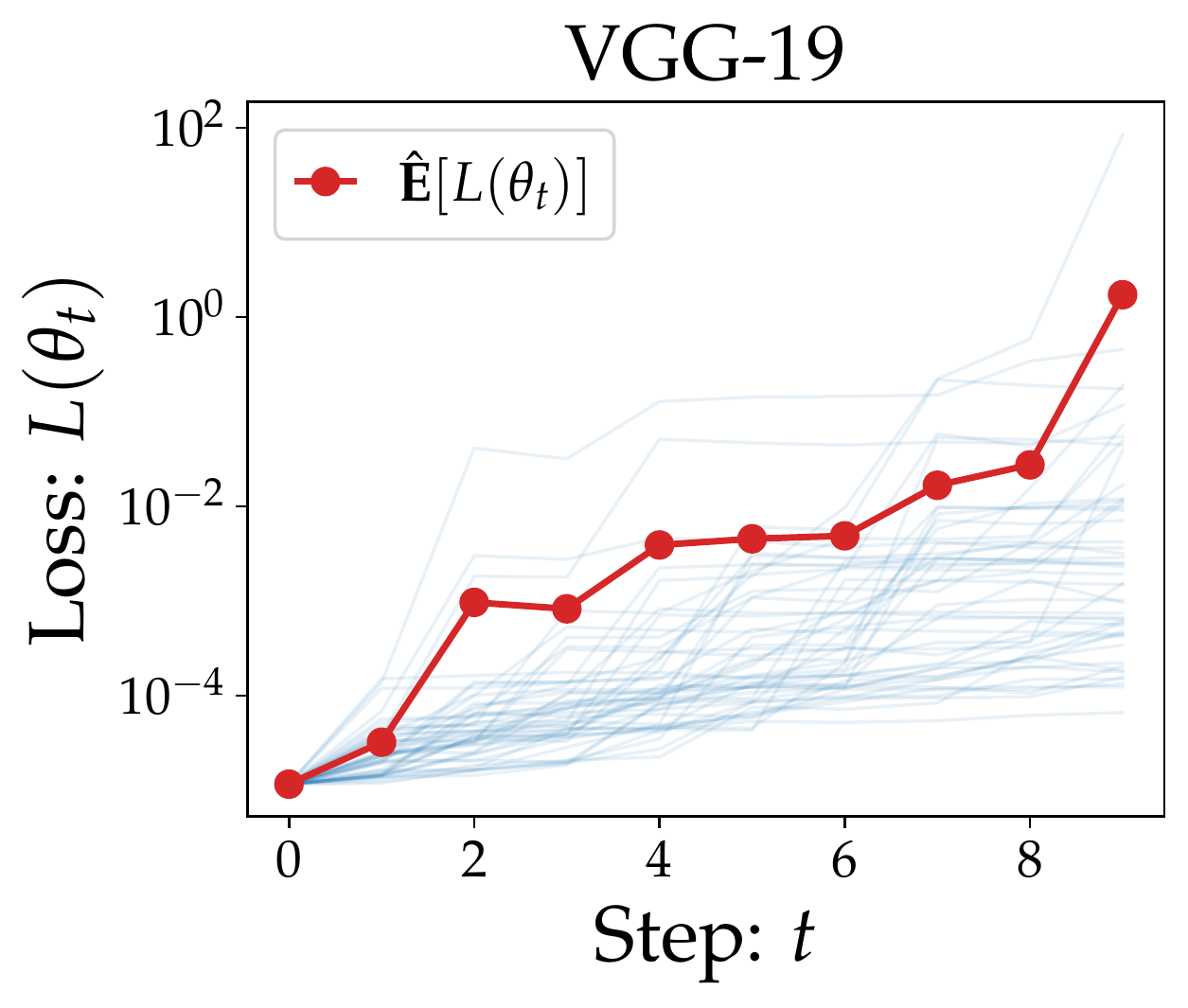}
  \includegraphics[width=0.23\textwidth]{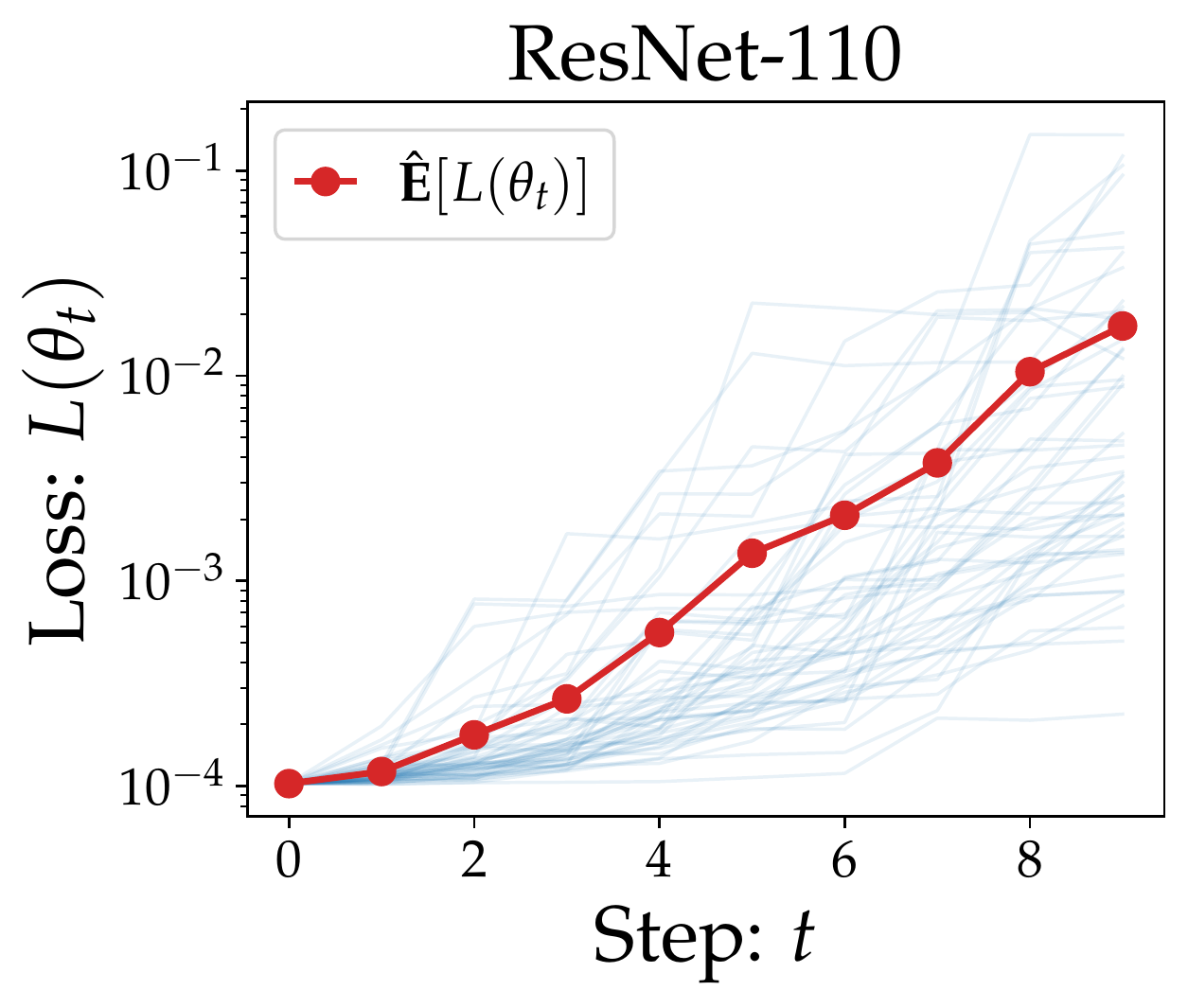}
}
\vspace*{-.7em}
	\caption{\small (a) The alignment factors  and the escaping behavior for VGG-19. Similar results are also observed for all examined  ResNets and VGG nets, which can be found in  Appendix. (b) The actual flatness of  SGD solution and the corresponding theoretical upper bound (Theorem \ref{thm: Fro-norm-Hessian-bound}).  (c) The upper bound becomes tighter as decreasing the batch size.
}
\label{fig: cifar10-large}
\vspace*{-1em}
\end{figure}

\paragraph*{The flatness and upper bound.}
Figure \ref{fig: large-scale-3} reports the  flatness of convergent solution and the corresponding  upper bound predicted by our theory for  ResNets and VGG nets. It is again observed that the flatness is nearly independent of the model size.  A surprising observation in Figure \ref{fig: large-scale-3} is that our upper bounds are rather tight, see, e.g., VGG-16 and VGG-19. This tightness suggests that SGD runs (nearly) at the edge of stability \cite{wu2018sgd,cohen2021gradient}.
Moreover, as expected, Figure \ref{fig: xx} shows that the our bound becomes tighter as decreasing batch size. This is consistent with what we observe in small-scale experiment in Figure \ref{fig: size-independence-b}.
\begin{figure}[!h]
\captionsetup[subfloat]{farskip=0pt,captionskip=0pt}
\centering
\subfloat[]{\label{fig: large-scale-3}
  \includegraphics[width=0.222\textwidth]{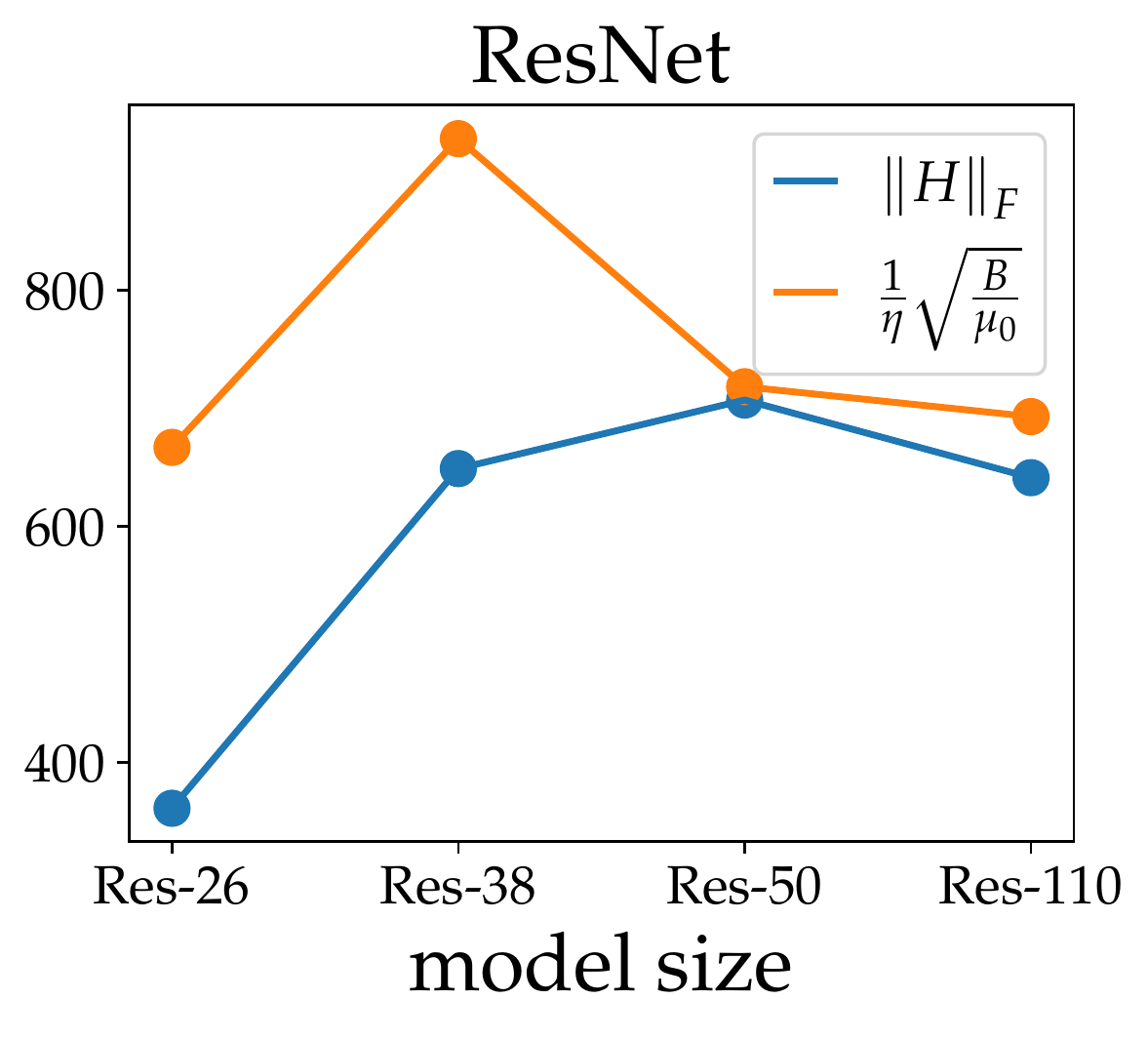}
  \includegraphics[width=0.222\textwidth]{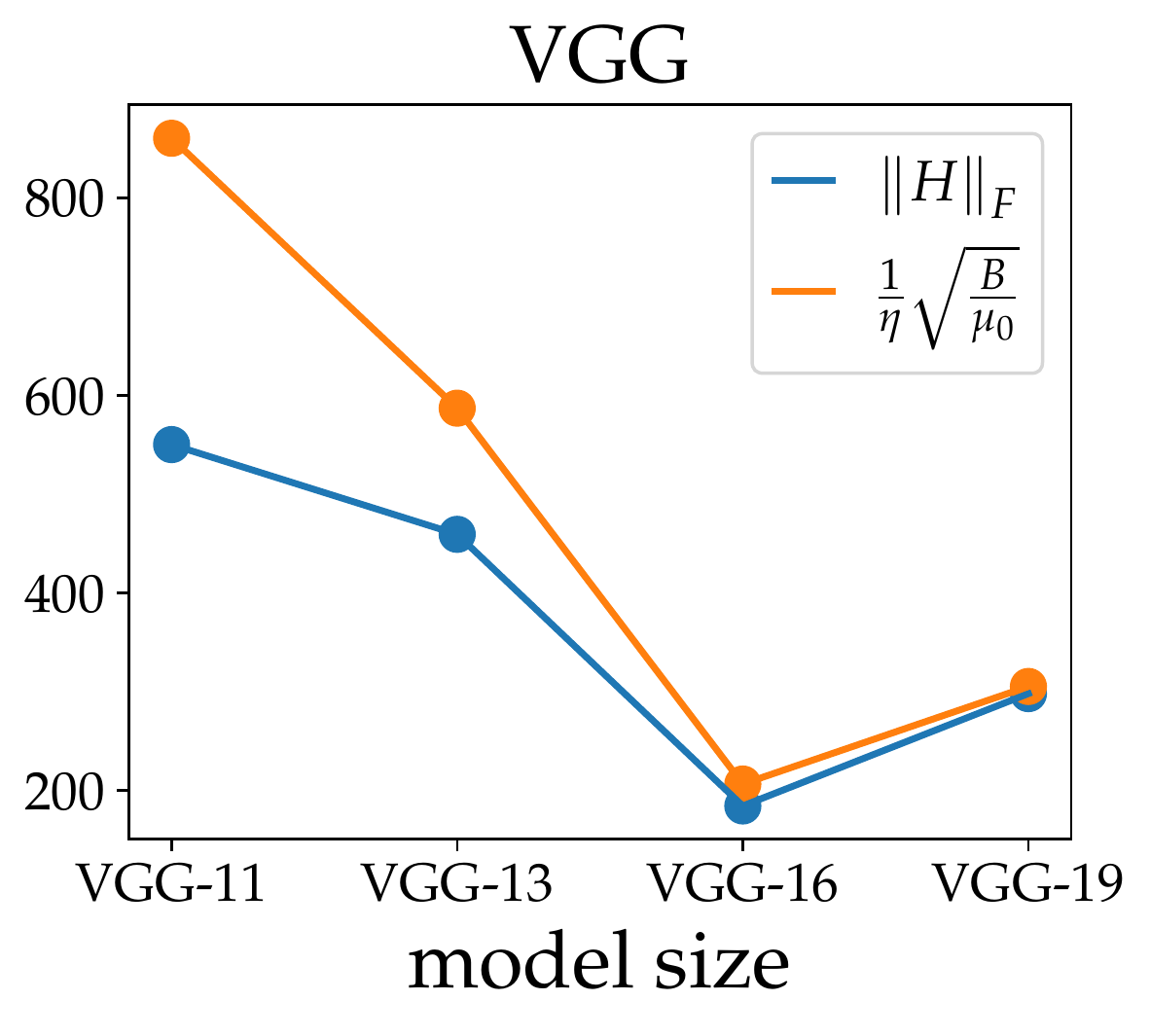}
}
\subfloat[]{\label{fig: xx}
  \includegraphics[width=.212\textwidth]{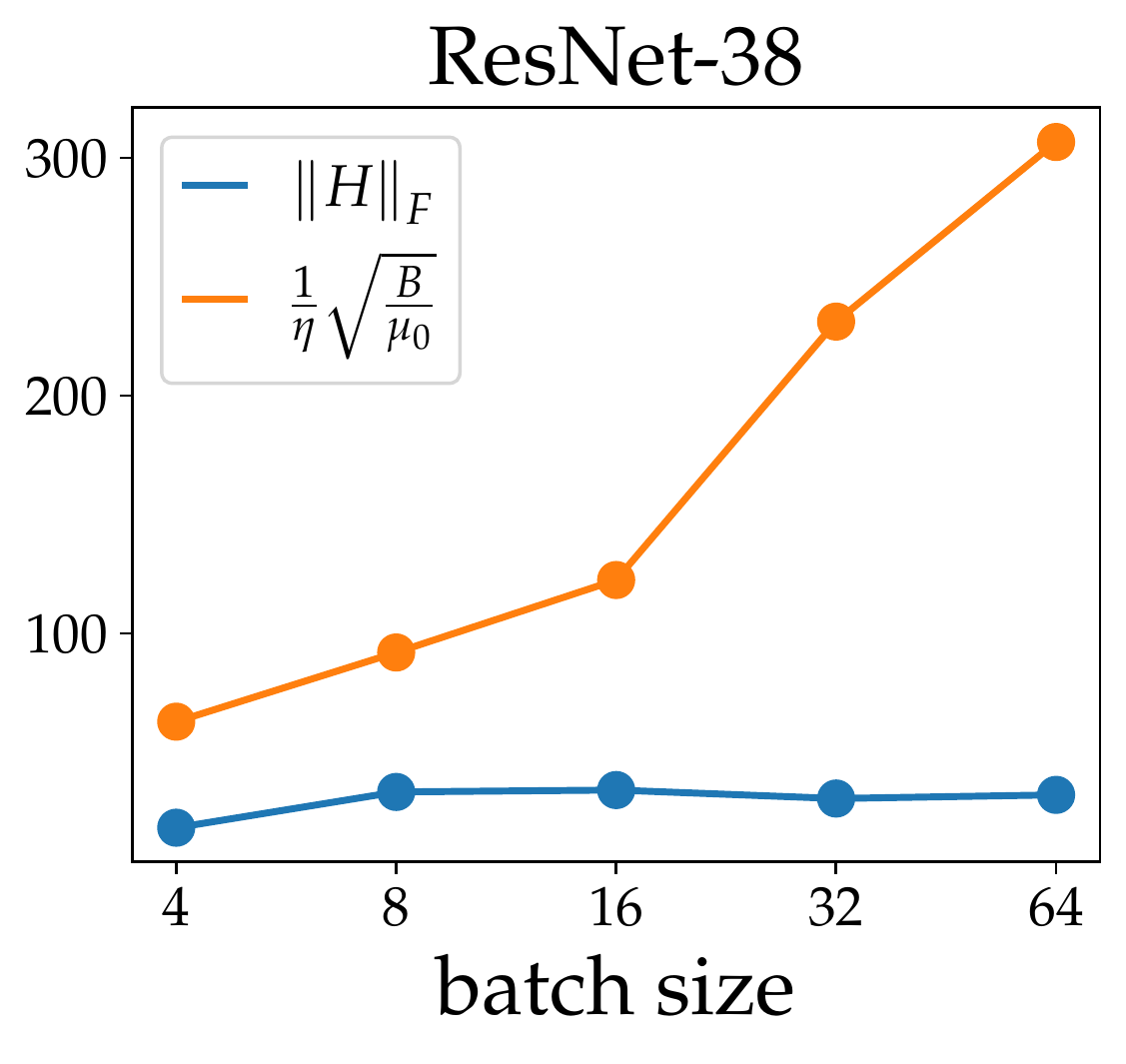}
  \includegraphics[width=.22\textwidth]{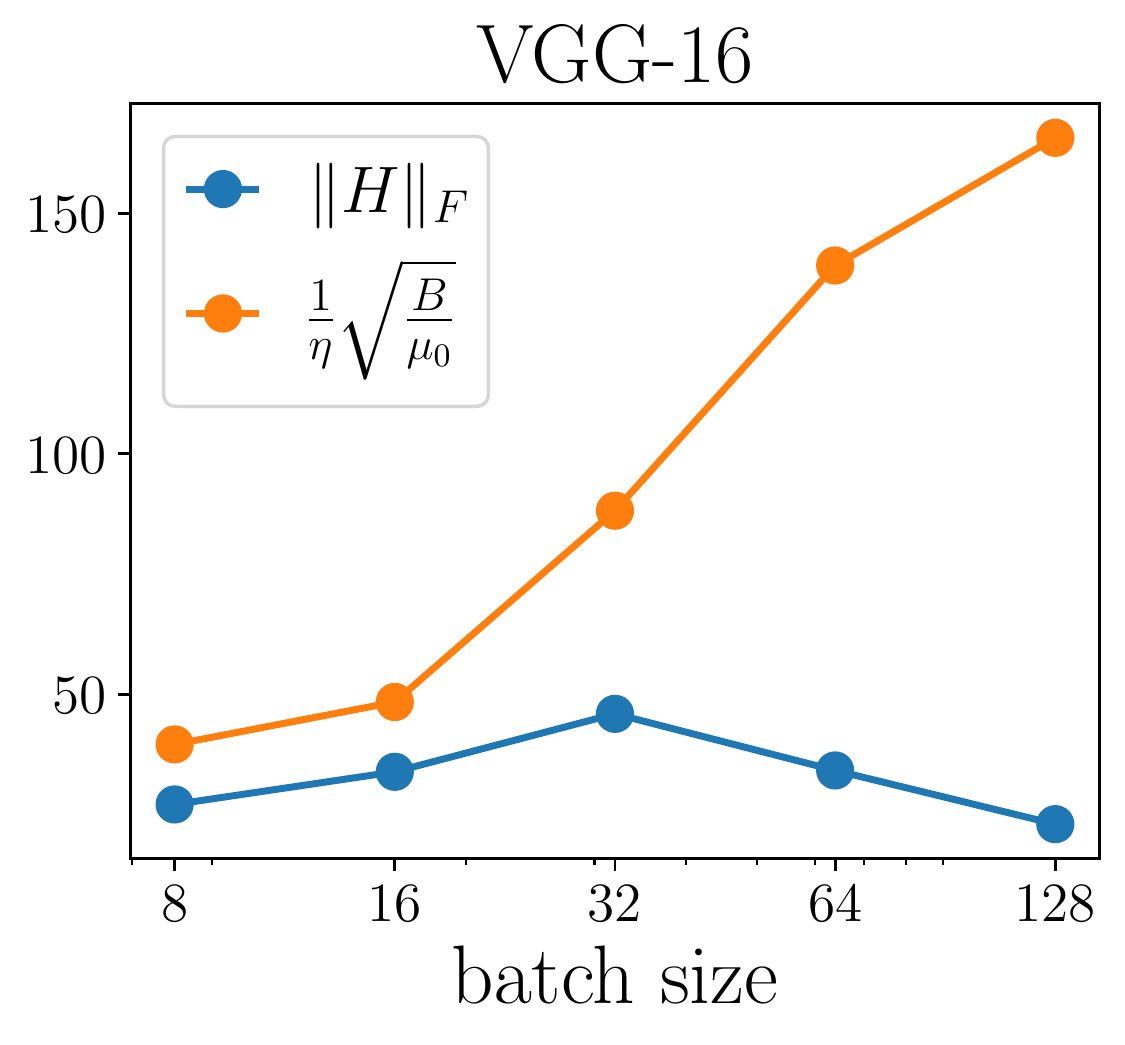}
}
\vspace*{-.3em}
\caption{\small
(a) The flatness and upper bound for various ResNets and VGG nets. 
(b) How the tightness of upper bound changes with decreasing batch size.
}
\vspace*{-1em}
\end{figure}

\section{Conclusion}
\vspace*{-.5em}

We provide a stability-based explanation of why SGD selects flat minima. Our current understanding is as follows. 1) For popular ML models, the SGD noise  aligns very well with local landscape. 2) This alignment property ensures that the flatness of  stable minima  must be size-independent. This understanding is made rigorous and quantitative by introducing a loss-scaled alignment factor to characterize the alignment strength and analyze the linear stability.
Obviously, many questions remains open. For example, can we understand what roles the stability plays in the whole dynamic process of SGD instead of only around global minima?   Can we establish the connection between the Hessian's Frobenius norm and generalization? 
Can we provide a fine-grained characterization of the noise structure and how the structure is related to implicit regularization of SGD? 
We leave the discussion of these important questions to future work.

\subsubsection*{Acknowledgements}
\vspace*{-0.4em}
We thank Zhiqin Xu and the anonymous reviewers for helpful suggestions. The work of Lei Wu is supported by a startup fund from Peking University. The work of Weijie Su is 
supported in part by NSF Grants CAREER DMS-1847415 and an Alfred Sloan Research Fellowship.

\bibliographystyle{plain}
\bibliography{ref}


\appendix

\section{Experiment setup}
\label{sec: appendix-experiment}

\label{sec: app-experiment-setup}
We consider training the following models in the over-parameterized regime. In training, all explicit regularizations (including weight decay, dropout, data augmentation, batch normalization, learning rate decay) are removed, and a simple constant-LR SGD is used to train our models.

\textbf{Small-scale models:} In this case, we set the sample size to be particularly small on purpose, which allows interested readers to quickly reproduce these experiments on their own computer. Note that this choice does not impact our conclusions since we also discuss in details the influence of changing the extent of over-parameterization and provide larger-scale experiments.
\begin{itemize}
\item \underline{\text{Random feature model} (RFM)}. The inputs $\{x_i\}_{i=1}^n$ are drawn from $\cN(0,I_d)$ with $d=10, n=200$. The labels are generated by 
\[
f^*(x)=0.2x_1+\left(\sum_{i=2}^d x_i-1\right)^2/3 + \sin\left(\sum_{i=1}^{d-2}x_ix_{i+2}/4\right).
\]
The model is $f(x;\theta)=\sum_{j=1}^m \theta_j\relu(w_j^Tx)$ with $m=2000$ and $\{w_j\}_{j=1}^m$ independently drawn from $N(0,I_d)$ at initialization and fixed during the training. This model is trained by SGD with learning rate $\eta=0.003$ and batch size $B=5$.

\item \underline{\text{Linear networks}}. The inputs $\{x_i\}_{i=1}^n$ drawn are drawn from $\cN(0,I_d)$ with $d=100, n=50$. Here we set $n<d$ to examine the low-sample regime. The labels are generated by $f^*(x)=\sum_{i=1}^dx_i/d$. The model is a four-layer linear network: $d\to m\to m\to m\to 1$ with $m=50$. This model is trained by SGD with learning rate $\eta=0.1$ and batch size $B=5$. The default LeCun initialization is used.

\item \underline{\text{Fully-connected networks} (FCN)}.  We randomly sample $n$ data from the MNIST training set and label $\{1, 2, 3, 4, 5\}$ to $0$ and $\{5, 6, 7, 8, 9, 10\}$ to $1$ to form our new training set. The model is a ReLU-activated fully-connected network with the architecture: 
$784\to m\to m \to 1$. 
Except for studying the influence of over-parameterization, we always  set $m=30, n=1000$, and the model size is $p=25441$. This model is trained by SGD with $\eta=0.05, B=5$ and the LeCun initialization is used. 

\item \underline{\text{Convolutional neural networks} (CNN)}. This is   a small LeNet-type   CNN \cite{lecun1998gradient} whose architecture is given in Table \ref{tab: cnn-arch}. The training set is the same as the one constructed above for FCN.  The LeCun initialization is used and the model is trained by SGD with $\eta=0.1,B=5$.
\begin{table}[!h]
\renewcommand{\arraystretch}{1.1}
\centering
\caption{\small The architecture of CNN with $m$ controlling the network width. We always set $m=20$ except for studying the impact of over-parameterization, where we vary the value of $m$.}
\begin{tabular}{c|c}
 \specialrule{1pt}{1pt}{1pt} 
 \hline 
Layer & Output size \\\hline 
input &  $28\times 28\times 1$ \\
$3\times 3\times$ $m$, conv & $28\times 28\times m$ \\
$3\times 3\times$ $2m$, conv & $28\times 28\times 2m$ \\
$2\times 2$, avgpool & $14\times 14\times 2m$ \\
$3\times 3\times 2m$, conv & $14\times 14\times 2m$ \\
$3\times 3\times m$, conv & $14\times 14\times m$ \\
$2\times 2$, avgpool & $7\times 7\times m$ \\
\hline
flatten & $49m$\\
$49m\to 1$, linear & $1$ \\ \hline
 \specialrule{1pt}{1pt}{1pt}
\end{tabular}
\label{tab: cnn-arch}
\end{table}
\end{itemize}

\textbf{Larger-scale models:} For these experiments, the full CIFAR-10 dataset are used to train our model. 
\begin{itemize}
\item \underline{\text{VGG nets.}} The models are the standard VGG nets (including VGG-11, VGG-13, VGG-16, and VGG-19) for classifying CIFAR-10 proposed in \cite{vgg}. 
\item \underline{\text{ResNets}}.  The residual networks for CIFAR-10 proposed in \cite{he2016deep} are considered and the models include ResNets-26, ResNet-38, ResNet-50, and ResNet-100.  For ResNets, we follow \cite{fixup} to use the fixup initialization in order to ensure that the model can be trained without batch normalization. 
\end{itemize}
Both VGG nets and ResNets are trained by SGD with learning rate $\eta=0.1$ and batch size $B=64$ until the training loss becomes smaller than $10^{-4}$. 

\paragraph*{Hyperparameter choices.} In all experiments, the default model size, sample size, learning rate, and batch size described above are used unless explicitly specified, e.g., studying the influence of  over-parameterization.

\paragraph*{Efficient computations of the alignment factors and flatness.}
Let $g_i = \nabla f(x_i;\theta), e_i = f(x_i;\theta)-y_i$. Let $Q=(g_1,\dots,g_n)\in\RR^{p\times n}$ and $S=(e_1g_1,\dots,e_ng_n)\in\RR^{p\times n}$.
Then, 
\begin{align*}
  G &= \fn \sumin g_i g_i^T=\fn QQ^T\in \RR^{p\times p}\\
  \Sigma &= \fn \sumin e_i^2 g_i g_i^T - (\fn\sumin e_ig_i)(\fn\sumin e_ig_i)^T=\frac{1}{n}SPS^T\in \RR^{p\times p},
\end{align*}
where $P=I_{n\times n}-\frac{1}{n}\bm{1}\bm{1}^T\in\RR^{n\times n}$. Here $\bm{1}\in\RR^n$ denotes  the all-one vector.

Notice that the computation of $\alpha(\theta),\beta(\theta),\mu(\theta), \|H(\theta)\|_F$ can be reduced to computing $\|G\|_F, \|\Sigma\|_F,$ and $\tr(G\Sigma)$.  The time complexity of naively computing  them is on the order of $O(p^2n)$, which is prohibitive for large-scale models where $p\gg n$. A more efficient way is to use the following equations
\begin{equation}\label{eqn: computation-alignment}
\begin{aligned}
\|G\|_F^2 &= \frac{1}{n^2}\tr(QQ^TQQ^T) = \frac{1}{n^2}\|Q^TQ\|_F^2\\
\|\Sigma\|_F^2 &= \frac{1}{n^2}\tr(SPS^TSPS^T)=\frac{1}{n^2}\|P S^TS\|_F^2\\
\tr(G\Sigma) &= \frac{1}{n^2}\tr(QQ^TSPS^T)=\frac{1}{n^2}\|PS^TQ\|_F^2,
\end{aligned}
\end{equation}
where all the matrices are  $n\times n$. Hence, using these equations, the computation complexity becomes $O(n^2p)$. This is much smaller than $O(p^2n)$ when $p\gg n$.

\begin{itemize}
\item For small-scale experiments, the equations in \eqref{eqn: computation-alignment} are directly used. 
\item For the large-scale models, we need further approximations since the computation complexity $O(n^2p)$ is still prohibitive in this case. Notice that  the  formulations in Eq.~\eqref{eqn: computation-alignment} are all in the form of sample average, which allows us to perform Monte-Carlo approximation. Specifically, we
randomly choose $B$ samples from $x_1,\dots,x_n$ and still use \eqref{eqn: computation-alignment} to estimate these quantities. But now the computation complexity becomes $O(B^2 p)$. For the experiments on CIFAR-10, we test $B$'s with different values and find that $B=50$ is sufficient to obtain a reliable approximation of the original full-data quantity. Hence, for all large-scale experiments in this paper, we use $B=50$ to speed up the computation of alignment factors and flatness. We clarify that the models are still trained on the full dataset. 
\end{itemize}

\section{Extra experiment results}
\label{sec: app-experiment-result}
\subsection{Small-scale experiments} \label{sec: app-extral-experiment-result}

Figure \ref{fig: feature-norms-app} shows the relative norm of model gradient, i.e., $\chi_i(\theta)/\bar{\chi}(\theta)$ of each samples when the model convergences. It is shown that $\gamma(\theta)=\min_i \chi_i(\theta)/\bchi(\theta)$ is indeed bounded below. This explains why the SGD noise satisfies the alignment property locally according to Lemma \ref{lemma: gen-feature-based-model}.
\begin{figure}[!h]
\includegraphics[width=0.23\textwidth]{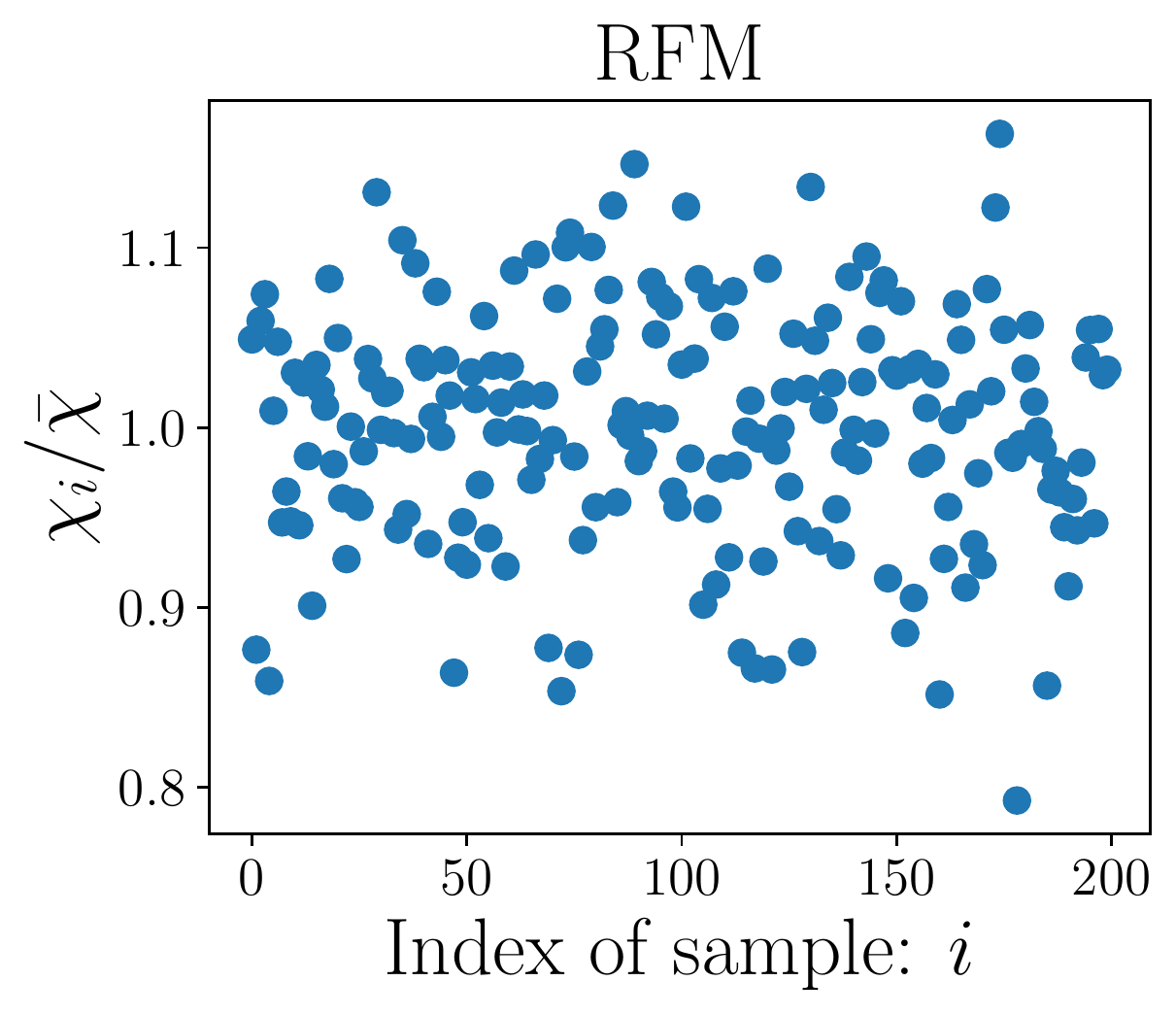}
\includegraphics[width=0.23\textwidth]{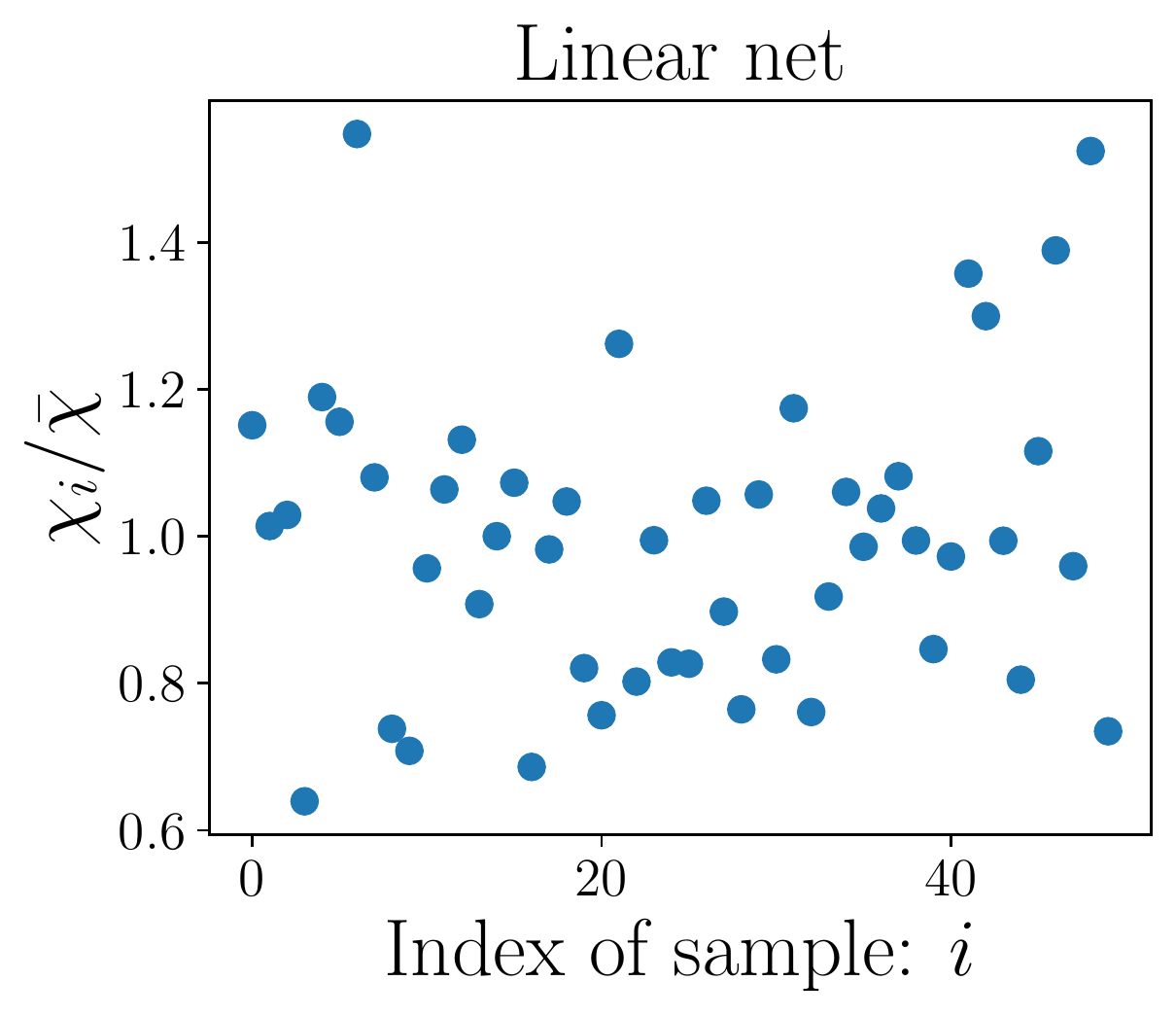}
\includegraphics[width=0.23\textwidth]{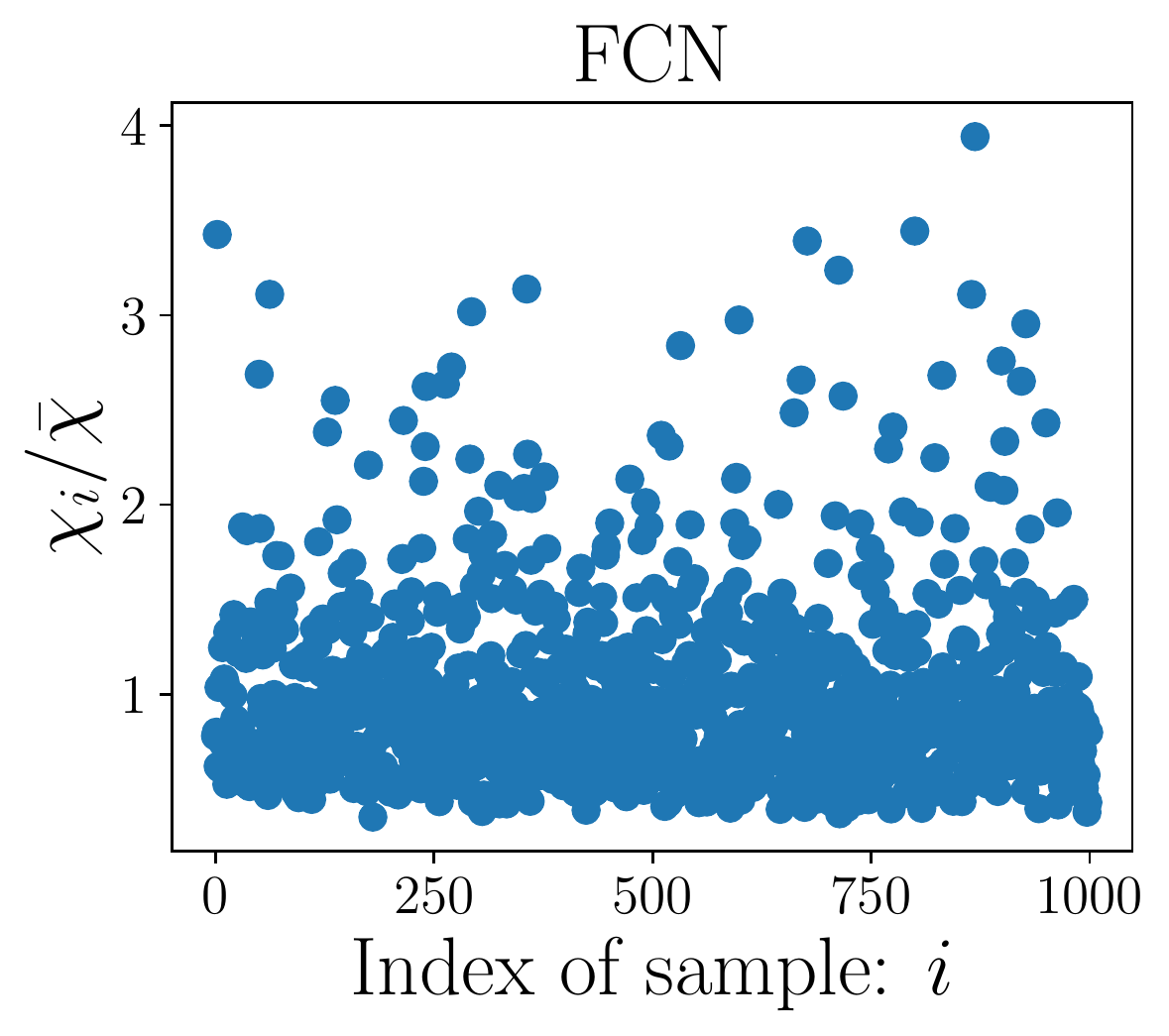}
\includegraphics[width=0.23\textwidth]{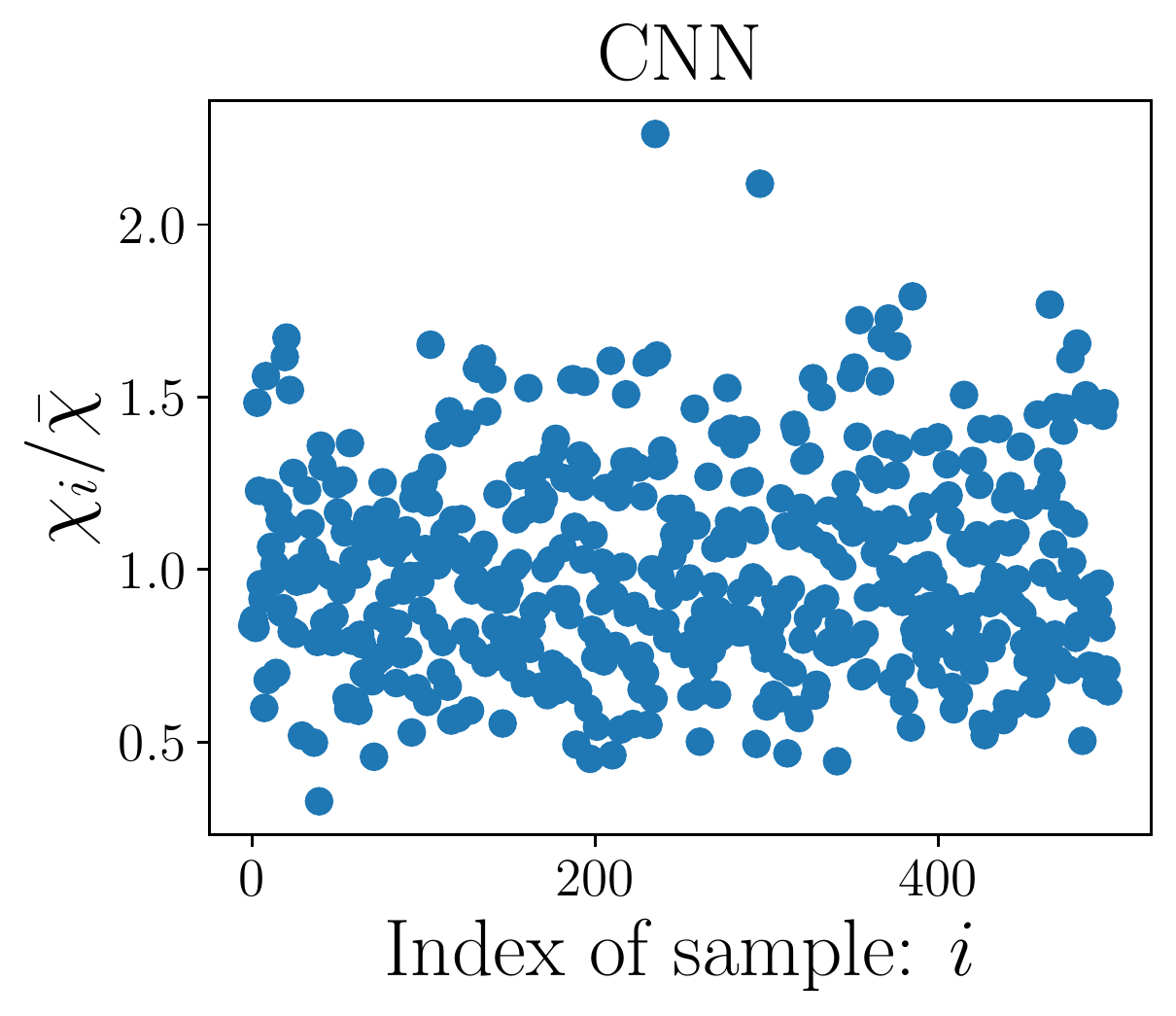}
\caption{\small The model gradient norms of SGD solutions for the RFM, linear network, FCN, and CNN.}
\label{fig: feature-norms-app}
\end{figure}

\subsection{Larger-scale experiments}
\label{sec: app-large-scale-exp}

Figure \ref{fig: cifar10-large-app} reports the values of alignment factors during the training for ResNet-26, ResNet-38, ResNet-50, VGG-13, and VGG-19. We see that the alignment factors are always significantly positive. In particular, for VGG-16, $\alpha(\theta)$ is close to $1$ and $\mu(\theta)$ is roughly on the order of $0.1$.  In addition, Figure \ref{fig: large-scale-escape} shows that the noise-driven escape is indeed exponentially fast as suggested by our theoretical analysis.

\begin{figure}[!h]
\centering
  \includegraphics[width=0.25\textwidth]{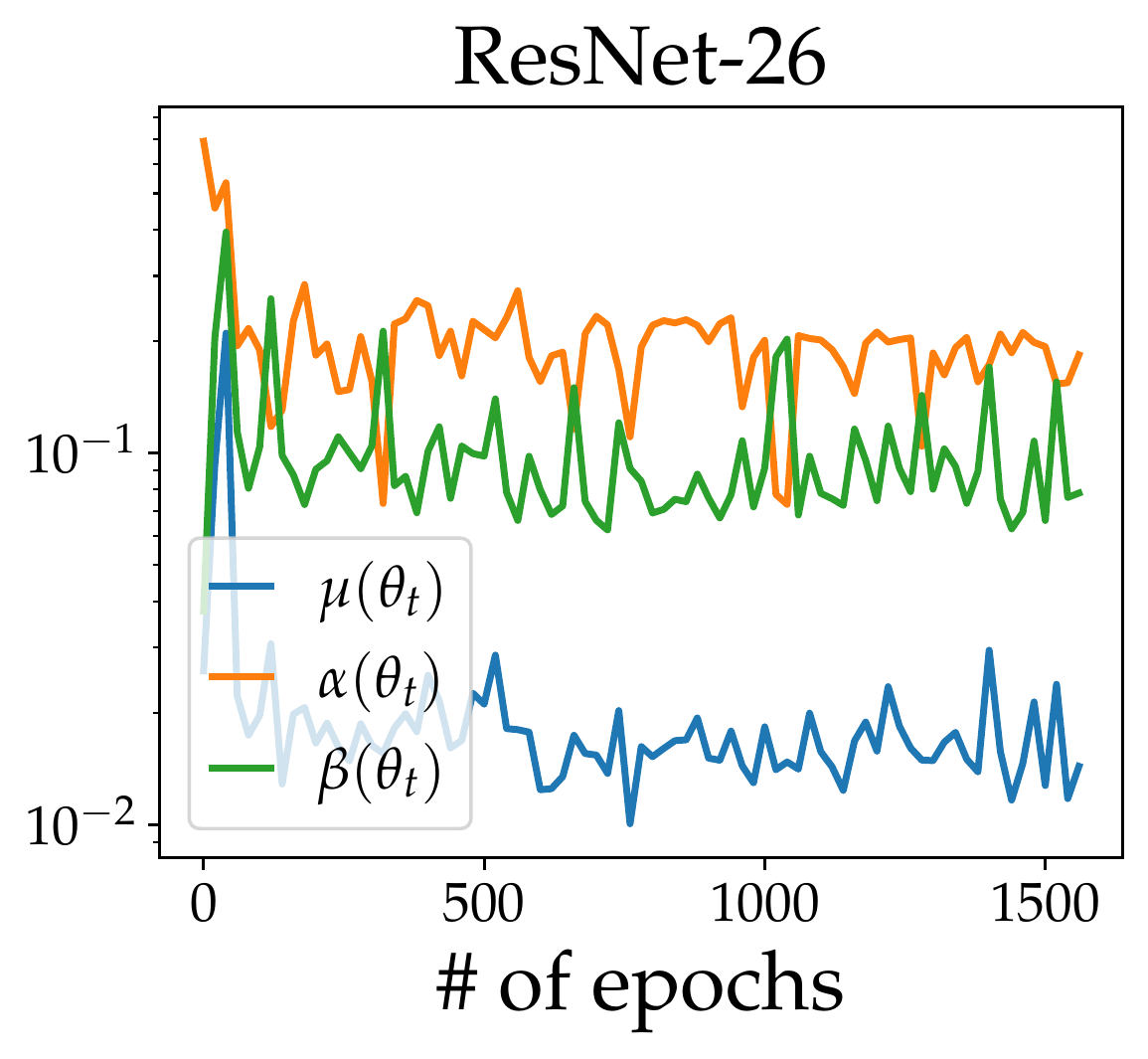}
  \includegraphics[width=0.25\textwidth]{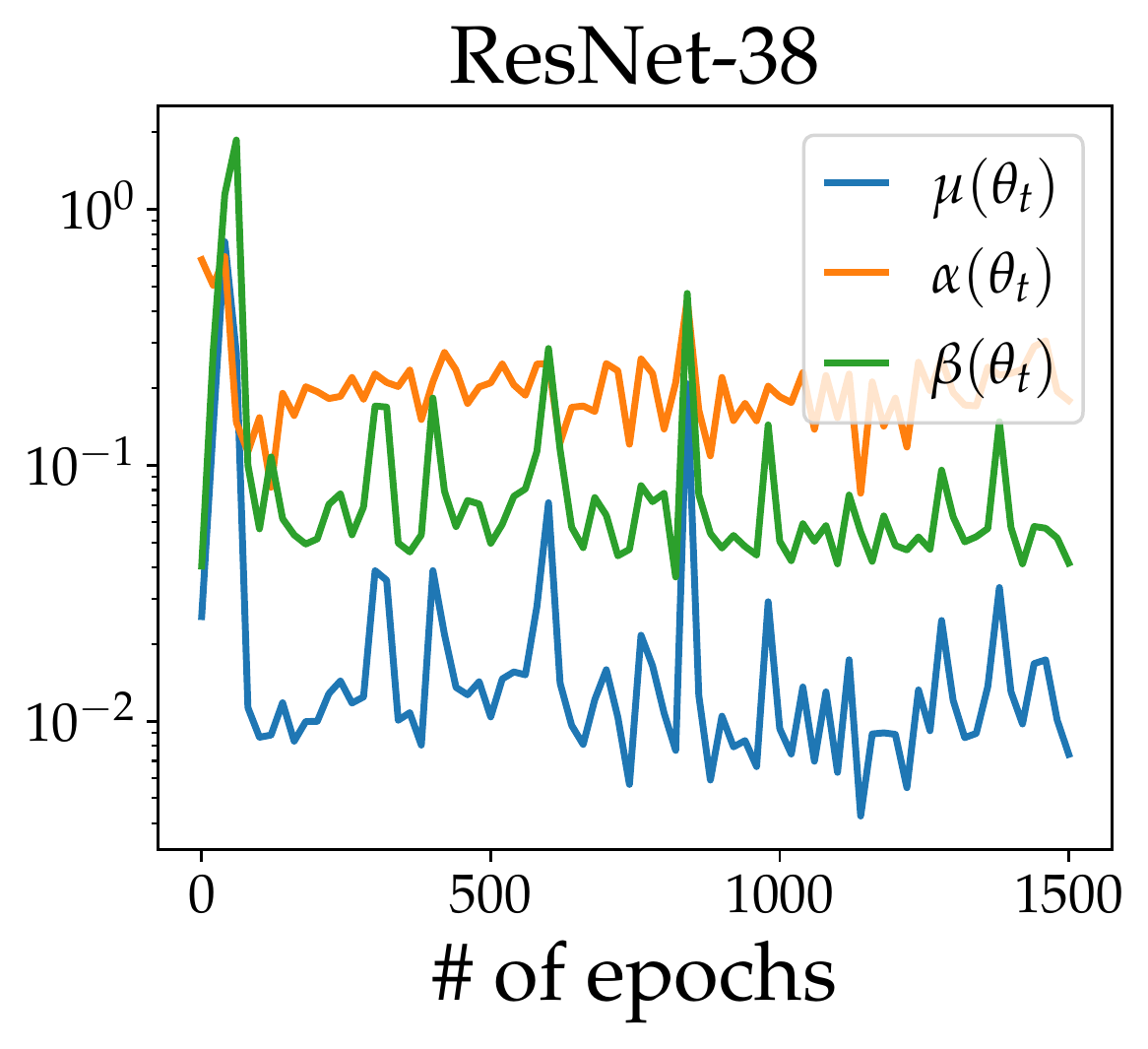}
  \includegraphics[width=0.25\textwidth]{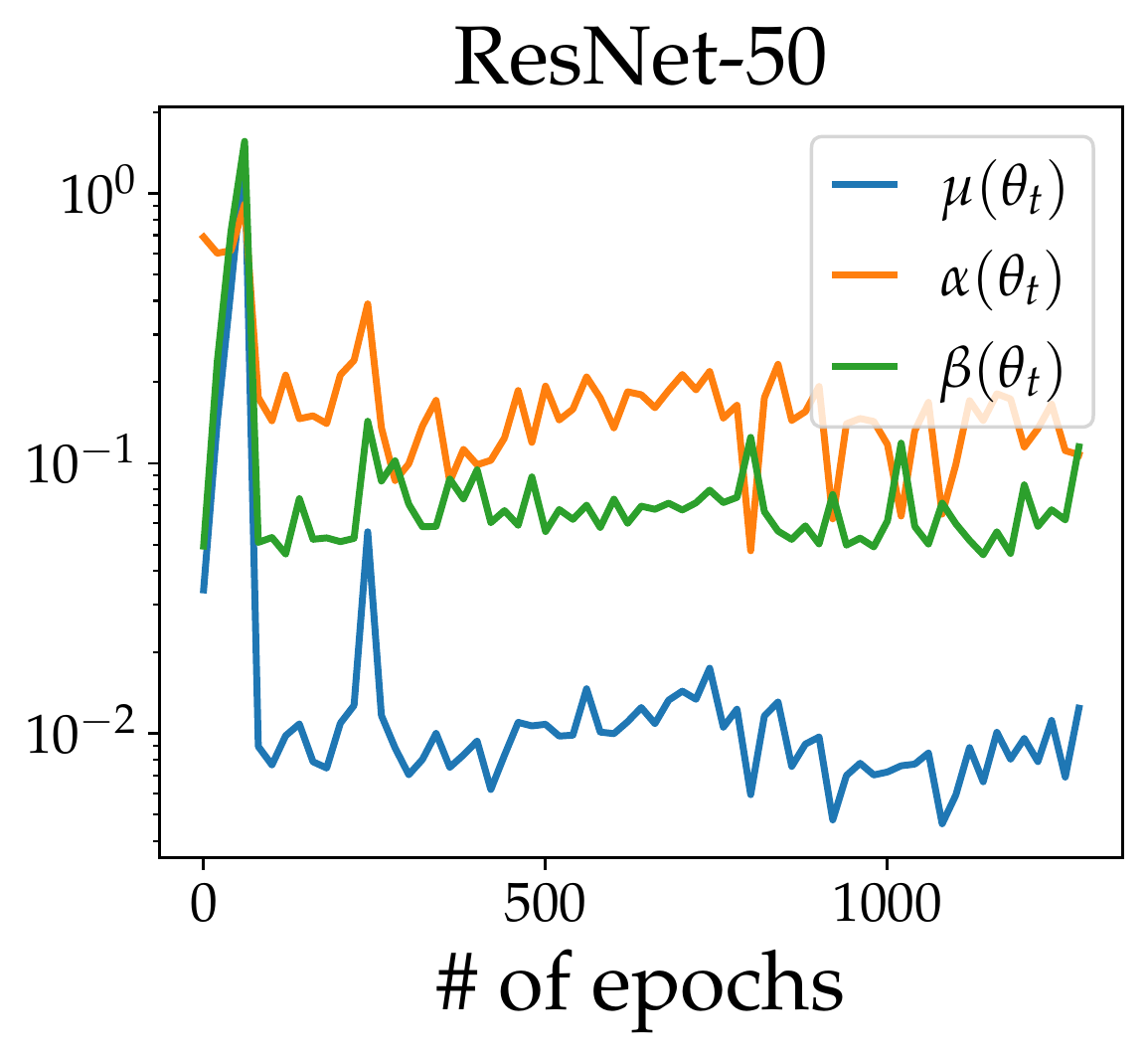}

  \includegraphics[width=0.25\textwidth]{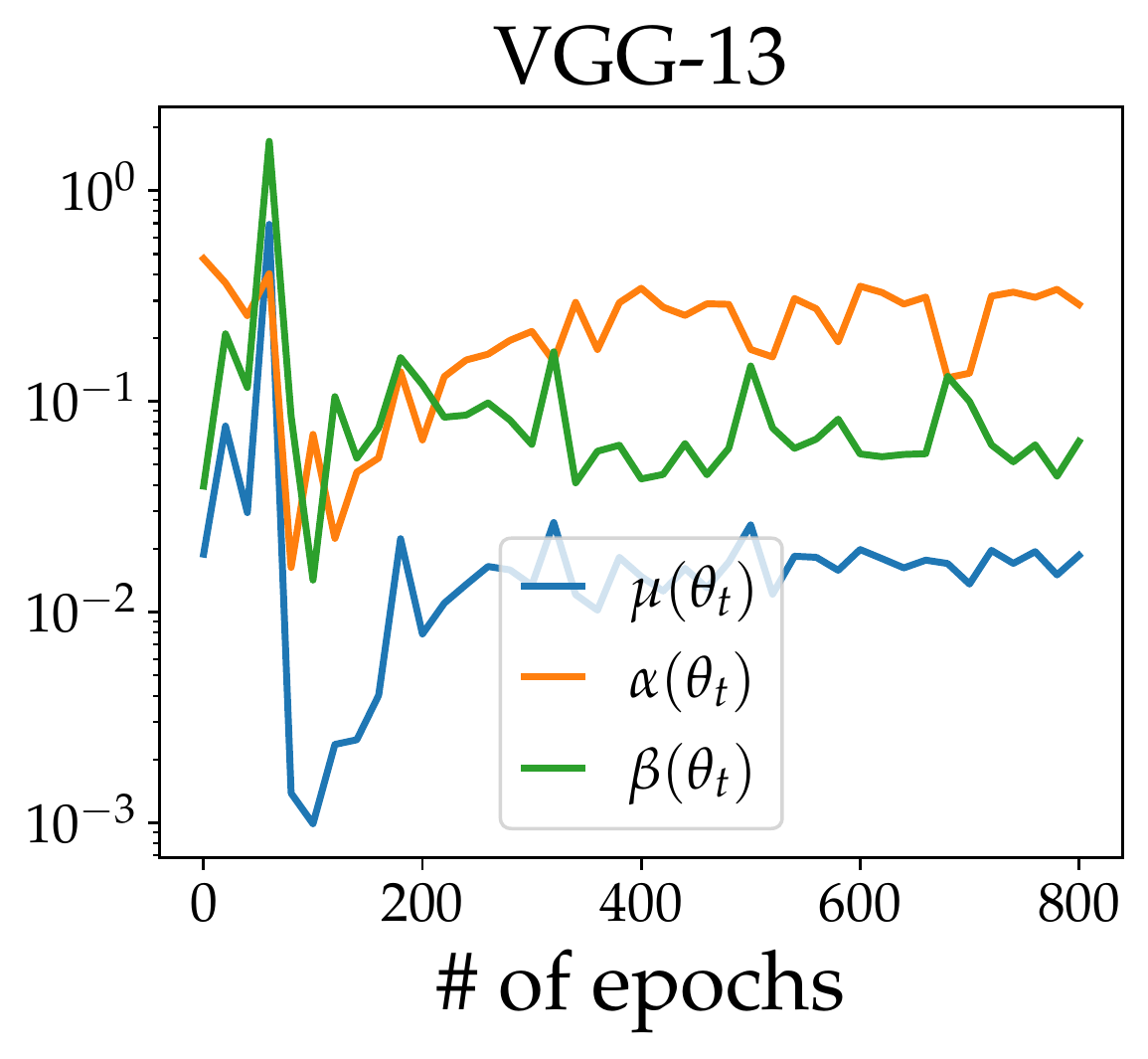}
  \includegraphics[width=0.25\textwidth]{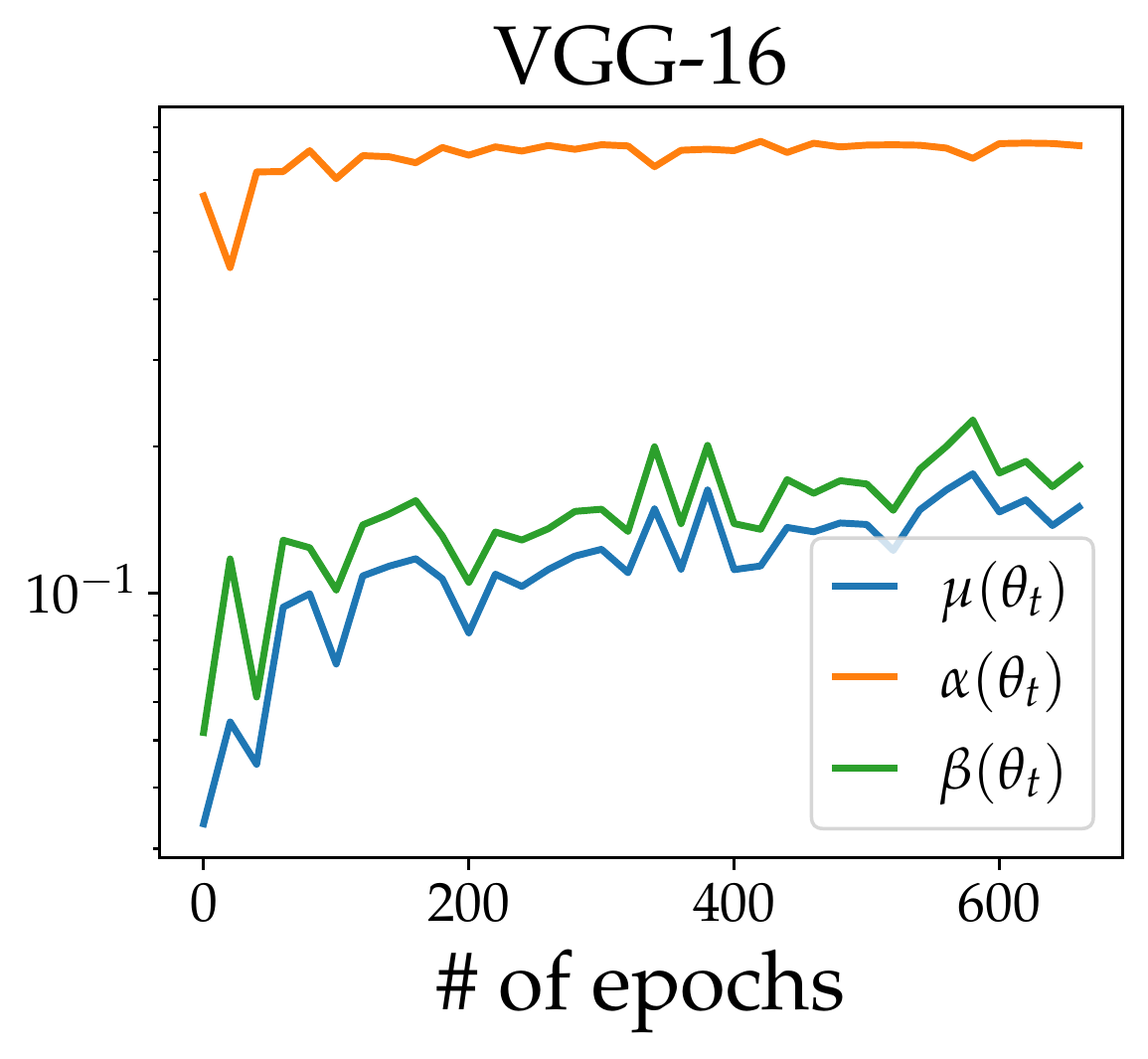}
\vspace*{-.5em}
\caption{\small \text{The alignment factors during the SGD training for classifying full CIFAR-10 dataset with VGG nets and ResNets .} }
\label{fig: cifar10-large-app}
\end{figure}
\begin{figure}[!h]
\centering
\includegraphics[width=0.25\textwidth]{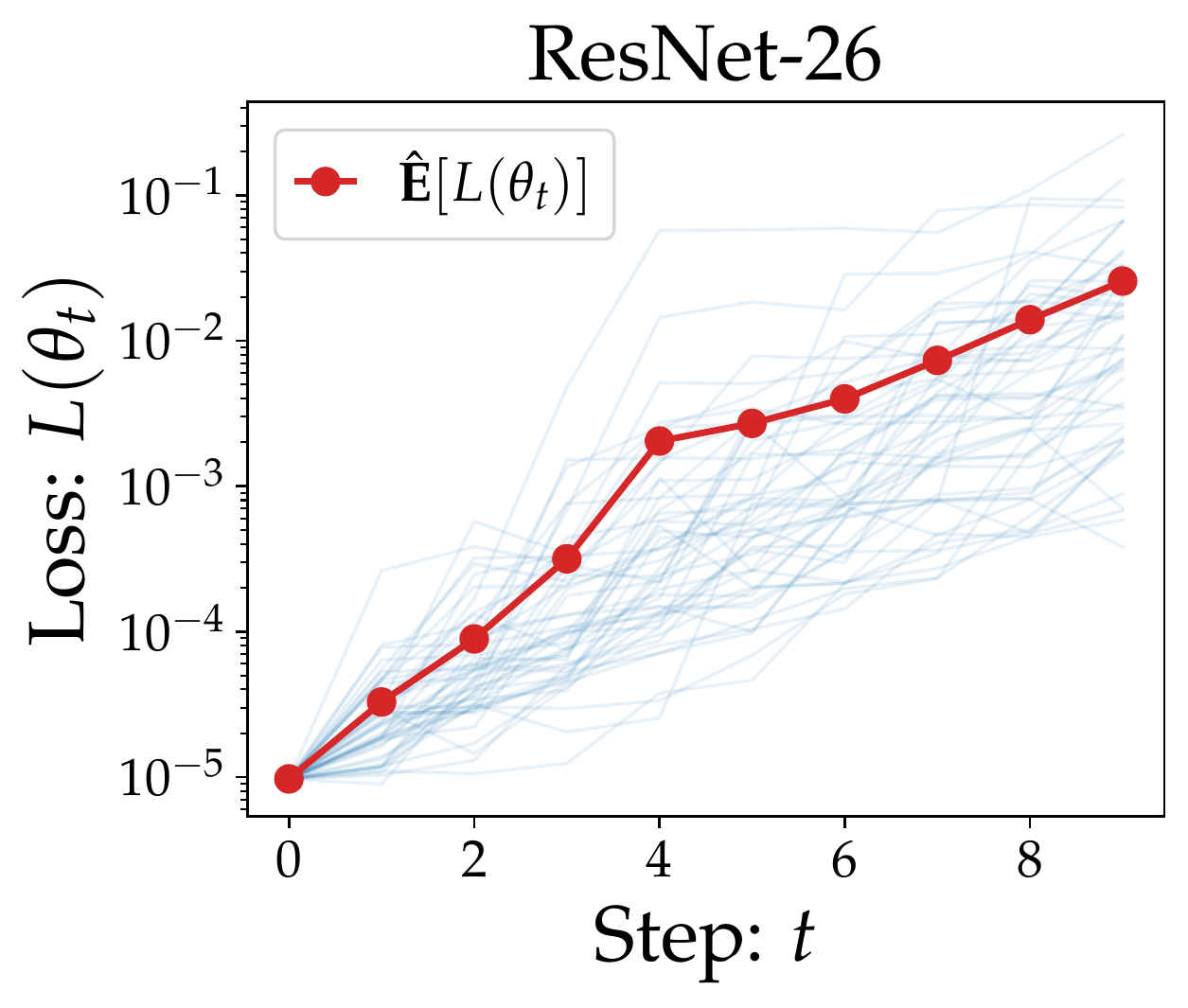}
\includegraphics[width=0.25\textwidth]{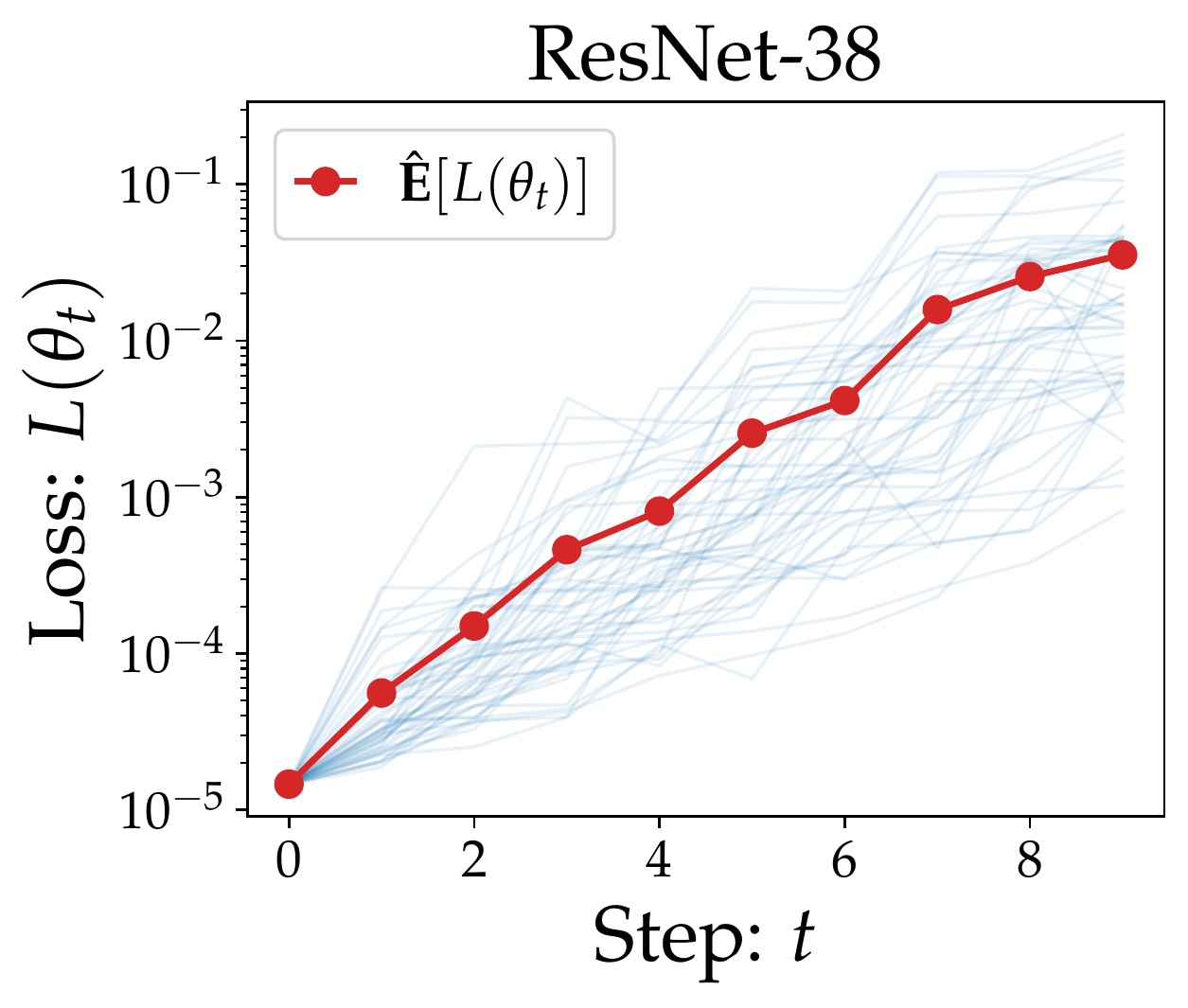}
\includegraphics[width=0.25\textwidth]{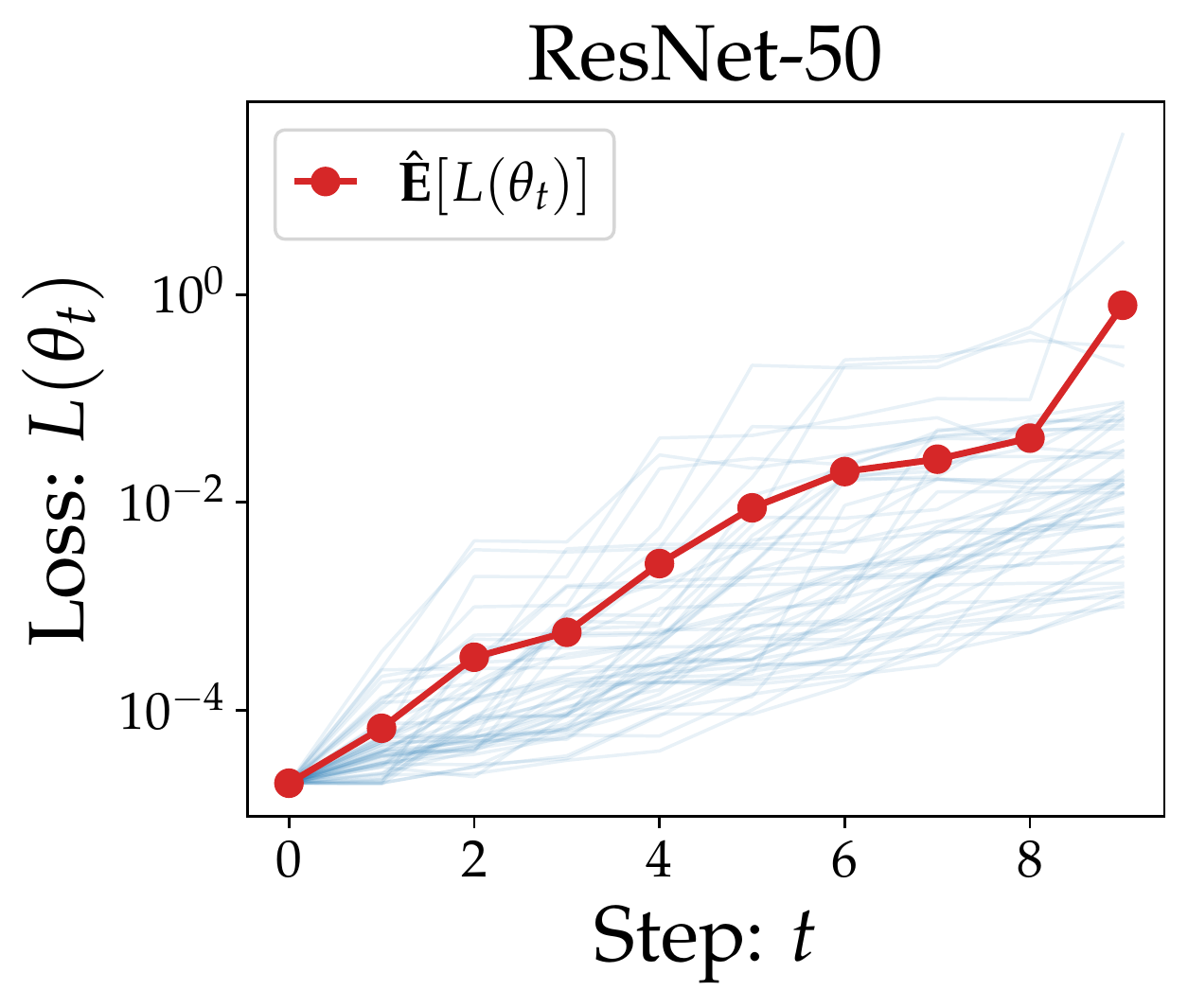}

\includegraphics[width=0.25\textwidth]{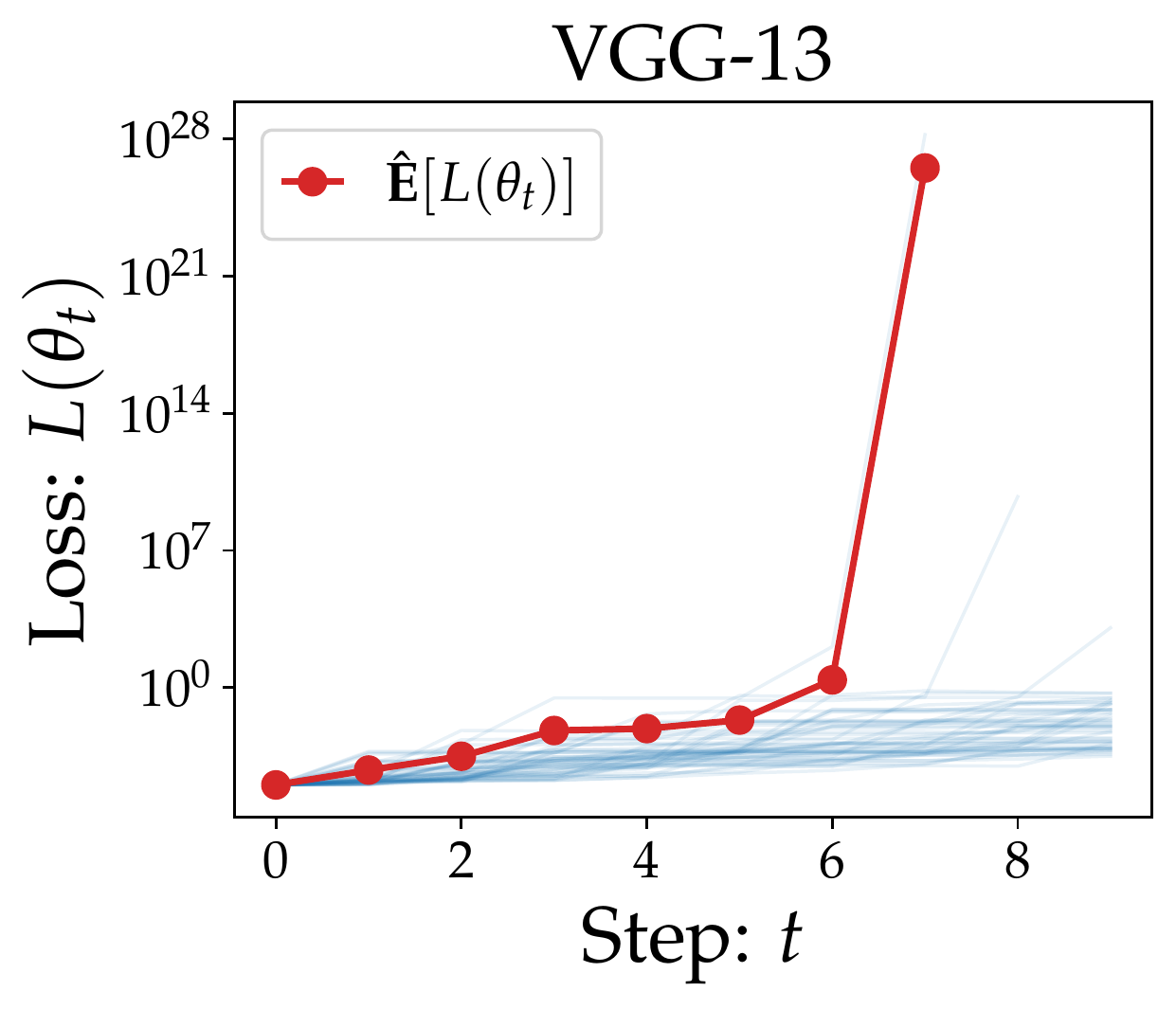}
\includegraphics[width=0.25\textwidth]{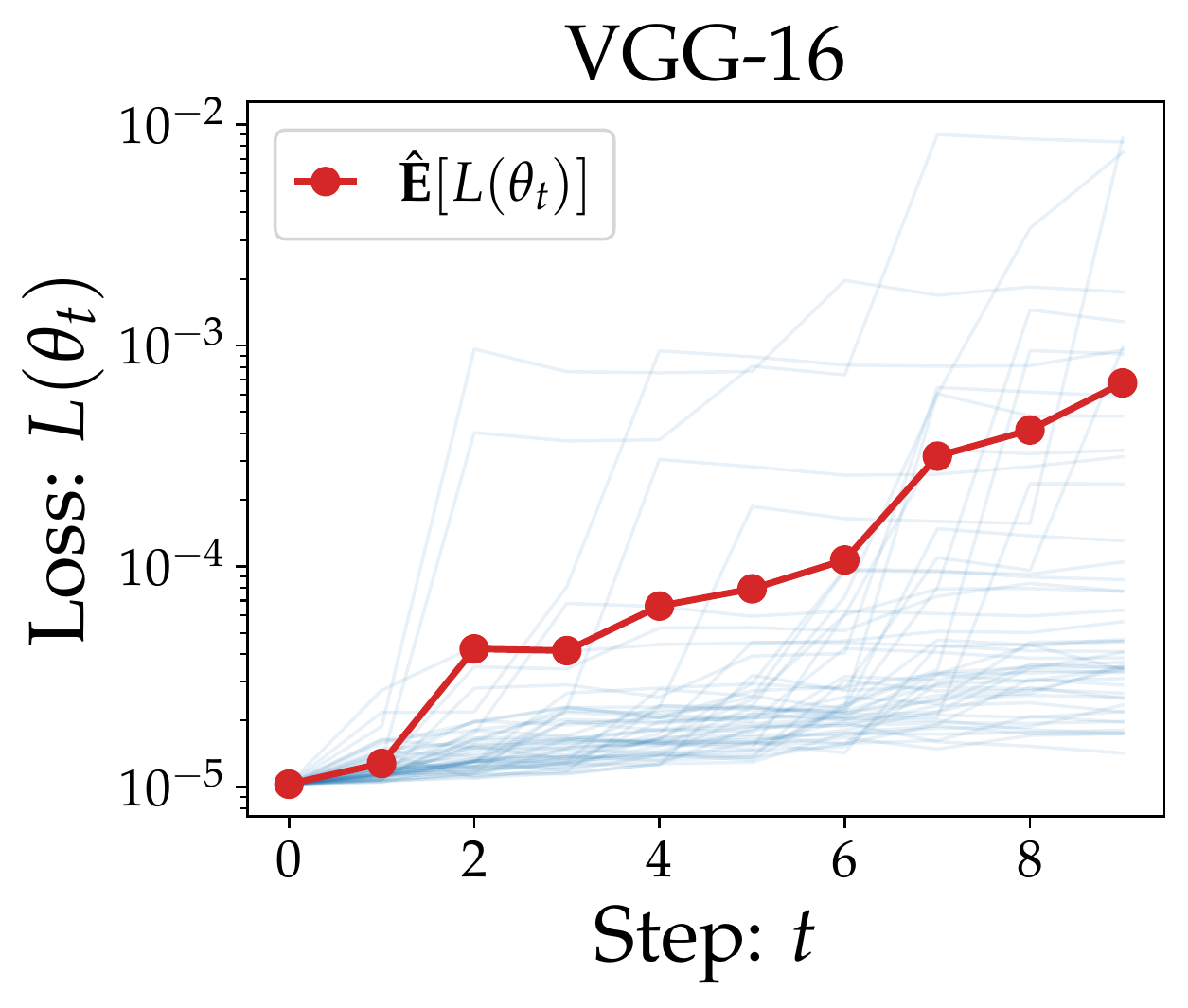}
\vspace*{-.6em}
\caption{\textbf{The exponentially fast escape from sharp minima for training CIFAR-10 dataset.}
The blue curves are  $50$ trajectories of SGD; the red curve  corresponds to the average. The sharp minimum is found by SGD with $B=64$ and $\eta=0.1$. When it nearly converges,  we switch to SGD with the same learning rate but a smaller batch size $B=4$. This choice ensures that  the escape is purely driven by SGD noise. 
}
\label{fig: large-scale-escape}
\end{figure}

However, it should be stressed that the loss-scaled alignment factors for these larger-scale models are clearly smaller than the ones reported in small-scale experiments. In particular, one can see that the values of $\mu(\theta)$ for various ResNets are only the order of $0.01$. To understand the underlying reason, 
we additionally examine the same models but for  classifying a two-class subset of CIFAR-10 with the results reported in Figure \ref{fig: cifar10-large-app-binary}. It is shown that in this case, the loss-scaled alignment factors are roughly on the order of $0.1$ for all the models, significantly larger than the case of classifying the full CIFAR-10 dataset. It is also very surprising to observe that for VGG nets, the standard alignment factor $\alpha(\theta)$'s  are nearly  $1$, suggesting that the noise covariance completely aligns with the geometry of local landscape.

\begin{figure}[!h]
\centering
  \includegraphics[width=0.24\textwidth]{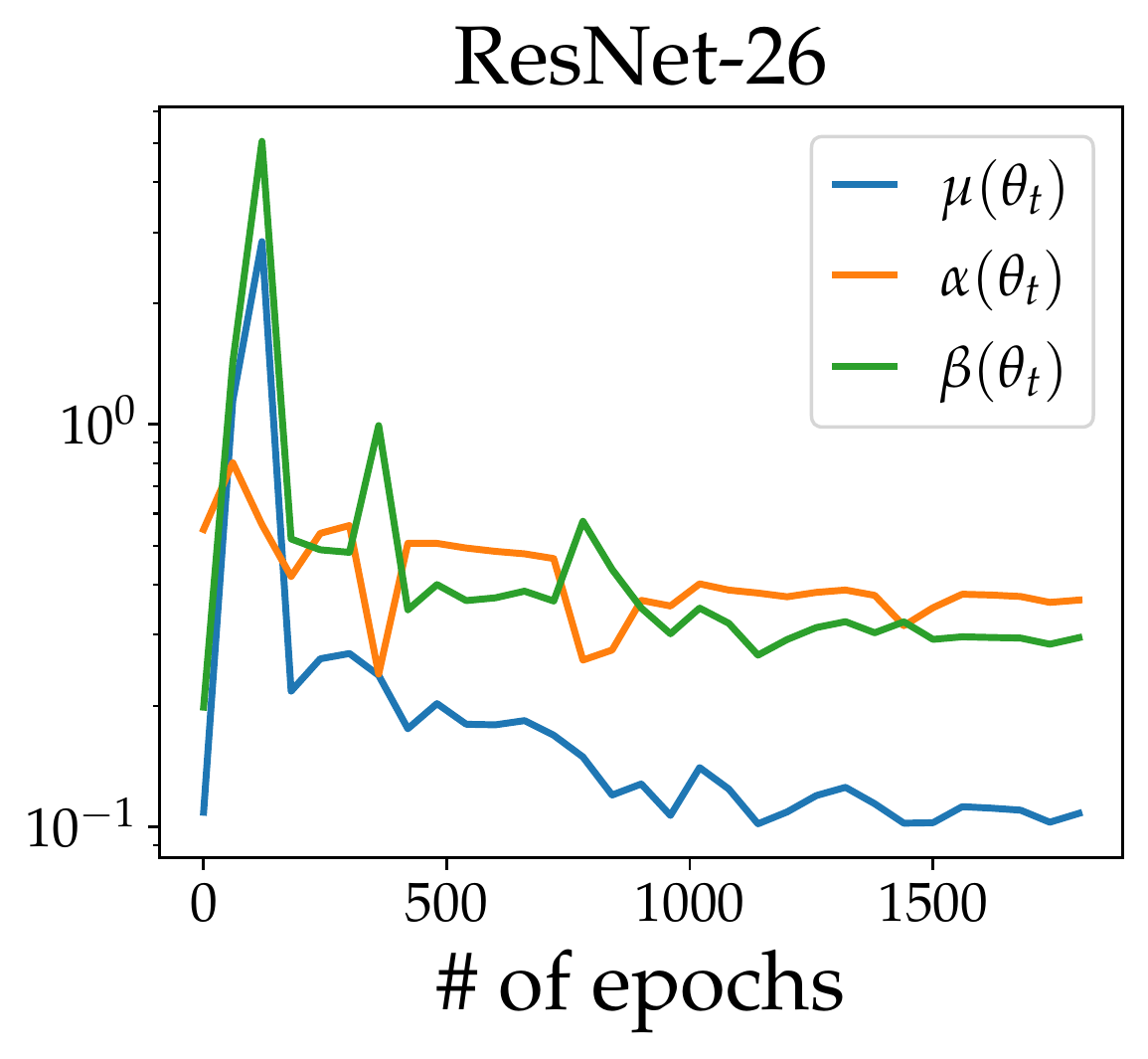}
  \includegraphics[width=0.24\textwidth]{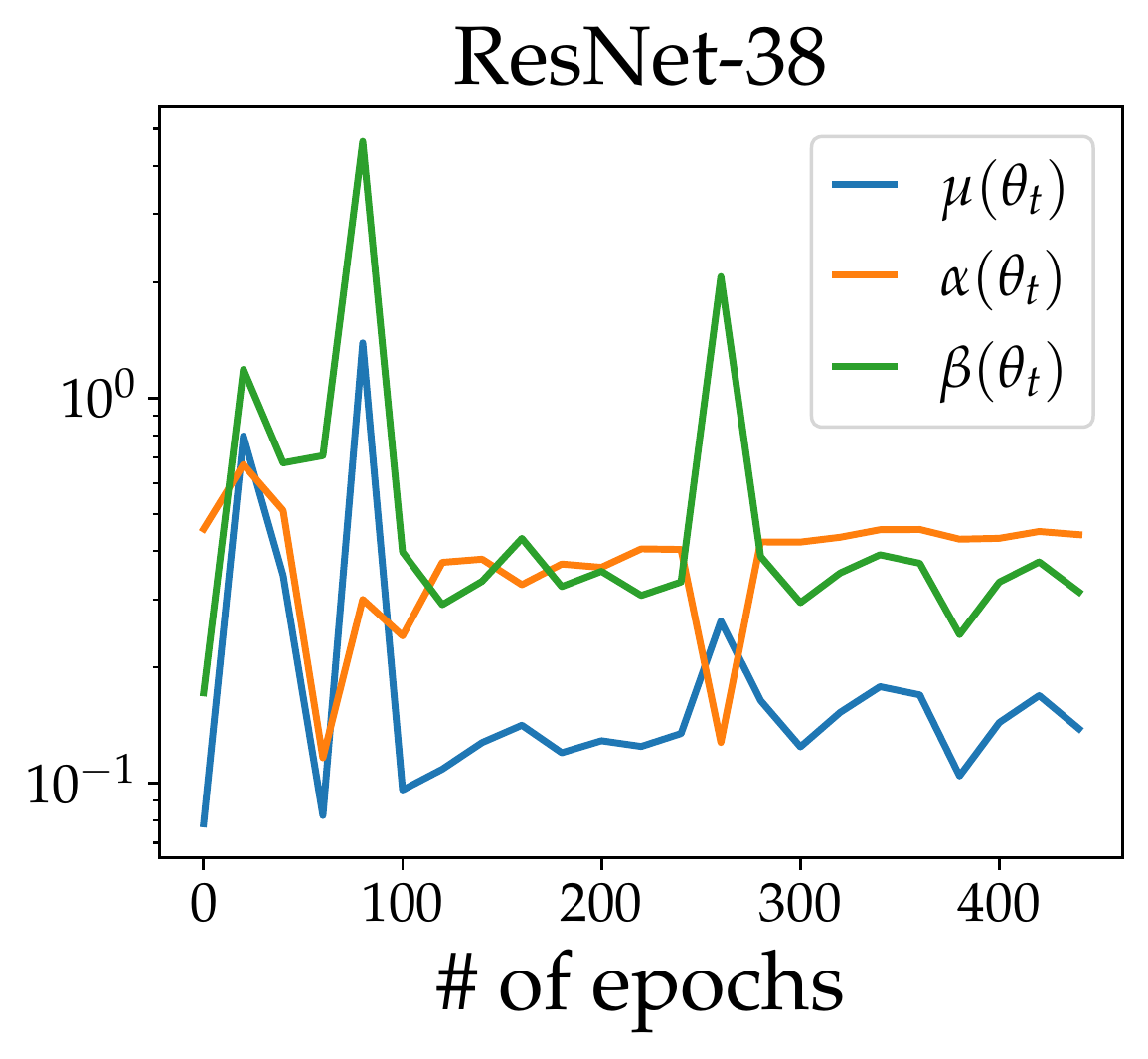}
  \includegraphics[width=0.24\textwidth]{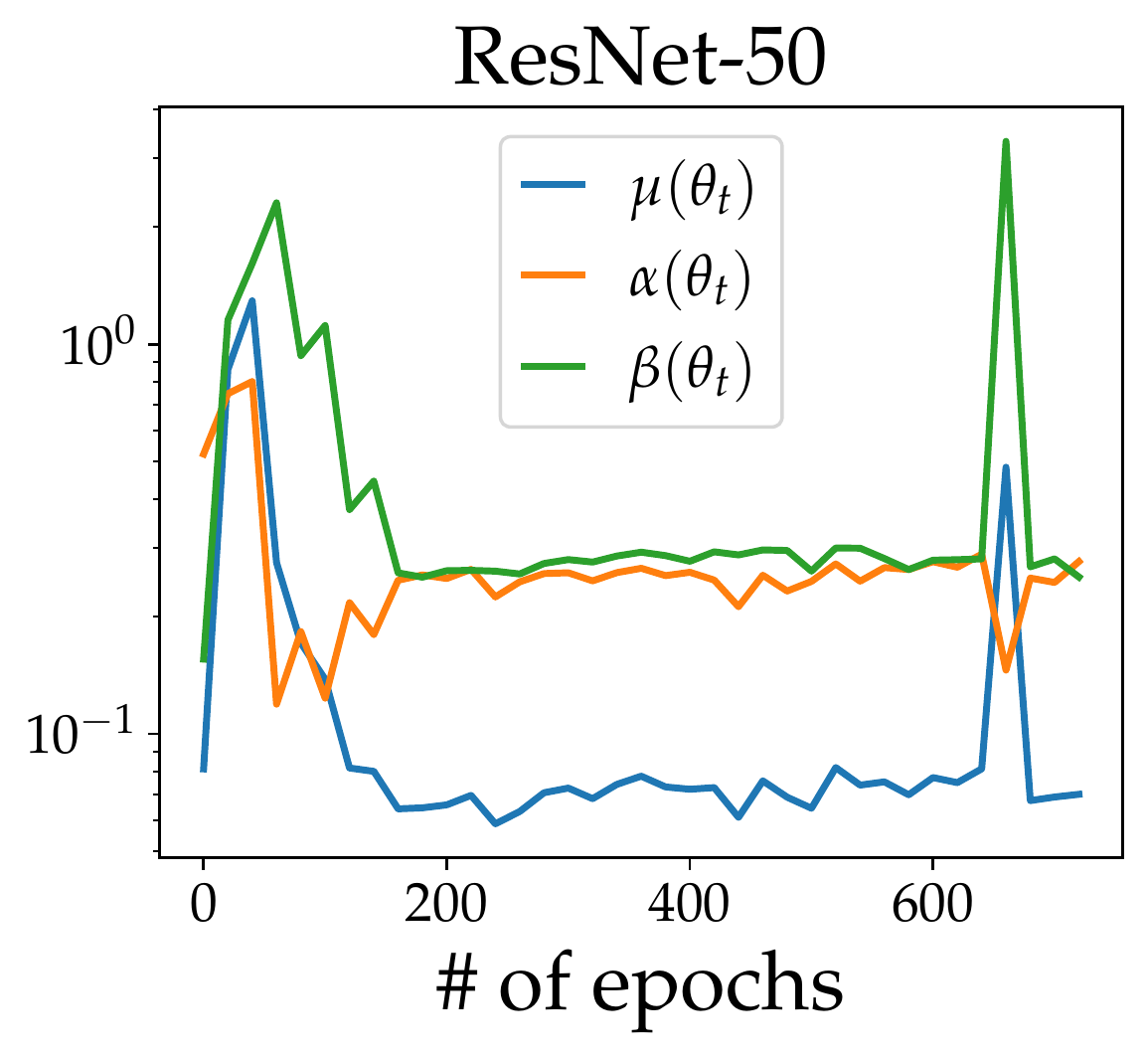}
  \includegraphics[width=0.24\textwidth]{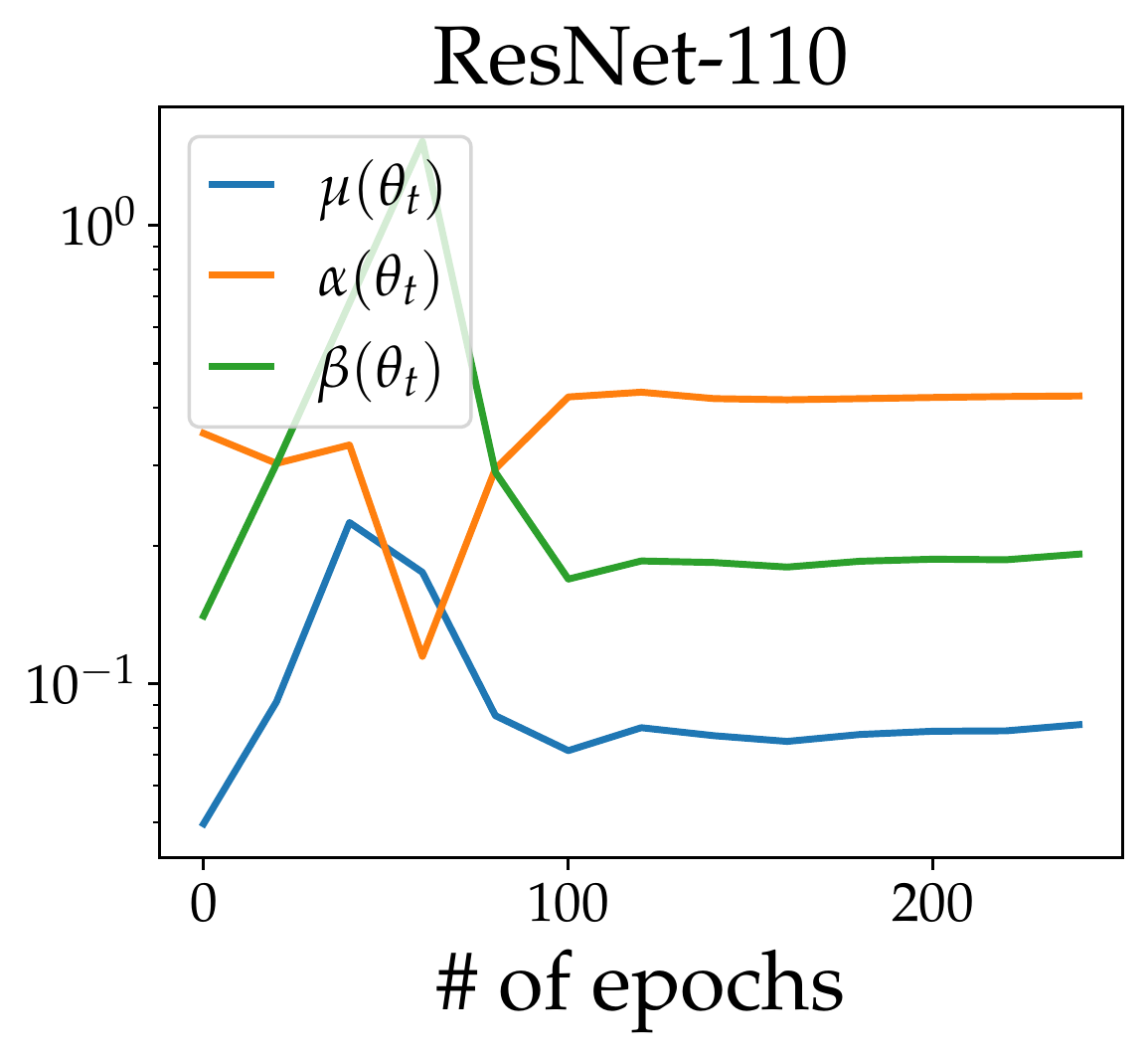}

  \includegraphics[width=0.25\textwidth]{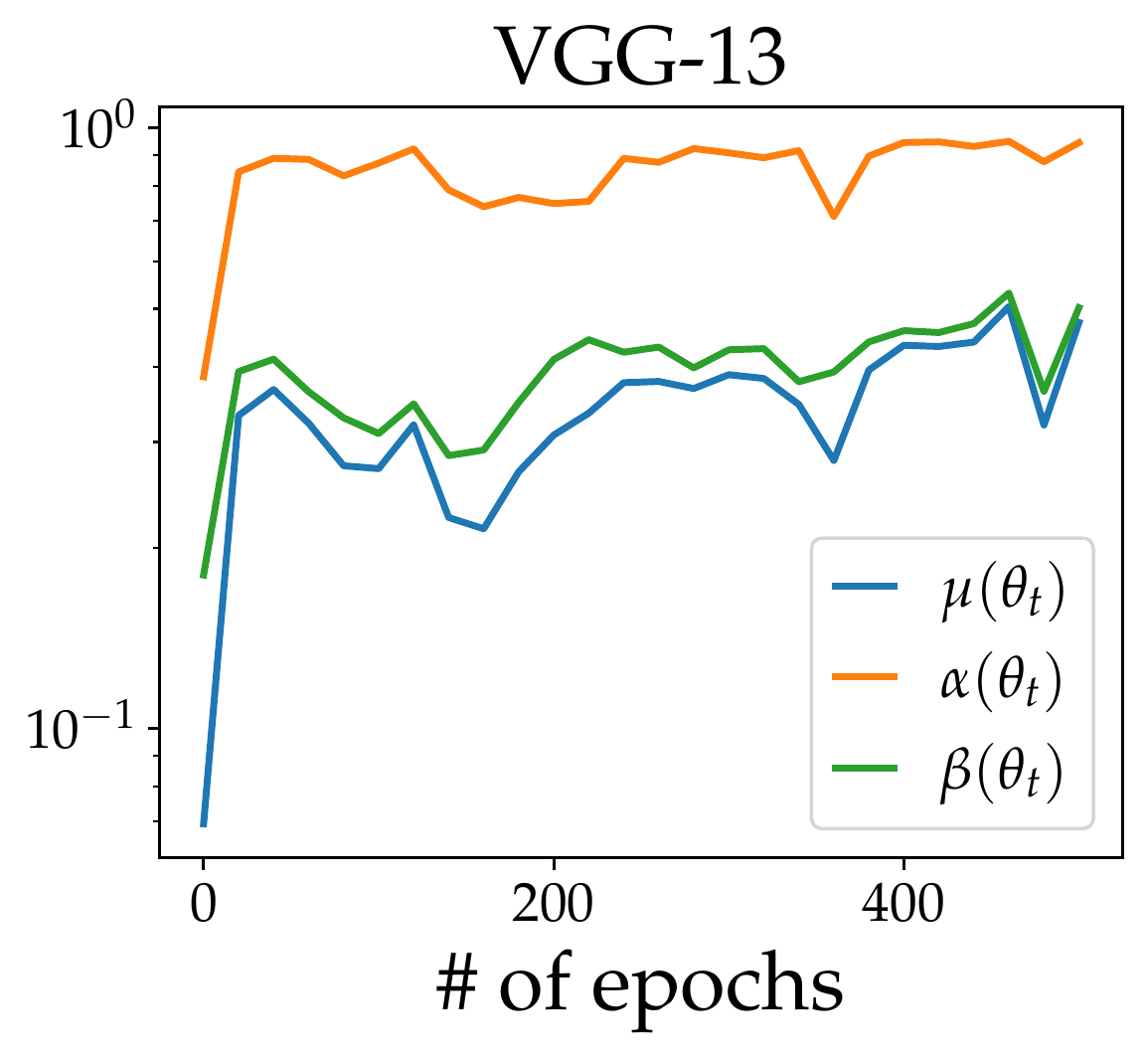}
  \includegraphics[width=0.25\textwidth]{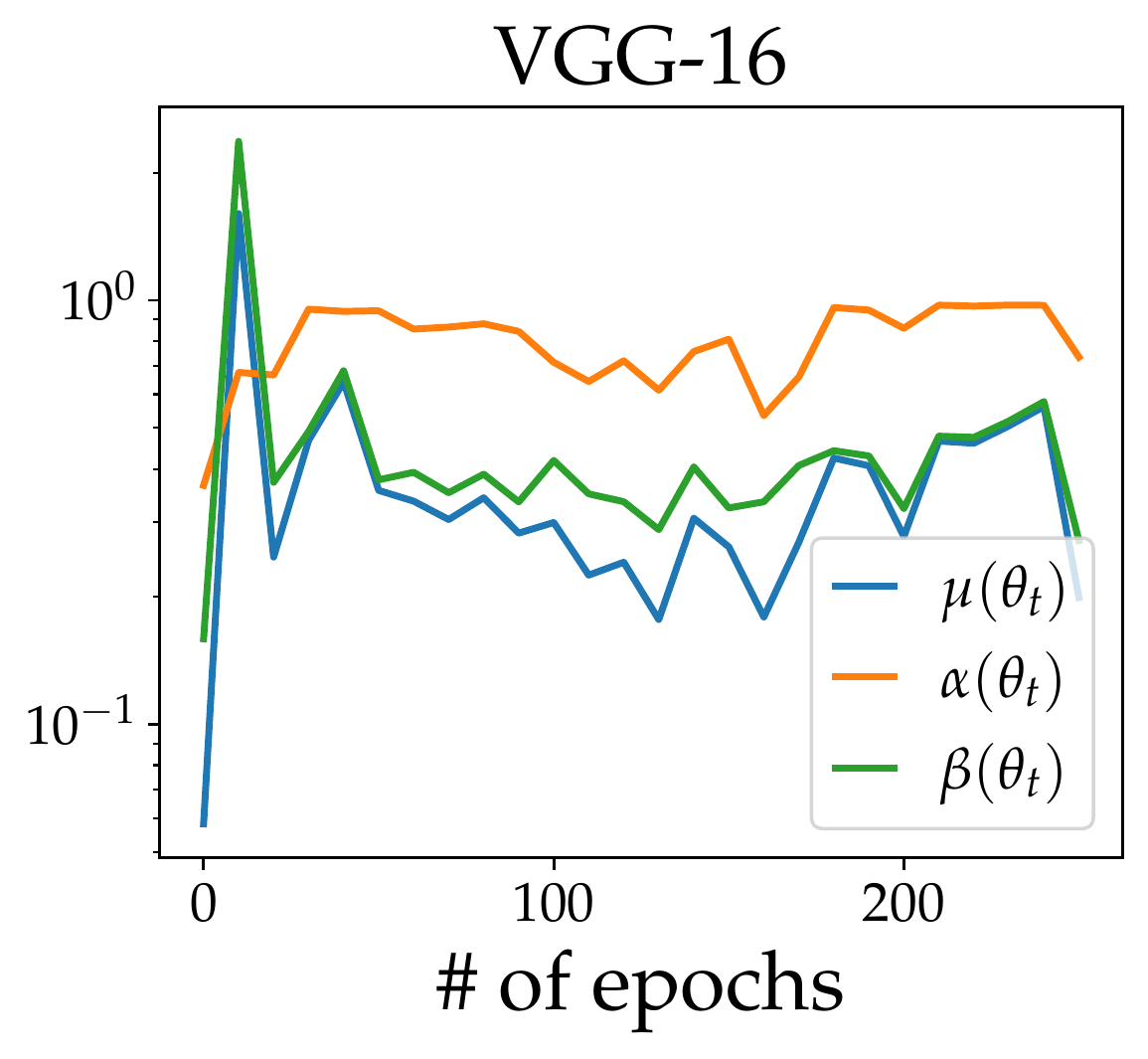}
  \includegraphics[width=0.25\textwidth]{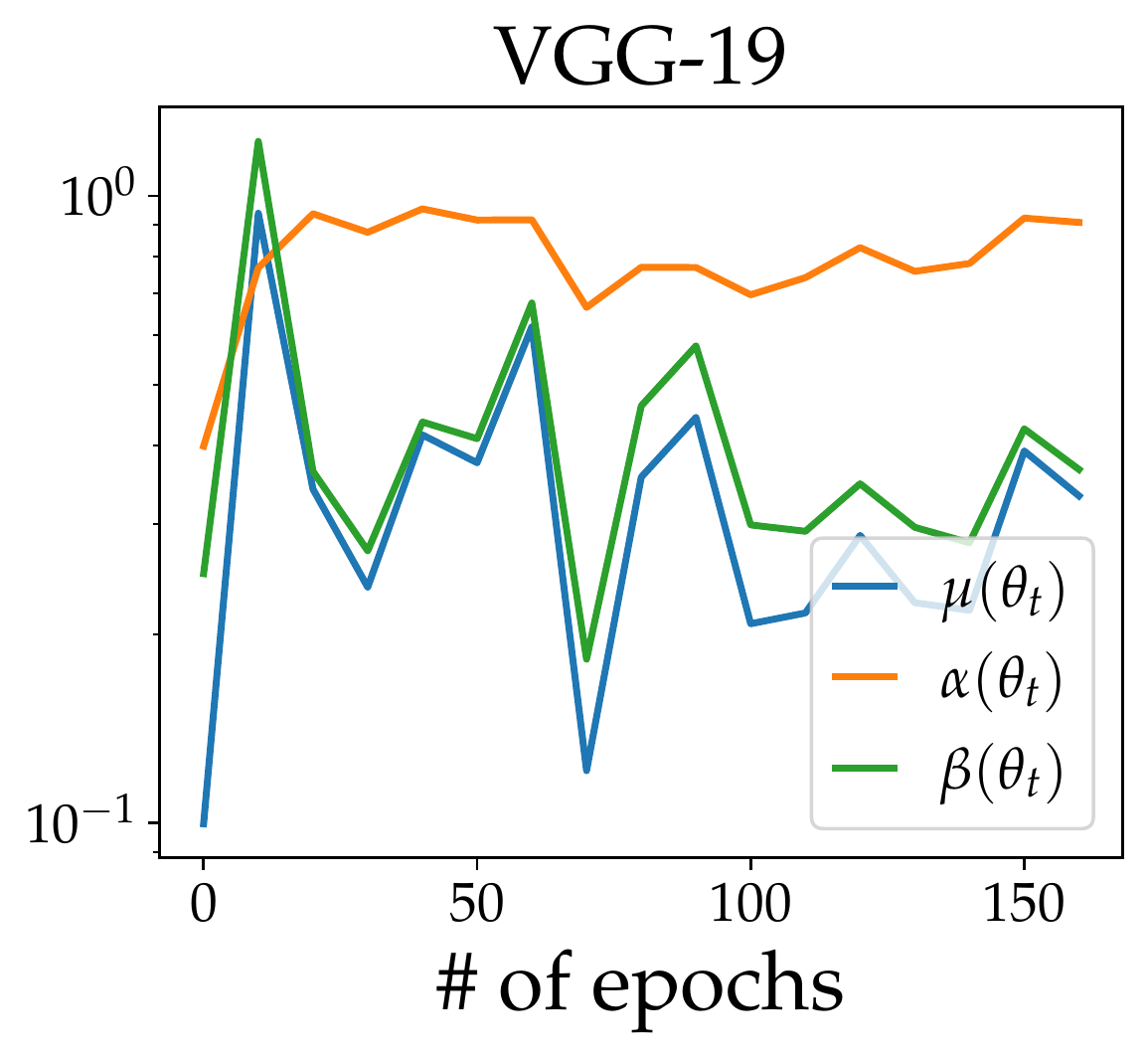}
\vspace*{-.5em}
\caption{\small \textbf{The alignment factors during the SGD training for classifying a two-class subset of CIFAR-10 with VGG nets and 
ResNets .}
In this experiment, we only pick the data of the class ``0'' and ``1'' from CIFAR-10 to train the model. It is shown that for this simpler problem, the alignment factors are significantly larger than the full-data case (see Figure  \ref{fig: large-scale-1} and \ref{fig: cifar10-large-app}) for all the models examined, regardless how over-parameterized the model is. In particular, we surprisingly observe that $\alpha(\theta)$'s are pretty close to $1$ for all VGG nets, suggesting a complete alignment between the noise covariance and Fisher matrix. We leave to the detailed analysis of this unreasonable alignment to future work. 
}
\label{fig: cifar10-large-app-binary}
\end{figure}

\section{Notations.} \label{sec: notation}
For a vector $v$, let $\|v\|_{p}=(\sum_i v_i^p)^{1/p}$ and $\hv=v/\|v\|_2$. When $p=2$, we omit the subscript for simplicity. For a matrix $A=(a_{i,j})$, denote by $\|A\|_F=(\sum_{i,j} a_{i,j}^2)^{1/2}$  the Frobenius norm. For a matrix or linear operator $A$, denote by $\{\lambda_i(A)\}_{i\geq 1}$ the eigenvalues of $A$ in a non-increasing order. Let $\SS^{d-1}=\{x\in\RR^d\,|\,\|x\|=1\}, r\SS^{d-1}=\{x\in \RR^d \,|\, \|x\|=r\}$, and  $\tau_{d-1}=\unif(\SS^{d-1})$. For any distribution $\rho$, let $\|f\|^2_{L_2(\rho)}=\EE_{x\sim\rho}[f^2(x)]$.
We use $X\lesssim Y$ to indicate $X\leq CY$ for an absolute  constant $C>0$ and $X\gtrsim Y$ is defined similarly. Moreover, we use $X\sim Y$ to mean $c Y\leq X\geq C Y$ for some absolute constants $c, C>0$.
We also use $C$ to denote an absolute constant, whose value may change from line to line. 
For a vector $a$ and a SPD matrix $W$, let $\|a\|_W=\sqrt{a^TWa}$.

\section{Proof of Proposition \ref{thm: over-para-linear-noise-covariance}}
\label{sec: proof-OLM}

We first need the following lemma.
\begin{lemma}\label{lemma: apd-linear}
Suppose $z\sim\cN(0, S)$. Then, $\EE_{z}[|v^Tz|^2zz^T]=2Svv^TS+\|S^{1/2}v\|^2 S$.
\end{lemma}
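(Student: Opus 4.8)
The plan is to reduce the claim to the standard fourth–moment identity for a standard Gaussian vector by whitening. Writing $z = S^{1/2}w$ with $w\sim\cN(0,I)$ and setting $u := S^{1/2}v$, we have $|v^Tz|^2 = |u^Tw|^2$ and $zz^T = S^{1/2}ww^TS^{1/2}$, so that
\[
\EE_z\!\left[|v^Tz|^2 zz^T\right] \;=\; S^{1/2}\,\EE_w\!\left[|u^Tw|^2\, ww^T\right]\,S^{1/2}.
\]
It therefore suffices to compute $M := \EE_w[|u^Tw|^2\, ww^T]$ for the standard Gaussian and then substitute back, using $S^{1/2}u = Sv$ and $\|u\|^2 = \|S^{1/2}v\|^2$.

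For the core computation I would expand entrywise, $M_{ij} = \sum_{k,\ell} u_k u_\ell\, \EE[w_kw_\ell w_iw_j]$, and apply Isserlis' (Wick's) theorem, $\EE[w_kw_\ell w_iw_j] = \delta_{k\ell}\delta_{ij} + \delta_{ki}\delta_{\ell j} + \delta_{kj}\delta_{\ell i}$. Summing the three contributions gives $M_{ij} = \|u\|^2\delta_{ij} + 2u_iu_j$, i.e. $M = \|u\|^2 I + 2uu^T$. (Equivalently, one can invoke the known identity $\EE[(w^TAw)ww^T] = \tr(A)\,I + 2A$ for symmetric $A$, applied with $A = uu^T$, or derive $M$ directly from Stein's lemma.) Plugging $M$ back in yields
\[
S^{1/2}\bigl(\|u\|^2 I + 2uu^T\bigr)S^{1/2} \;=\; \|S^{1/2}v\|^2\, S \;+\; 2\, Svv^TS,
\]
which is the asserted formula.

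There is no serious obstacle here; the only point requiring a little care is that $S$ may be singular, but the symmetric PSD square root $S^{1/2}$ still exists and the whitening step $z = S^{1/2}w$ with $w\sim\cN(0,I)$ remains valid (it produces a centered Gaussian of covariance $S$, and the identity depends only on the law of $z$). One should also note that nothing requires $v$ to lie in the range of $S$: all terms on both sides are well defined regardless, so the whitening argument needs no nondegeneracy hypothesis.
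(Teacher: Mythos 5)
Your argument is correct and follows essentially the same route as the paper: reduce to the standard Gaussian via a square-root substitution, compute $\EE[|u^Tw|^2 ww^T]=\|u\|^2 I + 2uu^T$ entrywise by Wick/Isserlis, and conjugate back by $S^{1/2}$. The one small improvement is that you whiten via $z=S^{1/2}w$ rather than $x=S^{-1/2}z$ as in the paper, which makes the argument valid for singular $S$ without any extra hypothesis.
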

\begin{proof}
First assume  $S=I_d$. 
Notice that 
\begin{align*}
\EE[|v^Tz|^2 z_iz_j] &= \EE_z[\sum_{k,l=1}^d v_k v_l z_k z_l z_i z_j] =\begin{cases}
2v_i v_j &  \text{if } i\neq j\\
2 v_i^2 + \sum_{k=1}^d v_k^2 & \text{if } i=j.
\end{cases}
\end{align*}
Therefore, 
\begin{equation}\label{eqn: apd-100}
\EE[|v^Tz|^2 zz^T]=2vv^T + \|v\|^2 I_d
\end{equation}

For general $S$, let $x=S^{-1/2}z$. Then $x\sim\cN(0,I_d)$.  Then we have 
\begin{align*}
\EE[|v^Tz|^2 zz^T] &= \EE[|v^TS^{1/2}x|S^{1/2}zz^{T}S^{1/2}]\\
&=S^{1/2}\EE[|v^TS^{1/2}x|zz^{T}]S^{1/2}\\
&=S^{1/2}\left(2S^{1/2}vv^TS^{1/2}+\|S^{1/2}v\|^2I_d\right)S^{1/2}\\
&=2Svv^TS + \|S^{1/2}v\|^2 S,
\end{align*}
where the third step follows from \eqref{eqn: apd-100}.

\end{proof}
Now we start to prove Proposition \ref{thm: over-para-linear-noise-covariance}.
Let $\nabla F: \RR^p \mapsto \RR^d$ be the Jacobian matrix of $F$. Then $\nabla f(x;\theta) = \nabla F(\theta)^T x$. 
Recall that we assume $y=f(x;\theta^*)$, i.e., there is no label noise. Thus, we have 
\begin{align*}
    \prisk(\theta) &= \half \EE_{x}[|(F(\theta)-F(\theta^*))^Tx|^2]=\half \|u(\theta)\|_S^2\\
    G(\theta) &= \EE_{x}[\nabla F(\theta)^T xx^T\nabla F(\theta)] = \nabla F(\theta)^T S\nabla F(\theta)\\
    H(\theta) &= G(\theta) + (F(\theta)-F(\theta^*))\nabla^2 F(\theta),
\end{align*}
where $u(\theta) = F(\theta) -  F(\theta^*)$. 
For the noise covariance, we  have 
\[
\Sigma_2(\theta) = \nabla \prisk(\theta) \nabla \prisk(\theta)^T = \nabla F(\theta)^T Su(\theta) u(\theta)^TS\nabla F(\theta),
\]
and
\begin{align*}
\Sigma_1(\theta)  &=\EE_{x}[ |F(\theta)^Tx-F(\theta^*)^Tx|^2 \nabla F(\theta)^Tx x^T\nabla F(\theta)]\\
&= \nabla F(\theta)^T \EE_{x}[ |(F(\theta)-F(\theta^*))^Tx|^2 x x^T]\nabla F(\theta)\\
&= \nabla F(\theta)^T (2Su(\theta)u(\theta)^TS+\|u(\theta)\|_S^2 S) \nabla F(\theta)\\
&= 2\nabla F(\theta)^T Su(\theta) u(\theta)^TS \nabla F(\theta) + \|u(\theta)\|_S^2\nabla F(\theta)^T S\nabla F(\theta)\\
&= 2 \nabla \prisk(\theta) \nabla \prisk(\theta)^T + 2\prisk(\theta) G(\theta),
\end{align*}
where the third step follows from Lemma \ref{lemma: apd-linear}. Consequently, $\mu_1(\theta)\geq \mu(\theta)\geq 1$.
\qed

\section{Proof of Lemma \ref{lemma: gen-feature-based-model}}
\label{sec: proof-lemma-feature-based-model}

Recall the definition of  feature-based model: $f(x;\theta)=\sum_{j=1}^m \theta_j\varphi_j(x)=\langle \theta, \Phi(x)\rangle$. Here $\Phi(x)\in\RR^m$ denotes the feature of $x$. In this case, the Hessian and Fisher matrix are both constant but the SGD noise  is still state-dependent. Let $g_i=\Phi(x_i)$ be the  sample feature of $x_i$. Then,
\[ 
\Sigma(\theta) = \frac{2}{n}\sum_{i=1}^n L_i(\theta) g_i g_i^T-\nabla L(\theta) \nabla L(\theta)^T, \quad G(\theta)=H(\theta) =\fn\sumin g_ig_i^T.
\]

\begin{proof}
\vspace*{-.5em}
Noticing $\bchi=\fn\sumin g_i^TGg_i=\|G\|_F^2$, then
\begin{align*}
\tr(\Sigma_1(\theta)G)&=\fn\sumin |g_i^T\theta|^2 \tr(g_ig_i^TG)\\
&=\fn\sumin |g_i^T\theta|^2 \chi_i\geq \frac{\gamma \bchi}{n} \sumin |g_i^T\theta|^2=2\gamma L(\theta)\|G\|_F^2. 
\end{align*}
Thus  $\mu_1(\theta)\geq \gamma$.
In addition, 
\begin{align*}
\mu_2(\theta) &= \frac{\nabla \erisk(\theta)^T G\nabla \erisk(\theta)}{2\erisk(\theta)\|G\|_F^2} \\
&= \frac{\theta^TG^3\theta}{\theta^TG\theta \|G\|_F^2}\leq \frac{\lambda_1^2(G)}{\|G\|_F^2},
\end{align*}
where the last step follows from Lemma \ref{lemma: appendix-matrix-lemma}.
\vspace*{-.5em}
\end{proof}

\section{Proof of Proposition \ref{pro: random-relu}}
\label{sec: proof-rfm-alignemt}

\subsection{Rademacher complexity}
\label{sec: app-radmacher}
We shall use the Rademacher complexity to bound the difference between empirical quantities and  the corresponding population ones. 
\begin{definition}[Rademacher complexity]
Let $\cF$ be a set of functions. The (empirical) Rademacher complexity of $\cF$  is defined as 
$
\erad(\cF) = \EE_{\xi}[\sup_{f\in \cF}\frac{1}{n} \sum_{i=1}^n\xi_i f(z_i)],
$
where $\{\xi_i\}_{i=1}^n$ are \iid random variables satisfying   $\PP(\xi_i=+1)=\PP(\xi_i=-1)=\frac{1}{2}$.  
\end{definition}
In particular, the following classic result will be frequently used, which is a restatement of \cite[Theorem 26.5]{shalev2014understanding}.
\begin{theorem}\label{thm: gen-err-rademacher-complexity}
Consider a function class $\cF$  and assume $|f|\leq B$. Then~for any $\delta\in (0,1)$,  \wp at least $1-\delta$ over the choice of $(z_1,z_2,\dots,z_n)$, we have,
\[
    \sup_{f\in\cF}|\frac{1}{n}\sum_{i=1}^n f(z_i) - \EE_{z}[f(z)]| \leq 4 \erad(\cF) + B\sqrt{\frac{2\ln(2/\delta)}{n}}.
\]
\end{theorem}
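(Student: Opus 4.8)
The plan is to establish Theorem~\ref{thm: gen-err-rademacher-complexity} by the standard two-step argument: first a symmetrization bound that replaces the deviation $\sup_{f\in\cF}|\frac1n\sum_i f(z_i)-\EE_z f(z)|$ by (twice) the Rademacher complexity up to a concentration term, and second a bounded-differences (McDiarmid) concentration of the supremum around its expectation. Concretely, write $g(z_1,\dots,z_n)=\sup_{f\in\cF}|\frac1n\sum_{i=1}^n f(z_i)-\EE_z f(z)|$. Since $|f|\le B$, changing one coordinate $z_i$ changes any empirical average by at most $2B/n$, hence $g$ has bounded differences with parameter $2B/n$ in each coordinate. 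McDiarmid's inequality then gives, \wp at least $1-\delta$, $g\le \EE[g] + B\sqrt{2\ln(2/\delta)/n}$ — note the factor $2$ inside the log comes from also applying McDiarmid on the other side / being slightly generous, matching the statement, so I would just cite it as in \cite[Theorem 26.5]{shalev2014understanding}. This reduces the problem to bounding $\EE[g]$.

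For the expectation bound, I would introduce a ghost sample $z_1',\dots,z_n'$ i.i.d.\ and independent of the $z_i$, write $\EE_z f(z)=\EE_{z'}[\frac1n\sum_i f(z_i')]$, pull the expectation outside the supremum via Jensen, and obtain
\begin{equation*}
\EE[g]\le \EE_{z,z'}\Big[\sup_{f\in\cF}\Big|\frac1n\sum_{i=1}^n\big(f(z_i)-f(z_i')\big)\Big|\Big].
\end{equation*}
Because $z_i$ and $z_i'$ are exchangeable, inserting i.i.d.\ Rademacher signs $\xi_i$ does not change the distribution of $f(z_i)-f(z_i')$, so this equals $\EE_{z,z',\xi}[\sup_f|\frac1n\sum_i\xi_i(f(z_i)-f(z_i'))|]$; splitting the two terms with the triangle inequality and using that $\xi_i$ and $-\xi_i$ have the same law bounds this by $2\,\EE_{z,\xi}[\sup_f|\frac1n\sum_i\xi_i f(z_i)|]$. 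A final remark: the definition of $\erad(\cF)$ in the paper has no absolute value inside, so I would note that $\EE_\xi[\sup_f|\frac1n\sum_i\xi_i f(z_i)|]\le 2\,\erad(\cF)$ when $\cF$ is, say, symmetric or contains $0$, or more simply absorb the discrepancy into the constant; this is exactly why the stated bound carries the constant $4$ rather than $2$ in front of $\erad(\cF)$. Assembling the symmetrization bound with the McDiarmid step yields
\begin{equation*}
\sup_{f\in\cF}\Big|\frac1n\sum_{i=1}^n f(z_i)-\EE_z f(z)\Big|\le 4\,\erad(\cF)+B\sqrt{\frac{2\ln(2/\delta)}{n}},
\end{equation*}
which is the claim.

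There is no real obstacle here — this is a textbook result and the cleanest route is simply to cite \cite[Theorem 26.5]{shalev2014understanding} verbatim, which is presumably what the paper intends since it calls the statement ``a restatement'' of that theorem. If a self-contained proof is wanted, the only mild subtlety worth flagging is the bookkeeping of constants: the absolute value inside the supremum (needed because we want a two-sided bound on the deviation) forces either an extra symmetrization of the $\pm$ sign or a doubling of the Rademacher term, and tracking the $\ln(2/\delta)$ versus $\ln(1/\delta)$ in McDiarmid accounts for the remaining slack. I would therefore present the proof at the level of "by symmetrization and McDiarmid's bounded-differences inequality, as in \cite[Theorem 26.5]{shalev2014understanding}" and not belabor the constants, since nothing downstream in the RFM alignment argument depends on them.
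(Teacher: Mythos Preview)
Your proposal is correct and matches the paper's approach: the paper does not prove this theorem at all but simply states it as ``a restatement of \cite[Theorem 26.5]{shalev2014understanding}'' and uses it as a black box. Your sketch of the symmetrization-plus-McDiarmid argument is the standard textbook proof and is fine if a self-contained version is desired, but as you correctly anticipated, a bare citation is exactly what the paper does.
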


\begin{lemma}[Contraction property \cite{ledoux2013probability}]\label{lemma: contraction}
Let $\phi:\RR\mapsto\RR$  be $\beta$-Lispchitz continuous and $\phi\circ\cF=\{\phi\circ f: f\in\cF\}$. Then, 
$
    \erad(\phi\circ\cF)\leq \beta \, \erad(\cF).
$
\end{lemma}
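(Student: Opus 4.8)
The plan is to run the classical Ledoux--Talagrand contraction argument, which proceeds by a ``peeling'' over coordinates. First I would normalize: replacing $\phi$ by $\phi/\beta$ divides both sides of the claimed inequality by $\beta$, so it suffices to prove the case $\beta=1$, i.e. $\phi$ is $1$-Lipschitz. (If convenient one may also write $\phi=\phi_0+\phi(0)$ with $\phi_0(0)=0$; since $\phi(0)\cdot\frac1n\sum_i\xi_i$ does not depend on $f$ and has zero mean, it drops out of $\erad$. This reduction is not strictly needed for the one-sided bound, but it does no harm.)

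Next I would set up the peeling. Identify each $f\in\cF$ with the vector $(f(z_1),\dots,f(z_n))$, and for $k=0,1,\dots,n$ define
$$A_k \;=\; \EE_{\xi}\ \sup_{f\in\cF}\Big[\ \sum_{i=1}^{k}\xi_i\,\phi(f(z_i)) \;+\; \sum_{i=k+1}^{n}\xi_i\,f(z_i)\ \Big],$$
so that $A_0=n\,\erad(\cF)$ and $A_n=n\,\erad(\phi\circ\cF)$. The goal becomes $A_k\leq A_{k-1}$ for every $k$, which I would prove by freezing all the signs $(\xi_i)_{i\ne k}$ and comparing only the $k$-th summand: everything except coordinate $k$ is absorbed into an auxiliary function $h(f):=\sum_{i<k}\xi_i\phi(f(z_i))+\sum_{i>k}\xi_i f(z_i)$ of $f$.

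The core step is then the single-coordinate inequality: for any (bounded) function $h$ on $\cF$ and any $1$-Lipschitz $\psi:\RR\to\RR$, and a single Rademacher sign $\epsilon$,
$$\EE_{\epsilon}\ \sup_{f}\big[\epsilon\,\psi(f(z_k))+h(f)\big] \;\leq\; \EE_{\epsilon}\ \sup_{f}\big[\epsilon\,f(z_k)+h(f)\big].$$
Writing $\EE_{\epsilon}=\tfrac12(\cdot\,|_{\epsilon=1})+\tfrac12(\cdot\,|_{\epsilon=-1})$, the left-hand side equals $\tfrac12\sup_{f,f'}\big[\psi(f(z_k))-\psi(f'(z_k))+h(f)+h(f')\big]$, and similarly for the right-hand side with $\psi$ replaced by the identity. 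For each pair $(f,f')$ the Lipschitz property gives $\psi(f(z_k))-\psi(f'(z_k))\leq|f(z_k)-f'(z_k)|$, and a short case split on the sign of $f(z_k)-f'(z_k)$ (relabelling $f\leftrightarrow f'$ in the case $f(z_k)<f'(z_k)$) shows the left supremum is bounded by $\tfrac12\sup_{f,f'}\big[f(z_k)-f'(z_k)+h(f)+h(f')\big]$, the right-hand side. Applying this with $\epsilon=\xi_k$ yields $A_k\leq A_{k-1}$, and chaining the $n$ inequalities gives $\erad(\phi\circ\cF)\leq\erad(\cF)$; undoing the normalization restores the factor $\beta$.

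The hard part will be exactly that core single-coordinate step, and concretely getting the direction of the Lipschitz estimate right: the expression $\psi(f(z_k))-\psi(f'(z_k))+h(f)+h(f')$ is \emph{not} symmetric in $(f,f')$, so one cannot simply assume $f(z_k)\geq f'(z_k)$. One must genuinely handle both cases, and in the case $f(z_k)<f'(z_k)$ observe that the bound $\psi(f(z_k))-\psi(f'(z_k))\leq f'(z_k)-f(z_k)$, after swapping the roles of $f$ and $f'$ inside the supremum, again lands in the same target expression. Once this asymmetry is handled correctly, the remainder is routine bookkeeping over the $n$ coordinates.
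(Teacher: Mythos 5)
Your proof is correct, and it is precisely the classical Ledoux--Talagrand peeling argument that the cited reference [ledoux2013probability] uses; the paper itself states this lemma as a known result and gives no proof, so there is nothing to compare against beyond the reference. Your handling of the one subtle point — that the expression $\psi(f(z_k))-\psi(f'(z_k))+h(f)+h(f')$ is asymmetric in $(f,f')$, so the case $f(z_k)<f'(z_k)$ must be resolved by swapping the roles of $f$ and $f'$ before matching it to the target supremum — is exactly right, and your observation that the $\phi(0)$-centering is unnecessary for the one-sided bound (because constants cancel in $\psi(f(z_k))-\psi(f'(z_k))$) is also correct.
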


\begin{lemma}\label{lemma: rad-linear-class}
Let $\cF=\{u^Tx: u\in\SS^{d-1}\}$ be the linear class.  Then 
$
    \erad(\cF)\leq \sqrt{\frac{\sum_{i=1}^n \|x_i\|^2}{n^2}}.
$
\end{lemma}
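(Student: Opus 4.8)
The plan is the textbook two-step argument: first reduce the supremum over the sphere to a norm, then apply Jensen's inequality and compute a second moment. Concretely, starting from the definition,
\[
\erad(\cF) = \EE_{\xi}\Bigl[\sup_{u\in\SS^{d-1}} \frac{1}{n}\sum_{i=1}^n \xi_i\, u^T x_i\Bigr]
= \frac{1}{n}\,\EE_{\xi}\Bigl[\sup_{u\in\SS^{d-1}} u^T\Bigl(\sum_{i=1}^n \xi_i x_i\Bigr)\Bigr].
\]
The inner supremum of a linear functional over the unit Euclidean sphere equals the norm of its defining vector, so $\sup_{u\in\SS^{d-1}} u^T v = \|v\|$ with $v = \sum_i \xi_i x_i$. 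This gives $\erad(\cF) = \frac{1}{n}\EE_{\xi}\bigl[\|\sum_{i=1}^n \xi_i x_i\|\bigr]$.

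Next I would apply Jensen's inequality to the concave function $t\mapsto\sqrt{t}$, yielding
\[
\EE_{\xi}\Bigl[\Bigl\|\sum_{i=1}^n \xi_i x_i\Bigr\|\Bigr]
\leq \sqrt{\EE_{\xi}\Bigl[\Bigl\|\sum_{i=1}^n \xi_i x_i\Bigr\|^2\Bigr]}.
\]
Expanding the squared norm and using that the $\xi_i$ are independent, mean-zero, and satisfy $\EE[\xi_i\xi_j]=\mathbbm{1}\{i=j\}$, I get $\EE_{\xi}[\|\sum_i \xi_i x_i\|^2] = \sum_{i,j} \EE[\xi_i\xi_j]\, x_i^T x_j = \sum_{i=1}^n \|x_i\|^2$. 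Combining the three displays gives $\erad(\cF) \leq \frac{1}{n}\sqrt{\sum_{i=1}^n \|x_i\|^2} = \sqrt{\frac{\sum_{i=1}^n \|x_i\|^2}{n^2}}$, as claimed.

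There is no real obstacle here — every step is a one-line standard manipulation. The only points worth stating carefully are (i) that the supremum over $\SS^{d-1}$ is attained and equals the Euclidean norm (so the index set being the sphere rather than the ball is harmless, since the optimum lies on the boundary anyway), and (ii) the direction of Jensen's inequality, which is the correct one because $\sqrt{\cdot}$ is concave. If one prefers to avoid Jensen entirely, an alternative is to invoke Khintchine-type bounds, but Jensen is cleaner and gives exactly the stated constant.
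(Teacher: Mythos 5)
Your proof is correct and is exactly the standard argument the paper has in mind: it does not prove Lemma~\ref{lemma: rad-linear-class} explicitly (it refers to \cite{shalev2014understanding}), but its proof of the more general Lemma~\ref{lemma: rkhs} is word-for-word your argument carried out in an RKHS, and your lemma is the special case $\phi_x = x$, $H = \RR^d$. Your remark that the sphere versus the closed ball is immaterial (the linear supremum is attained on the boundary) is the only point where the two statements superficially differ, and you handle it correctly.
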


\begin{lemma}\label{lemma: rkhs}
Let $\phi: \cX\mapsto H$ be a feature map and $H$ be a Hilbert space. Define $k(x,y)=\<\phi_x,\phi_y\>_H$ and $\cH=\{f(x)=\langle \phi_x, h\rangle_H \, |\, \|h\|_H\leq 1\}$. Then, $\erad(\cH)\leq \sqrt{\sumin k(x_i,x_i)}/n$.
\end{lemma}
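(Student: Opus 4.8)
The plan is to unwind the definition of empirical Rademacher complexity and exploit the fact that $\cH$ is (the image under evaluation of) the unit ball of a Hilbert space, so the supremum becomes a Cauchy--Schwarz computation. First I would write, using bilinearity of $\<\cdot,\cdot\>_H$ to pull the Rademacher sum inside a single inner product,
\[
\erad(\cH) = \EE_\xi\Big[\sup_{\|h\|_H\leq 1}\frac 1n\sumin \xi_i \<\phi_{x_i}, h\>_H\Big] = \frac 1n \EE_\xi\Big[\sup_{\|h\|_H\leq 1}\Big\< \sumin \xi_i\phi_{x_i},\, h\Big\>_H\Big].
\]
For any fixed vector $v\in H$ one has $\sup_{\|h\|_H\leq 1}\<v,h\>_H = \|v\|_H$ (by Cauchy--Schwarz, with equality at $h=v/\|v\|_H$ when $v\neq 0$, and trivially otherwise), so applying this with $v=\sumin \xi_i\phi_{x_i}$ for each realization of $\xi$ gives $\erad(\cH) = \tfrac 1n\,\EE_\xi\big[\,\big\|\sumin \xi_i\phi_{x_i}\big\|_H\,\big]$.

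Next I would bound the expectation of the norm by the square root of the expectation of the squared norm, via Jensen's inequality applied to the concave map $t\mapsto\sqrt t$, and then expand the squared norm and use that the $\xi_i$ are independent, mean zero, and $\pm 1$-valued, so that $\EE_\xi[\xi_i\xi_j]=\delta_{ij}$:
\[
\EE_\xi\Big[\big\|\textstyle\sumin \xi_i\phi_{x_i}\big\|_H^2\Big] = \sum_{i,j=1}^n \EE_\xi[\xi_i\xi_j]\,\<\phi_{x_i},\phi_{x_j}\>_H = \sumin \<\phi_{x_i},\phi_{x_i}\>_H = \sumin k(x_i,x_i).
\]
Combining this with the previous step yields $\erad(\cH)\leq \tfrac 1n\sqrt{\sumin k(x_i,x_i)}$, which is exactly the claimed bound.

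There is essentially no hard step here; the argument is three short lines. The only points that merit a moment's care are (i) that the identification $\sup_{\|h\|_H\leq 1}\<v,h\>_H=\|v\|_H$ is valid even when $H$ is infinite dimensional --- this follows from Cauchy--Schwarz and does not require the supremum to be attained --- and (ii) that the interchange of $\EE_\xi$ and the supremum is legitimate, which holds because for each realization of $\xi$ the inner supremum equals the explicit finite quantity $\|\sumin\xi_i\phi_{x_i}\|_H$. I would present the proof in the three displayed steps above with no further elaboration.
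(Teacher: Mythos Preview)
Your proof is correct and follows essentially the same approach as the paper: linearity of the inner product to collapse the sum, $\sup_{\|h\|\leq 1}\<v,h\>=\|v\|$, Jensen to pass to the squared norm, and $\EE[\xi_i\xi_j]=\delta_{ij}$ to reduce to $\sumin k(x_i,x_i)$. Your write-up is slightly more careful about justifying the Cauchy--Schwarz step and the sup/expectation interchange, but otherwise it is the same three-line argument the paper gives.
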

\begin{proof}
By the definition, we have 
\begin{align*}
n \erad(\cH) &= \EE_{\xi}\sup_{\|h\|_H\leq 1} \sumin \xi_i\langle \phi_{x_i}, h\rangle_H\\
&= \EE_{\xi}\sup_{\|h\|_H\leq 1}\langle  \sumin \xi_i\phi_{x_i}, h\rangle_H= \EE_{\xi}\|\sumin \xi_i \phi_{x_i}\|\\
&\stackrel{(i)}{\leq}\sqrt{\EE_{\xi}\|\sumin \xi_i \phi_{x_i}\|^2} = \sqrt{\sumin \<\phi_{x_i},\phi_{x_i}\>_{H}} = \sqrt{\sumin k(x_i,x_i)},
\end{align*}
where $(i)$ follows from the Jensen's inequality.
\end{proof}

For the proof of the above classic lemmas, we refer to \cite{shalev2014understanding}. The following lemma concerns the Rademacher complexity of the product of two function classes.

\begin{lemma}\label{lemma: Rademacher-multiply-space}
Let $\cF$ and $\cG$ be two function classes. Suppose that $\sup_{f\in \cF}\|f\|_{\infty}\leq A$ and $\sup_{g\in \cG}\|g\|_\infty\leq B$. Define $\cF*\cG=\{f(x)g(x): \cX\mapsto\RR \,:\, f\in \cF, g\in \cG\}$. Then, 
$$
\erad(\cF*\cG)\leq (A+B)(\erad(\cF)+\erad(\cG)).
$$
\end{lemma}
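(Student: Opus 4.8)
# Proof proposal for Lemma~\ref{lemma: Rademacher-multiply-space}

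\textbf{The plan.} The identity I would exploit is the elementary factorization
$f(x)g(x) = \tfrac14\big[(f(x)+g(x))^2 - (f(x)-g(x))^2\big]$,
which expresses a product as a difference of squares of elements of the sum/difference classes. An even cleaner route, and the one I would actually pursue, is to write
$f(x)g(x) - f(x_0)g(x_0)$ in a telescoping fashion, but for Rademacher complexity (which is insensitive to additive constants only up to a centering term) the squaring trick is the most robust. So the first step is to reduce bounding $\erad(\cF*\cG)$ to bounding $\erad(\{h^2 : h \in \cF+\cG\})$ and $\erad(\{h^2 : h \in \cF-\cG\})$, where $\cF \pm \cG = \{f \pm g : f\in\cF, g\in\cG\}$. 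By subadditivity of $\erad$ over sums/differences of function classes (which follows directly from the definition and the triangle inequality inside the supremum), $\erad(\cF\pm\cG)\le \erad(\cF)+\erad(\cG)$.

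\textbf{Key steps in order.} (1) Note that every $h\in\cF\pm\cG$ satisfies $\|h\|_\infty \le A+B$. (2) The map $t\mapsto t^2$ restricted to $[-(A+B),A+B]$ is Lipschitz with constant $2(A+B)$; hence by the contraction property (Lemma~\ref{lemma: contraction}), $\erad(\{h^2: h\in\cF\pm\cG\}) \le 2(A+B)\,\erad(\cF\pm\cG) \le 2(A+B)(\erad(\cF)+\erad(\cG))$. (3) Combine via the factorization: for any realization of the $\xi_i$,
\[
\sup_{f,g}\frac1n\sum_i \xi_i f(z_i)g(z_i)
= \sup_{f,g}\frac1n\sum_i \xi_i \tfrac14\big[(f+g)^2 - (f-g)^2\big](z_i),
\]
and taking suprema separately over the two squared terms and using $\sup(a-b)\le \sup a + \sup(-b) = \sup a + \sup b$ (the last equality because the Rademacher sum is symmetric in $\xi$), one gets
$\erad(\cF*\cG) \le \tfrac14\big[\erad(\{h^2:h\in\cF+\cG\}) + \erad(\{h^2: h\in\cF-\cG\})\big]$. (4) Plug in the bound from step (2) for each term, obtaining $\erad(\cF*\cG) \le \tfrac14 \cdot 2 \cdot 2(A+B)(\erad(\cF)+\erad(\cG)) = (A+B)(\erad(\cF)+\erad(\cG))$, which is exactly the claim.

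\textbf{Main obstacle.} The only subtle point is step (3): the supremum of a difference is not the difference of suprema, so one must be careful that splitting $(f+g)^2 - (f-g)^2$ and taking $\sup$ of each piece is legitimate. This works because $\erad$ involves $\EE_\xi$ of a supremum, and $\sup_{f,g}\tfrac1n\sum_i\xi_i[A_i - B_i] \le \sup_{f,g}\tfrac1n\sum_i\xi_i A_i + \sup_{f,g}\tfrac1n\sum_i(-\xi_i) B_i$, after which the distributional symmetry $-\xi \stackrel{d}{=}\xi$ converts the second term into $\erad$ of the class $\{(f-g)^2\}$ with the correct sign. One should also double-check that the contraction lemma as stated applies to a class of functions that need not be centered and to the Lipschitz constant on the relevant bounded interval; both are standard. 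Everything else is routine bookkeeping, and no sharper constant is needed since the lemma only claims $(A+B)$.
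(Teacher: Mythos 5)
Your proof is correct and follows essentially the same route as the paper: the polarization identity $fg = \tfrac14[(f+g)^2-(f-g)^2]$, splitting the supremum of the difference (using symmetry of the Rademacher signs for the negated term), and then the contraction lemma applied to the square on the bounded interval. The only cosmetic difference is that you apply contraction to $t\mapsto t^2$ (Lipschitz constant $2(A+B)$) and carry the $\tfrac14$ outside, while the paper applies it to $t\mapsto t^2/4$ (Lipschitz constant $(A+B)/2$); the arithmetic is identical.
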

\begin{proof}
By the definition of Rademacher complexity, 
\begin{align*}
n \erad(\cF*\cG) &= \EE_{\xi}[\sup_{f\in \cF, g\in \cG} \sumin f(x_i)g(x_i)\xi_i]\\
&=\EE_{\xi}[\sup_{f\in \cF, g\in \cG} \sumin \frac{(f(x_i)+g(x_i))^2}{4}\xi_i - \sumin \frac{(f(x_i)-g(x_i))^2}{4}\xi_i]\\
&\leq \EE_{\xi}[\sup_{f\in \cF, g\in \cG} \sumin \frac{(f(x_i)+g(x_i))^2}{4}\xi_i +\EE_{\xi}[\sup_{f\in \cF, g\in \cG} \sumin \frac{(f(x_i)-g(x_i))^2}{4}\xi_i]\\
&\stackrel{(i)}{\leq} \frac{A+B}{2}\left( \EE_{\xi}[\sup_{f\in \cF, g\in \cG} \sumin (f(x_i)+g(x_i))\xi_i] +\EE_{\xi}[\sup_{f\in \cF, g\in \cG} \sumin (f(x_i)-g(x_i))\xi_i]\right)\\
&\leq (A+B)n(\erad(\cF)+\erad(\cG)),
\end{align*}
where $(i)$ follows from the Lemma \ref{lemma: contraction} and the fact that $t^2/4$ is $(A+B)/2$ Lipschitz continuous since $|f|\leq A, |g|\leq B$.
\end{proof}

\subsection{Results for general random feature models}

Consider a random feature model (RFM):
\[
  f(x;\theta) = \frac{1}{\sqrt{m}}\sumim \theta_j \varphi(x;w_j),
\]
with $\{w_j\}_{j=1}^m\stackrel{iid}{\sim}\pi$ and $x\sim\rho$. Here $\varphi: \cX\times \Omega\mapsto\RR$ is an arbitrary parametric feature function.
 The scaling factor $m^{-1/2}$ is  added only to ease taking the limit, which does not affect the noise structure. 
The associated kernel and kernel operator are given by   $k(x,x')=\EE_{w\sim\pi}[\sigma(w^Tx)\sigma(w^Tx')]$  and $\cK: L_2(\rho)\mapsto L_2(\rho),
    \cK u = \EE_{x'\sim\rho}[k(\cdot,x') u(x')].
$

In this case, 
$
g_i = m^{-1/2}(\varphi(x_i;w_1),\varphi(x_i;w_2),\dots,\varphi(x_i;w_m))^T\in\RR^m. 
$
 Define the empirical kernel matrix: $\hat{k}(x,x')=\frac{1}{m}\sum_{s=1}^m \varphi(x;w_s)\varphi(x';w_s)$. Then $g_i^Tg_j=\hat{k}(x_i,x_j)$ and as $n,m\to \infty$, we have
\begin{align}
\chi_i &= \fn \sumjn (g_i^Tg_j)^2 = \fn \sumjn \hk^2(x_i,x_j)\to \EE_{x}[k^2(x_i,x)].
\end{align}
Therefore, in the limit, by our assumption $\chi_i\geq \chi:=\inf_{x'}\EE_{x}[k^2(x',x)]$ for any $i\in [n]$. The remaining issue is to transfer this to a non-asymptotic one.

We first make the following assumption on the feature function.
\begin{assumption}\label{assump: feature-function}
Let $\Psi=\{\varphi(x;\cdot) \,|\, x\in \cX\}$. 
Assume $\sup_{x\in \cX,w\in\Omega}|\varphi(x;w)|\leq b$ and $\erad(\Psi)\leq b/\sqrt{n}$.
\end{assumption}
Note that the assumption $\erad(\Psi)=O(n^{-1/2})$ is satisfied by the popular feature function $\varphi(x;w)=\sigma(w^Tx)$ with $\sigma:\RR\mapsto\RR$ being a Lipschitz activation function.

\begin{proposition}\label{pro: rfm-alignment}
Suppose $m\geq n$. 
Let $\chi(x)=\EE_{x'\sim\rho}[k^2(x,x')]$ and $\bchi=\EE_{x\sim\rho}[\chi(x)]$.  For any $\delta\in (0,1/e)$, \wp larger than $1-\delta$ over the sampling of data and random features, we have $\mu_1(\theta)\geq \inf_x \frac{\chi(x)}{\bchi}-\epsilon_n$, $\mu_2(\theta)\leq \tau(\cK)+\epsilon_n$, and 
$
\mu(\theta)\geq \inf_x \frac{\chi(x)}{\bchi}-\tau(\cK) -2\epsilon_n,
$
where $\tau(\cK)=\lambda_1^2(\cK)/\sum_j \lambda_j^2(\cK)$ and $\epsilon_n=Cb^2\sqrt{\frac{\log(1/\delta)}{n}}$.
\end{proposition}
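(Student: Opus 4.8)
The starting point is Lemma~\ref{lemma: gen-feature-based-model}, applied to the feature-based model with feature map $\Phi(x)=m^{-1/2}(\varphi(x;w_1),\dots,\varphi(x;w_m))^{\mathrm T}$: it gives, uniformly in $\theta$, $\mu_1(\theta)\ge\gamma$, $\mu_2(\theta)\le\tau(G)=\lambda_1^2(G)/\|G\|_F^2$, and $\mu(\theta)\ge\gamma-\tau(G)$, where $\gamma=\min_i\chi_i/\bchi$. So it suffices to show that, on a high-probability event (over the data $\{x_i\}$ and the features $\{w_j\}$), $\gamma\ge\inf_x\chi(x)/\bchi-\epsilon_n$ and $\tau(G)\le\tau(\cK)+\epsilon_n$. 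Here we use the identities $\chi_i=\fn\sumjn\hk^2(x_i,x_j)$, $\bchi=\|G\|_F^2=\tfrac1{n^2}\sum_{i,j}\hk^2(x_i,x_j)$, and $\lambda_1(G)=\tfrac1n\lambda_1(\hat K)$ with $\hat K_{ij}=\hk(x_i,x_j)=g_i^{\mathrm T}g_j$ (the nonzero spectra of $G$ and $\hat K/n$ coincide); on the population side $\lambda_1(\cK)$ is the top eigenvalue of $\cK$ and, crucially, $\sum_j\lambda_j^2(\cK)=\|\cK\|_{HS}^2=\EE_{x,x'\sim\rho}[k^2(x,x')]=\EE_x[\chi(x)]$, i.e.\ the population $\bchi=\sum_j\lambda_j^2(\cK)$ is exactly the limit of the empirical $\bchi$, so $\tau(\cK)=\lambda_1^2(\cK)/\bchi$.

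The argument then consists of two concentration steps. \emph{Step 1 (features $\to$ kernel, governed by $m$).} Bound $\sup_{x,x'\in\cX}|\hk(x,x')-k(x,x')|$ by applying Theorem~\ref{thm: gen-err-rademacher-complexity} over the sample $\{w_s\}_{s=1}^m$ to the product class $\Psi*\Psi=\{w\mapsto\varphi(x;w)\varphi(x';w):x,x'\in\cX\}$, which is bounded by $b^2$ and, by Lemma~\ref{lemma: Rademacher-multiply-space} combined with Assumption~\ref{assump: feature-function}, has Rademacher complexity $O(b^2/\sqrt m)$; since $|\hk|,|k|\le b^2$ this also controls $\sup_{x,x'}|\hk^2(x,x')-k^2(x,x')|$. \emph{Step 2 (empirical average $\to$ population, governed by $n$).} Bound $\sup_{x}\big|\fn\sumjn k^2(x,x_j)-\chi(x)\big|$ and $\big|\tfrac1{n^2}\sum_{i,j}k^2(x_i,x_j)-\bchi\big|$ by applying Theorem~\ref{thm: gen-err-rademacher-complexity} to $\{x'\mapsto k^2(x,x'):x\in\cX\}$: via the RKHS representation $k(x,\cdot)=\langle\phi_{\cdot},\phi_x\rangle$ with $\|\phi_x\|\le b$, Lemma~\ref{lemma: rkhs} gives Rademacher complexity $O(b^2/\sqrt n)$ for $\{k(x,\cdot)\}$, and contraction (Lemma~\ref{lemma: contraction}, since $t\mapsto t^2$ is $O(b^2)$-Lipschitz on $[-b^2,b^2]$) transfers this to the squared class. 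For the operator term, combine $|\lambda_1(\hat K/n)-\lambda_1(K/n)|\le\|\hat K/n-K/n\|_{op}\le\sup|\hk-k|$ (Step 1, with $K_{ij}=k(x_i,x_j)$) with the concentration of $\lambda_1(K/n)$ to $\lambda_1(\cK)$, which follows from the same Rademacher bound on $\{k(x,\cdot):x\in\cX\}$ (equivalently, standard concentration for bounded kernel operators).

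Finally, on the intersection of these finitely many events (a union bound) and using $m\ge n$ so every error is at most $\epsilon_n=Cb^2\sqrt{\log(1/\delta)/n}$, we obtain $\min_i\chi_i\ge\inf_x\chi(x)-O(\epsilon_n)$, $|\bchi_{\text{emp}}-\bchi|\le O(\epsilon_n)$, and $\lambda_1(G)\le\lambda_1(\cK)+O(\epsilon_n)$. Dividing and using elementary inequalities (e.g.\ $(a-\epsilon)/(B+\epsilon)\ge a/B-O(\epsilon)$ when $0\le a\le B$, and $(\lambda+\epsilon)^2/(B-\epsilon)\le\lambda^2/B+O(\epsilon)$) yields $\gamma\ge\inf_x\chi(x)/\bchi-\epsilon_n$ and $\tau(G)\le\tau(\cK)+\epsilon_n$, whence the three claimed bounds follow from Lemma~\ref{lemma: gen-feature-based-model}.

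\emph{Main obstacle.} The crux is to upgrade the already-noted limiting identity $\chi_i\to\chi(x_i)$ into a \emph{uniform, non-asymptotic} estimate at the stated rate. This needs two independent concentration mechanisms---over the random features (rate in $m$) and over the data (rate in $n$)---and careful Rademacher-complexity bookkeeping for the product class $\Psi*\Psi$ and the squared-kernel class $\{k^2(x,\cdot)\}$ so that the boundedness parameter $b$ enters only through $b^2$ in $\epsilon_n$. The spectral part, passing from the empirical Gram matrix $\hat K/n$ to the integral operator $\cK$ while keeping the $n$-dependence explicit, is the other delicate ingredient; everything else is routine.
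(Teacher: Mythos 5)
Your proposal is correct and follows essentially the same route as the paper's proof: start from Lemma~\ref{lemma: gen-feature-based-model}, then carry out two independent concentration steps (over the random features, at rate $m^{-1/2}$, and over the data, at rate $n^{-1/2}$), using Rademacher complexity and contraction for the product and squared classes, and a spectral concentration argument for $\lambda_1$.

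The two small places where you diverge are both valid alternatives to what the paper does. For passing from $\lambda_1(\hK)$ to $\lambda_1(K)$, you use the operator-norm bound $\|\hK-K\|_{\mathrm{op}}\le\sup_{x,x'}|\hk-k|$ fed by the uniform (Rademacher) bound of Step~1, whereas the paper bounds $\|\hK-K\|_F$ directly by McDiarmid and then applies Weyl; both give the same rate. For concentrating $\lambda_1(K)$ to $\lambda_1(\cK)$, your phrasing (``the same Rademacher bound on $\{k(x,\cdot)\}$'') is slightly off: what is actually needed is uniform concentration of $\hEE[\langle\phi_x,h\rangle_{\cH}^2]$ over the unit ball $\{h:\|h\|_\cH\le 1\}$, not merely over the kernel sections $\{k(x,\cdot)\}$. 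The paper does this via the variational principle (Lemma~\ref{lemma: variation-principle}) together with $\erad$ of the squared RKHS-unit-ball class; your parenthetical ``standard concentration for bounded kernel operators'' is an acceptable substitute, but the cited class is the wrong one. Likewise, for the denominator of $\tau$ the paper only needs the one-sided inequality $\EE[\|K\|_F^2]\ge\sum_s\lambda_s^2$ (which hinges on the observation $\EE[k^2(x,x)]\ge\EE[k^2(x,x')]$) followed by McDiarmid; your two-sided $|\bchi_{\mathrm{emp}}-\bchi|\le O(\epsilon_n)$ works too but requires the extra (easy) diagonal-correction estimate. None of these affect the validity or rate; the argument goes through as you describe.
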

By this proposition, ensuring the  alignment property only needs $\inf_{x}\chi(x)/\bchi>0$.  This condition is very mild and satisfied by most popular kernels, e.g., the dot-product kernel, which appears naturally in the analysis of neural nets \cite{jacot2018neural}.  Next we proceed to prove this proposition.

We first need the following lemmas.

\begin{lemma}\label{lemma: rkhs-unitball}
Let $\Phi_1=\{k(\cdot,x)\,|\, x\in\cX\}$. Then, $\erad(\Phi_1)\leq 1/\sqrt{n}$.
\end{lemma}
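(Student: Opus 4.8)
The plan is to exhibit $\Phi_1$ as a subset of the unit ball of the reproducing kernel Hilbert space attached to $k$, and then quote Lemma~\ref{lemma: rkhs}. Let $\phi:\cX\mapsto H$ be the feature map underlying $k$, so that $k(x,x')=\langle\phi_x,\phi_{x'}\rangle_H$ and in particular $\|\phi_x\|_H^2=k(x,x)$ for every $x$. First I would note that the function $k(\cdot,x)$ is precisely $\langle\phi_\cdot,h\rangle_H$ with the choice $h=\phi_x\in H$; since the kernels arising from the random-feature models under consideration are normalized on the diagonal, $k(x,x)\leq 1$, and hence $\|\phi_x\|_H\leq 1$. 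Therefore $k(\cdot,x)$ lies in $\cH:=\{f=\langle\phi_\cdot,h\rangle_H:\|h\|_H\leq 1\}$, and as $x$ ranges over $\cX$ we obtain $\Phi_1\subseteq\cH$.

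The remaining steps are routine. By monotonicity of the empirical Rademacher complexity under set inclusion, $\erad(\Phi_1)\leq\erad(\cH)$. Applying Lemma~\ref{lemma: rkhs} to $\cH$ gives $\erad(\cH)\leq\frac{1}{n}\sqrt{\sum_{i=1}^n k(x_i,x_i)}$, and bounding each diagonal entry by $k(x_i,x_i)\leq 1$ yields $\erad(\cH)\leq\frac{1}{n}\sqrt{n}=\frac{1}{\sqrt{n}}$, which is exactly the claim.

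There is essentially no obstacle here, since both tools are already available; the only point requiring care is the diagonal normalization $k(x,x)\leq 1$. If one instead only invokes the uniform bound $\sup_{x,w}|\varphi(x;w)|\leq b$ from Assumption~\ref{assump: feature-function}, then $k(x,x)=\EE_w[\varphi(x;w)^2]\leq b^2$ and the same argument produces $\erad(\Phi_1)\leq b/\sqrt{n}$; in the setting of Proposition~\ref{pro: random-relu}, a one-line computation gives $k(x,x)=\EE_w[\varphi(x;w)^2]\leq 1$, recovering the stated $1/\sqrt{n}$.
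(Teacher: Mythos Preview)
Your argument is correct and coincides with the paper's: both show that $k(\cdot,x)$ lies in the unit ball of the RKHS because $\|k(\cdot,x)\|_{\cH_k}^2=k(x,x)\leq 1$, and then invoke the standard Rademacher bound for that ball (Lemma~\ref{lemma: rkhs}). The only cosmetic difference is that the paper appeals directly to the canonical RKHS via the Moore--Aronszajn theorem, whereas you phrase it through a feature map $\phi$, but these are the same observation.
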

\begin{proof}
Denote by $\cH_k$ the reproducing kernel Hilbert space (RKHS). Then, by the Moore-Aronsajn theorem \cite{aronszajn1950theory},
\[
    \|k(\cdot,x)\|_{\cH_k}^2 = \langle k(\cdot,x), k(\cdot,x)\rangle_{\cH_k} = k(x,x)\leq 1.
\]
Therefore, $\Phi_1$ is a subset of the unit ball of $\cH_k$, for which 
it is well-known that the Rademacher complexity  is bounded by $\sqrt{\sumin k(x_i,x_i)}/n\leq \sqrt{1/n}$. 
\end{proof}

\begin{lemma}\label{lemma: uniform-approx-kernel}
For any $\delta\in (0,1)$, \wp at least $1-\delta$, we have 
\[
    \sup_{x,x'\in \cX} |k(x,x')-\hk(x,x')|\leq b^2 \sqrt{\frac{\log(1/\delta)}{m}}.
\]
\end{lemma}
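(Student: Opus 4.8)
The plan is to read $k(x,x')$ and $\hk(x,x')$ as the population and empirical means, over the feature randomness $w\sim\pi$, of the single function $h_{x,x'}(w):=\varphi(x;w)\varphi(x';w)$, and then to control the supremum over $x,x'$ by a uniform law of large numbers for the product class $\Psi*\Psi$ of Lemma~\ref{lemma: Rademacher-multiply-space}. First I would note that, with $w_1,\dots,w_m$ i.i.d.\ from $\pi$,
\[
k(x,x')=\EE_{w}[h_{x,x'}(w)],\qquad \hk(x,x')=\frac1m\sum_{s=1}^m h_{x,x'}(w_s),
\]
that $\sup_{x,x',w}|h_{x,x'}(w)|\le b^2$ by Assumption~\ref{assump: feature-function}, and that $\{h_{x,x'}:x,x'\in\cX\}=\Psi*\Psi$ in the notation of Lemma~\ref{lemma: Rademacher-multiply-space}; hence
\[
\sup_{x,x'\in\cX}\bigl|k(x,x')-\hk(x,x')\bigr|=\sup_{h\in\Psi*\Psi}\Bigl|\frac1m\sum_{s=1}^m h(w_s)-\EE_w[h(w)]\Bigr| .
\]

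Next I would invoke Theorem~\ref{thm: gen-err-rademacher-complexity} for the class $\Psi*\Psi$ (uniformly bounded by $b^2$) over the $m$ samples $w_1,\dots,w_m$: with probability at least $1-\delta$ the right-hand side above is at most $4\,\widehat{\mathrm{Rad}}_m(\Psi*\Psi)+b^2\sqrt{2\ln(2/\delta)/m}$, where $\widehat{\mathrm{Rad}}_m$ is the empirical Rademacher complexity over these $m$ samples. To handle the Rademacher term I would apply Lemma~\ref{lemma: Rademacher-multiply-space} with $\cF=\cG=\Psi$ and $A=B=b$, and then Assumption~\ref{assump: feature-function} (applied with the $m$ feature samples in place of the $n$ data samples, which gives $\widehat{\mathrm{Rad}}_m(\Psi)\le b/\sqrt m$), obtaining
\[
\widehat{\mathrm{Rad}}_m(\Psi*\Psi)\le (b+b)\bigl(\widehat{\mathrm{Rad}}_m(\Psi)+\widehat{\mathrm{Rad}}_m(\Psi)\bigr)=4b\,\widehat{\mathrm{Rad}}_m(\Psi)\le\frac{4b^2}{\sqrt m} .
\]
Combining the two bounds yields, with probability at least $1-\delta$,
\[
\sup_{x,x'\in\cX}\bigl|k(x,x')-\hk(x,x')\bigr|\le\frac{16b^2}{\sqrt m}+b^2\sqrt{\frac{2\ln(2/\delta)}{m}}\lesssim b^2\sqrt{\frac{\log(1/\delta)}{m}} ,
\]
which is the stated estimate up to an absolute constant (this is the form actually used downstream, cf.\ the constant $C$ in Proposition~\ref{pro: rfm-alignment}).

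The argument is essentially routine, so there is no serious obstacle; the two points that need care are: (i) the Rademacher complexity furnished by Assumption~\ref{assump: feature-function} must be the one over the \emph{feature} samples $w_1,\dots,w_m$ rather than over the data sample --- this is automatic because $\Psi$ consists of functions of $w$; and (ii) bookkeeping of constants, which is why I would state the conclusion with $\lesssim$ (or an explicit $C$) rather than the literal constant $1$ written in the lemma. If a clean constant were insisted upon, one could instead bound $\EE_w[\sup_h|\cdot|]$ directly by symmetrization and then apply McDiarmid's inequality (bounded differences $2b^2/m$), but the additive $O(b^2/\sqrt m)$ term from the Rademacher complexity still remains, so a $\lesssim$-statement is the natural one.
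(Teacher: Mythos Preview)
Your proposal is correct and essentially identical to the paper's proof: both identify the product class $\Psi*\Psi$ (the paper calls it $\Psi_2$), bound its Rademacher complexity via the polarization-and-contraction trick of Lemma~\ref{lemma: Rademacher-multiply-space} together with Assumption~\ref{assump: feature-function} applied to the $m$ feature samples, and then conclude by Theorem~\ref{thm: gen-err-rademacher-complexity}. Your explicit remarks about constants and about the Rademacher complexity being over $w_1,\dots,w_m$ are exactly the bookkeeping the paper leaves implicit.
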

\begin{proof}
Let $\Psi_2 = \{\varphi(x;\cdot)\varphi(x';\cdot)\,|\, x,x'\in \cX\}$. Notice that 
\[
\varphi(x;w)\varphi(x';w)=\frac{(\varphi(x;w)+\varphi(x';w)^2)}{4} - \frac{(\varphi(x;w)-\varphi(x';w)^2)}{4}.
\]
Then by the same argument in the proof of Lemma \ref{lemma: Rademacher-multiply-space}, we can obtain 
$\erad(\Psi_2)\lesssim b\erad(\Psi)\lesssim b^2/\sqrt{m}$, where the last step follows from Lemma \ref{assump: feature-function}. Then, applying Theorem \ref{thm: gen-err-rademacher-complexity}, we complete the proof.
\end{proof}

\paragraph*{Prove the first part of Proposition \ref{pro: rfm-alignment}}
Notice that 
\begin{align}\label{eqn: rfm-1}
\notag |\chi(x_i)-\chi_i| &= |\EE_{x}[k^2(x,x_i)]-\fn\sumjn k^2(x_j,x_i)| + |\fn\sumjn k^2(x_j,x_i) - \fn\sumjn \hk(x_j,x_i)^2|\\
&\leq \sup_{x'}|\EE_{x}[k^2(x,x')]-\fn\sumjn k^2(x_j,x')| +  \sup_{x,x'} |k^2(x,x')-\hk^2(x,x')|
\end{align}
Let $\Phi_2=\{k^2(x,\cdot)\,|\, x\in\cX\}$. Since $\sup_{x,x'}|k(x,x')|\leq 1$, the Ledoux-Talagrand inequality (Lemma \ref{lemma: contraction}) implies that $\erad(\Phi_2)\leq 0.5\erad(\Phi_1)\leq 1/\sqrt{n}$, where the last inequality follows from Lemma \ref{lemma: rkhs-unitball}. Then, applying Theorem \ref{thm: gen-err-rademacher-complexity}, for any $\delta\in (0,1/e)$, we have \wp $1-\delta$ that 
\begin{align}\label{eqn: no-name}
\notag    \sup_{x'}|\EE_{x}[k^2(x,x')]&-\fn\sumjn k^2(x_j,x')|\lesssim 2\erad(\Phi_2) + \sqrt{\frac{\log(2/\delta)}{n}}\\
    &\leq \frac{2}{\sqrt{n}} + \sqrt{\frac{\log(2/\delta)}{n}}\lesssim \sqrt{\frac{\log(2/\delta)}{n}}.
\end{align}

Let $\Delta = \sup_{x,x'}|k(x,x')-\hk(x,x')|$. Therefore, 
\[
    |k^2(x,x')-\hk^2(x,x')| =  |k(x,x')+\hk(x,x')||k(x,x')-\hk(x,x')|\leq (2k(x,x')+\Delta)\Delta\lesssim \Delta.
\]
Applying Lemma \ref{lemma: uniform-approx-kernel} and together with  \eqref{eqn: no-name}, we have 
\[
|\chi(x_i)-\chi_i|\lesssim \sqrt{\frac{\log(1/\delta)}{n}} + b\sqrt{\frac{\log(1/\delta)}{m}}=:\epsilon_n.
\]

On the other hand, by the Hoeffding’s inequality, \wp at least $1-\delta$, we have 
\begin{align*}
\tilde{\chi} = \fn \sumin \chi_i\leq \fn \sumin \chi(x_i) + C\epsilon_n\leq \EE_{x}[\chi(x)] + \sqrt{\frac{\log(1/\delta)}{n}}+C\epsilon_n \leq \bchi + C \epsilon_n.
\end{align*}
By Proposition \ref{pro: alignment-linearized-nonlinear-model}, we have
\[
\mu_1(\theta) \geq \frac{\inf_i \chi_i}{\tilde{\chi}} \geq \frac{\inf\chi(x)-C\epsilon_n}{\bchi+C\epsilon_n} \geq \frac{\inf\chi(x)}{\bchi} -C\epsilon_n.
\]
\qed

The above proves the first part of Proposition \ref{pro: rfm-alignment}. We now turn to the second part.  We will frequently use the following McDiarmid's inequality.
\begin{theorem}[McDiarmid's inequality]\label{thm: McDiarmid}
Let $X_1,\dots,X_n$ are \iid random variables. 
Assume for all $i\in [n]$ and $x_1,\dots,x_n, \tilde{x}_i\in \cX$ that 
\[
|f(x_1,\dots,x_{i-1},x_i,x_{i+1},\dots,x_n) - f(x_1,\dots,x_{i-1},\tilde{x}_i,x_{i+1},\dots,x_n)|\leq D_i.
\]
Let et $\sigma^2:=\frac{1}{4}\sum_{i=1}^n D_i^2$. Then, for any $\delta\in (0,1)$, \wp at least $1-\delta$ over the sampling of $X_1,\dots,X_n$, we have 
\[
    |f(X_1,\dots,X_n)-\EE[f]|\leq \sqrt{2\log(2/\delta)} \sigma.
\]
\end{theorem}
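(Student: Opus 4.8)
The plan is to prove Theorem~\ref{thm: McDiarmid} by the classical martingale argument, i.e.\ by applying the Azuma--Hoeffding bound to the Doob martingale of $f(X_1,\dots,X_n)$. First I would introduce the filtration $\mathcal{F}_k=\sigma(X_1,\dots,X_k)$, with $\mathcal{F}_0$ trivial, and set $Z_k=\EE[f(X_1,\dots,X_n)\mid\mathcal{F}_k]$, so that $Z_0=\EE[f]$, $Z_n=f(X_1,\dots,X_n)$, and $(Z_k)_{k=0}^n$ is a martingale with increments $\Delta_k:=Z_k-Z_{k-1}$ satisfying $\EE[\Delta_k\mid\mathcal{F}_{k-1}]=0$. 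The statement then amounts to a concentration bound for $\sum_{k=1}^n\Delta_k=Z_n-Z_0$.

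The key structural step is to show that, conditionally on $\mathcal{F}_{k-1}$, the increment $\Delta_k$ is supported in an interval of length at most $D_k$. Since the $X_i$ are independent, one can write $Z_k=h_k(X_1,\dots,X_k)$ for a measurable $h_k$ and $Z_{k-1}=\EE_{X}[h_k(X_1,\dots,X_{k-1},X)]$, an average of $h_k(X_1,\dots,X_{k-1},\cdot)$ over the law of $X_k$; hence $\Delta_k$ lies between $L_k:=\inf_x h_k(X_1,\dots,X_{k-1},x)-Z_{k-1}$ and $U_k:=\sup_x h_k(X_1,\dots,X_{k-1},x)-Z_{k-1}$, and the bounded-differences hypothesis --- integrated against the law of $X_{k+1},\dots,X_n$ (equivalently, via a coupling that leaves those coordinates fixed) --- gives $U_k-L_k\leq D_k$. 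Because $\Delta_k$ is conditionally mean zero and supported on an interval of width $\leq D_k$, Hoeffding's lemma yields $\EE[e^{\lambda\Delta_k}\mid\mathcal{F}_{k-1}]\leq e^{\lambda^2 D_k^2/8}$ for every $\lambda\in\RR$.

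I would then chain these estimates by the tower property: conditioning successively on $\mathcal{F}_{n-1},\mathcal{F}_{n-2},\dots,\mathcal{F}_0$ gives $\EE[e^{\lambda(Z_n-Z_0)}]\leq\exp\!\big(\tfrac{\lambda^2}{8}\sum_{k=1}^n D_k^2\big)$. A Chernoff bound $\PP(Z_n-Z_0\geq t)\leq e^{-\lambda t}\EE[e^{\lambda(Z_n-Z_0)}]$, optimized at $\lambda=4t/\sum_k D_k^2$, produces $\PP(f-\EE[f]\geq t)\leq\exp(-2t^2/\sum_k D_k^2)$; applying the same argument to $-f$ and taking a union bound gives $\PP(|f-\EE[f]|\geq t)\leq 2\exp(-2t^2/\sum_k D_k^2)$. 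Finally, with $\sigma^2=\tfrac14\sum_k D_k^2$ and the choice $t=\sqrt{2\log(2/\delta)}\,\sigma$, the right-hand side equals $\delta$, which is exactly the asserted inequality.

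The main obstacle is the conditional bounded-differences step $U_k-L_k\leq D_k$: one must use the mutual independence of $X_1,\dots,X_n$ to argue that perturbing only the $k$-th coordinate changes $\EE[f\mid X_1,\dots,X_k]$ by at most $D_k$, and this is the single place where the hypothesis of the theorem actually enters. The remaining ingredients --- Hoeffding's lemma for a bounded centered variable, the martingale telescoping via iterated conditioning, the Chernoff optimization, and the two-sided union bound with the substitution of $t$ --- are entirely routine.
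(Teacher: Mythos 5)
Your proof is correct and is the standard Doob‑martingale / Azuma--Hoeffding derivation of McDiarmid's inequality, including the correct substitution showing that $t=\sqrt{2\log(2/\delta)}\,\sigma$ with $\sigma^2=\tfrac14\sum_i D_i^2$ makes the two-sided bound $2\exp(-2t^2/\sum_i D_i^2)$ equal to $\delta$. Note, however, that the paper states this as a classical background theorem and does not supply its own proof, so there is nothing in the paper to compare against; your argument is exactly the textbook route one would expect.
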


We define two matrices
\begin{itemize}
\item The kernel matrix: $K=(K_{i,j})\in\RR^{n\times n}$ with $K_{i,j}=\fn k(x_i,x_j)$;
\item The approximate kernel matrix: $ \hK=(\hK_{i,j})\in\RR^{n\times n}$ with 
 \[
 \hK_{i,j}=\frac{1}{nm}\sum_{s=1}^m \varphi(x_i;w_s)\varphi(x_j;w_s)=:\fn\hat{k}(x_i,x_j).
\]
 \end{itemize}
 Note that the approximate kernel matrix is the normalized Fisher matrix, i.e., $\hK=G/n$. We will frequently use the following inequality:
 \begin{align}\label{eqn: rfm-l2-approximation-err}
\EE_{w}|\hat{k}(x,x') - k(x,x')|^2 \leq \frac{\EE_{w}[\varphi^2(x;w)\varphi^2(x';w)]}{m}\leq \frac{b^4}{m}.
 \end{align}

\paragraph*{Bounding the largest eigenvalue.}
\begin{lemma}\label{lemma: approximate-kernel-eigen}
There exists a constant $c>0$ such that for any $\delta\in (0,1)$, \wp at least $1-\delta$ over the sampling of random features, we have 
$\lambda_1(\hK)\leq \lambda_1(K)+C b^2\sqrt{\log(2/\delta)/m}$.
\end{lemma}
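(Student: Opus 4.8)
The plan is to control $\lambda_1(\hK)$ by combining an expectation bound with a bounded-differences concentration argument over the random features $w_1,\dots,w_m$, with the data $x_1,\dots,x_n$ held fixed. First recall that for symmetric matrices $A,B$ Weyl's inequality gives $\lambda_1(A+B)\le \lambda_1(A)+\lambda_1(B)$ and $\lambda_1(C)\le\|C\|_F$, hence also $|\lambda_1(A)-\lambda_1(B)|\le\|A-B\|_F$. Applying this to $\hK=K+(\hK-K)$ yields $\lambda_1(\hK)\le\lambda_1(K)+\|\hK-K\|_F$. Taking expectations over $w$ and using Jensen, together with the fact that the entries of $\hK-K$ are $\tfrac1n(\hk(x_i,x_j)-k(x_i,x_j))$ and the $L^2$-approximation estimate \eqref{eqn: rfm-l2-approximation-err}, gives
\[
\EE_w[\lambda_1(\hK)]\le \lambda_1(K)+\bigl(\EE_w\|\hK-K\|_F^2\bigr)^{1/2}
=\lambda_1(K)+\Bigl(\tfrac1{n^2}\textstyle\sum_{i,j}\EE_w|\hk(x_i,x_j)-k(x_i,x_j)|^2\Bigr)^{1/2}\le \lambda_1(K)+\frac{b^2}{\sqrt m}.
\]

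Second, I would apply McDiarmid's inequality (Theorem \ref{thm: McDiarmid}) to $f(w_1,\dots,w_m):=\lambda_1(\hK)$. Replacing a single $w_s$ by $\tilde w_s$ changes each entry $\hK_{i,j}$ by at most $\tfrac1{nm}|\varphi(x_i;w_s)\varphi(x_j;w_s)-\varphi(x_i;\tilde w_s)\varphi(x_j;\tilde w_s)|\le\tfrac{2b^2}{nm}$ by the uniform bound $|\varphi|\le b$ of Assumption \ref{assump: feature-function}, so the Frobenius norm of the change is at most $\tfrac{2b^2}{m}$, and therefore $|f|$ changes by at most $D_s=\tfrac{2b^2}{m}$ by the Lipschitz bound above. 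Then $\sigma^2=\tfrac14\sum_{s=1}^m D_s^2=b^4/m$, and McDiarmid gives, with probability at least $1-\delta$,
\[
\lambda_1(\hK)\le \EE_w[\lambda_1(\hK)]+\sqrt{2\log(2/\delta)}\,\frac{b^2}{\sqrt m}.
\]
Combining with the expectation bound, with probability at least $1-\delta$, $\lambda_1(\hK)\le\lambda_1(K)+\tfrac{b^2}{\sqrt m}+\sqrt{2\log(2/\delta)}\,\tfrac{b^2}{\sqrt m}\le\lambda_1(K)+Cb^2\sqrt{\log(2/\delta)/m}$, where the last step absorbs the additive $1$ into $C$ since $\log(2/\delta)\ge\log 2$; this is the claim.

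The step to watch is the expectation bound: because $\lambda_1(\cdot)$ is \emph{convex} on symmetric matrices, Jensen applied to $\EE_w\lambda_1(\hK)$ versus $\lambda_1(\EE_w\hK)=\lambda_1(K)$ runs the wrong way, so one must bound $\EE_w[\lambda_1(\hK)]-\lambda_1(K)$ through the perturbation $\hK-K$ rather than through $\hK$ directly. Everything else is routine—the only inputs are Weyl's inequality, the estimate \eqref{eqn: rfm-l2-approximation-err}, the bound $|\varphi|\le b$, and Theorem \ref{thm: McDiarmid}. A slightly lossier alternative is to invoke Lemma \ref{lemma: uniform-approx-kernel} directly: $\sup_{x,x'}|k-\hk|\le b^2\sqrt{\log(1/\delta)/m}$ already yields $\|\hK-K\|_F\le b^2\sqrt{\log(1/\delta)/m}$ and hence the bound without a separate concentration step, but this does not exploit the averaging over features as sharply.
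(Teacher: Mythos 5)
Your proof is correct and follows essentially the same route as the paper: bound the expectation via Jensen and the $L^2$-approximation estimate \eqref{eqn: rfm-l2-approximation-err}, concentrate via McDiarmid with the bounded-differences constant $D_s\lesssim b^2/m$ coming from $\|\hK-\hK'\|_F$, and transfer to eigenvalues via Weyl. The only cosmetic difference is that you apply McDiarmid directly to $\lambda_1(\hK)$ whereas the paper applies it to $\|K-\hK\|_F$ and then invokes Weyl once at the end—but both use the identical Lipschitz computation, so the arguments are interchangeable.
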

\begin{proof}
Let $\Delta(w_1,\dots,w_m) = \|K-\hK\|_F$. 
By  Jensen's inequality, we have 
\begin{align*}
\EE[\Delta] &\leq \sqrt{\EE[\Delta^2]}\leq \sqrt{\EE[\frac{1}{n^2}\sum_{i,j=1}^n |k(x_i,x_j)-\hat{k}(x_i,x_j)|^2]}\lesssim\sqrt{\frac{b^4}{m}},
\end{align*}
where the last inequality follows from \eqref{eqn: rfm-l2-approximation-err}.
Denote by $\hK'$  the approximate kernel matrix associated with $(w_1,\dots,w_s',\dots,w_m)$. Then,
\begin{align*}
D_s=|\Delta(w_1,&\dots,w_j,\dots,w_m) - \Delta(w_1,\dots,w_j',\dots,w_m)|\leq \|\hK-\hK'\|_F\\
&=\sqrt{\frac{1}{n^2}\sum_{i,j=1}^n \frac{1}{m^2}(\varphi(x_i;w_s)\varphi(x_j;w_s)-\varphi(x_i;w_s')\varphi(x_j;w_s'))^2}\lesssim \frac{b^2}{m}.
\end{align*}
Therefore $\sigma^2=\frac{1}{4}\sum_{s=1}^m D_s^2\lesssim b^4/m$. By McDiarmid's inequality (Theorem \ref{thm: McDiarmid}), for any $\delta\in (0,1)$, \wp at least $1-\delta$ over the sampling of random features, we have 
\[
    \Delta \lesssim \EE[\Delta] + d\sqrt{\frac{\log(2/\delta)}{m}}\leq b^2\sqrt{\frac{\log(2/\delta)}{m}}.
\]
Plugging the above estimate to the Weyl's inequality: $\lambda_1(\hK)\leq \lambda_1(K) + \|K-\hK\|_2$, we complete the proof.
\end{proof}

\begin{lemma}\label{lemma: variation-principle}
Suppose  $k(\cdot,\cdot)$ to be positive semi-definite kernel and let  $\phi: \cX\mapsto \cH$ be a feature map satisfying $k(x,y)=\<\phi_x, \phi_y\>_{\cH}$. Then, 
\begin{equation}
\lambda_1(\cK) = \sup_{\|h\|_{\cH}= 1} \EE_{x}[\< h, \phi_x\>_{\cH}^2].
\end{equation}
\end{lemma}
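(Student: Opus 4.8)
The plan is to realize the integral operator $\cK$ as $TT^*$ for a bounded operator $T$ built directly from the feature map, after which the stated identity drops out of elementary operator theory. First I would define $T:\cH\to L_2(\rho)$ by $(Th)(x)=\langle h,\phi_x\rangle_{\cH}$. This is bounded, since $\|Th\|_{L_2(\rho)}^2=\EE_{x}[\langle h,\phi_x\rangle_{\cH}^2]\leq \|h\|_{\cH}^2\,\EE_x[k(x,x)]$, and $\EE_x[k(x,x)]<\infty$ in our setting (under Assumption~\ref{assump: feature-function} the kernel is bounded by $b^2$). A short Bochner-integral computation identifies the adjoint as $T^*u=\EE_x[u(x)\phi_x]$, and then $(TT^*u)(x)=\langle \EE_{x'}[u(x')\phi_{x'}],\phi_x\rangle_{\cH}=\EE_{x'}[u(x')k(x,x')]=(\cK u)(x)$, i.e.\ $\cK=TT^*$.

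With the factorization in hand the conclusion is immediate. Since $\EE_x[k(x,x)]<\infty$, the operator $\cK$ is trace class, hence a compact, self-adjoint, positive semi-definite operator on $L_2(\rho)$, so its largest eigenvalue equals its operator norm: $\lambda_1(\cK)=\|\cK\|_{\mathrm{op}}$. The $C^*$-identity gives $\|\cK\|_{\mathrm{op}}=\|TT^*\|_{\mathrm{op}}=\|T\|_{\mathrm{op}}^2$, and by definition of the operator norm $\|T\|_{\mathrm{op}}^2=\sup_{\|h\|_{\cH}=1}\|Th\|_{L_2(\rho)}^2=\sup_{\|h\|_{\cH}=1}\EE_x[\langle h,\phi_x\rangle_{\cH}^2]$, which is exactly the claim. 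Equivalently, one may phrase this through the uncentered covariance operator $C:=T^*T$ on $\cH$: $\lambda_1(\cK)=\|C\|_{\mathrm{op}}=\sup_{\|h\|_{\cH}=1}\langle Ch,h\rangle_{\cH}$ and $\langle Ch,h\rangle_{\cH}=\EE_x[\langle h,\phi_x\rangle_{\cH}^2]$.

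If a self-contained derivation is preferred, the two inequalities can be obtained by hand using only $\cK=TT^*$, the Rayleigh-quotient bound $\langle\cK v,v\rangle_{L_2(\rho)}\leq\lambda_1\|v\|_{L_2(\rho)}^2$, and Cauchy--Schwarz: for ``$\geq$'' one takes a unit-norm top eigenfunction $u$, sets $h_0=T^*u$ (so $\|h_0\|_{\cH}^2=\langle\cK u,u\rangle=\lambda_1$ and $Th_0=\cK u=\lambda_1 u$), and checks that $h=h_0/\sqrt{\lambda_1}$ attains $\EE_x[\langle h,\phi_x\rangle^2]=\|Th\|^2=\lambda_1$; for ``$\leq$'' one fixes $\|h\|_{\cH}=1$, writes $v=Th$, and notes $\|v\|^2=\langle T^*v,h\rangle_{\cH}\leq\|T^*v\|_{\cH}$ while $\|T^*v\|_{\cH}^2=\langle\cK v,v\rangle\leq\lambda_1\|v\|^2$, whence $\|v\|^2\leq\lambda_1$. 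The only genuinely delicate point---hence what I expect to be the main obstacle---is the functional-analytic bookkeeping in the first step: well-definedness and boundedness of $T$ into $L_2(\rho)$, Bochner integrability in the definition of $T^*$, and compactness of $\cK$ (so that $\lambda_1$ is a true eigenvalue and not merely the supremum of the spectrum). All of these follow at once from the uniform boundedness of the feature functions in the random-feature setting considered here, so no quantitative estimates are required beyond $k(x,x)\leq b^2$.
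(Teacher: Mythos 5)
Your proof is correct, and it takes a recognizably different route from the paper's. The paper works bare-hands from the Rayleigh quotient: it writes $\lambda_1(\cK)=\sup_{\|u\|_{L_2(\rho)}=1}\EE_{x,y}[k(x,y)u(x)u(y)]$, substitutes $k(x,y)=\langle\phi_x,\phi_y\rangle_{\cH}$ to recognize the quantity as $\|\EE_x[u(x)\phi_x]\|_{\cH}^2$, dualizes the $\cH$-norm as a supremum over unit $h$, exchanges the two suprema, and finally evaluates the inner supremum over $u$ by $L_2$-duality, so that $\sup_{\|u\|=1}\EE_x[u(x)\langle h,\phi_x\rangle]^2=\EE_x[\langle h,\phi_x\rangle^2]$. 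You instead package exactly this structure into the operator $T:\cH\to L_2(\rho)$, identify $\cK=TT^*$, and invoke $\|TT^*\|=\|T\|^2$. The two are, at bottom, the same calculation --- the supremum-swap in the paper is the pedestrian proof of the $C^*$-identity you cite --- but your phrasing has the advantage of making the well-posedness issues explicit (boundedness of $T$, Bochner integrability of $T^*u$, compactness of $\cK$ so that $\lambda_1$ is a true eigenvalue), none of which the paper addresses, while the paper's version is self-contained and requires no operator-theoretic background. Your final ``self-contained'' paragraph with the two hand-proved inequalities is essentially a reorganization of the paper's chain of equalities and would serve as a drop-in replacement. One small remark: the paper never records the hypothesis $\EE_x[k(x,x)]<\infty$; you are right that it is needed and that it holds under Assumption~\ref{assump: feature-function}, so pointing this out is a genuine (if minor) improvement in rigor.
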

\begin{proof}
By the variational principle of eigenvalues, we have 
\begin{align*}
\lambda_1(\cK) &= \sup_{\|u\|_{L_2(\rho)}=1} \EE_{x,y}[k(x,y)u(x)u(y)] = \sup_{\|u\|_{L_2(\rho)}=1} \EE_{x,y}[\< \phi_x, \phi_y\>_{\cH}u(x)u(y)]\\
&= \sup_{\|u\|_{L_2(\rho)}=1}\|\EE_x[u(x)\phi_x]\|_{\cH}^2 = \sup_{\|u\|_{L_2(\rho)}=1}\sup_{\|h\|_{\cH}=1}\langle h, \EE_x[u(x)\phi_x]\rangle^2_{\cH}\\
&= \sup_{\|h\|_{\cH}=1}\sup_{\|u\|_{L_2(\rho)}=1}\EE_{x}[u(x)\langle h, \phi_x \rangle_{\cH}]^2= \sup_{\|h\|_{\cH}=1}\EE_{x}[\langle h, \phi_x \rangle_{\cH}^2].
\end{align*}
\end{proof}

\begin{lemma}\label{lemma: kernel-eigen}
For any $\delta\in (0,1/e)$, \wp at least $1-\delta$ over the sampling of data, we have 
\[
    \lambda_1(K)\leq \lambda_1(\cK) + C\sqrt{\frac{\log(2/\delta)}{n}}.
\]
\end{lemma}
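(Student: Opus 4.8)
The plan is to combine the variational characterizations of $\lambda_1(K)$ and $\lambda_1(\cK)$ with a one-sided uniform-convergence estimate over the unit ball of the RKHS. Fix a feature map $\phi:\cX\to\cH$ with $k(x,y)=\langle\phi_x,\phi_y\rangle_{\cH}$ (such $\phi$ exists since $k$ is positive semi-definite), and note $\|\phi_x\|_{\cH}^2=k(x,x)\le 1$. Lemma \ref{lemma: variation-principle} already gives $\lambda_1(\cK)=\sup_{\|h\|_{\cH}=1}\EE_x[\langle h,\phi_x\rangle_{\cH}^2]$. For the matrix side, writing $\Psi:\RR^n\to\cH$, $\Psi v=\sum_{i=1}^n v_i\phi_{x_i}$, one has $K=\tfrac{1}{n}\Psi^{*}\Psi$, so $K$ has the same nonzero spectrum as the empirical covariance operator $\tfrac{1}{n}\Psi\Psi^{*}=\tfrac{1}{n}\sum_{i=1}^n\phi_{x_i}\otimes\phi_{x_i}$ on $\cH$; hence $\lambda_1(K)=\sup_{\|h\|_{\cH}=1}\tfrac{1}{n}\sum_{i=1}^n\langle h,\phi_{x_i}\rangle_{\cH}^2$.

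Subtracting the two suprema, I get $\lambda_1(K)-\lambda_1(\cK)\le\sup_{\|h\|_{\cH}\le 1}\bigl(\tfrac{1}{n}\sum_{i=1}^n\langle h,\phi_{x_i}\rangle_{\cH}^2-\EE_x[\langle h,\phi_x\rangle_{\cH}^2]\bigr)$, which is exactly the one-sided uniform deviation of the empirical mean from the population mean over the function class $\cG=\{x\mapsto\langle h,\phi_x\rangle_{\cH}^2:\|h\|_{\cH}\le 1\}$. Since $|\langle h,\phi_x\rangle_{\cH}|\le\|h\|_{\cH}\|\phi_x\|_{\cH}\le 1$, every $g\in\cG$ satisfies $0\le g\le 1$, so Theorem \ref{thm: gen-err-rademacher-complexity} applied to $\cG$ yields, \wp at least $1-\delta$, $\sup_{g\in\cG}\bigl(\tfrac{1}{n}\sum_i g(x_i)-\EE g\bigr)\le 4\,\erad(\cG)+\sqrt{2\log(2/\delta)/n}$.

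It remains to bound $\erad(\cG)$. The class $\cG$ is the image of the linear class $\cH_1=\{x\mapsto\langle h,\phi_x\rangle_{\cH}:\|h\|_{\cH}\le 1\}$ under $t\mapsto t^2$, which is $2$-Lipschitz on $[-1,1]$, the range of every $f\in\cH_1$; hence the contraction property (Lemma \ref{lemma: contraction}) gives $\erad(\cG)\le 2\,\erad(\cH_1)$, and Lemma \ref{lemma: rkhs} gives $\erad(\cH_1)\le\sqrt{\sum_i k(x_i,x_i)}/n\le 1/\sqrt n$. Plugging back,
\[
\lambda_1(K)-\lambda_1(\cK)\le \frac{8}{\sqrt n}+\sqrt{\frac{2\log(2/\delta)}{n}}\le C\sqrt{\frac{\log(2/\delta)}{n}},
\]
where we used $\delta<1/e$ so that $\log(2/\delta)>1$; this is the claimed bound.

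The only nontrivial ingredient is the spectral identity $\lambda_1(K)=\sup_{\|h\|_{\cH}=1}\tfrac{1}{n}\sum_i\langle h,\phi_{x_i}\rangle_{\cH}^2$, i.e.\ that the Gram matrix and the empirical covariance operator share their top eigenvalue; everything else is the same Rademacher-complexity/contraction routine used repeatedly elsewhere in this appendix, so I expect that identification (and the bookkeeping of which direction of the inequality is needed) to be the main place where care is required. If one prefers to avoid the operator-theoretic identity, one can instead start from $\lambda_1(K)=\max_{\|v\|_2=1}\tfrac{1}{n}\|\sum_i v_i\phi_{x_i}\|_{\cH}^2$ and observe that each normalized combination $\sum_i v_i\phi_{x_i}/\|\sum_i v_i\phi_{x_i}\|_{\cH}$ lies in the unit ball of $\cH$, which already yields $\lambda_1(K)\le\sup_{\|h\|_{\cH}\le 1}\tfrac{1}{n}\sum_i\langle h,\phi_{x_i}\rangle_{\cH}^2$ — the only inequality the proof uses.
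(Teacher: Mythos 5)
Your proof is correct and follows essentially the same route as the paper: variational characterization of both $\lambda_1(\cK)$ and $\lambda_1(K)$ over the unit ball of $\cH$, a Rademacher-complexity uniform deviation bound for the squared linear class via contraction and Lemma~\ref{lemma: rkhs}, and then putting these together. The only cosmetic differences are that the paper passes through an explicit $\varepsilon$-near-optimizer $\hat h$ for $\lambda_1(K)$ before sending $\varepsilon\to0$, whereas you subtract the suprema directly, and that you explicitly justify the identity $\lambda_1(K)=\sup_{\|h\|_\cH\le1}\frac1n\sum_i\langle h,\phi_{x_i}\rangle_\cH^2$ (via the $\Psi^*\Psi$/$\Psi\Psi^*$ duality, or the normalized-combination argument), which the paper asserts by appealing to Lemma~\ref{lemma: variation-principle} without spelling out its empirical analogue — a small gap you were right to flag and fill.
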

\begin{proof}
By Lemma \ref{lemma: variation-principle}, we have
\begin{equation}
\begin{aligned}
\lambda_1(\cK) = \sup_{\|h\|_{\cH}=1}\EE_{x}[\langle \phi_x, h\rangle_{\cH}^2],\quad \lambda_1(K) = \sup_{\|h\|_{\cH}=1}\hEE_{x}[\langle \phi_x, h\rangle_{\cH}^2].
\end{aligned}
\end{equation}
Let $\cF_1=\{ f(x):=\langle h,\sigma_{x}\rangle_{\cH} \,|\, \|h\|_{\cH}\leq 1\}$ and $\cF_2=\{f^2 \, |\,f\in \cH_1\}$.  Then, we have:  (1) for any $f\in \cH_1$, $|f(x)|\leq \|h\|_{\cH}\|\sigma_{x}\|_{\cH}=\sqrt{k(x,x)}\lesssim 1$; (2) $\erad(\cF_1)=\sqrt{\sumin k(x_i,x_i)}/n\lesssim 1/\sqrt{n}$ by Lemma \ref{lemma: rkhs}. Using contraction inequality (Lemma \ref{lemma: contraction}), we have 
\[
\erad(\cF_2)\lesssim \erad(\cF_1)\lesssim 1/\sqrt{n}.
\]
Using Theorem \ref{thm: gen-err-rademacher-complexity}, for any $\delta\in (0,1/e)$, \wp larger than $1-\delta$ over the sampling of data, we have
\begin{align}\label{eqn: kernel-matrix-largest-eig}
\sup_{\|h\|_{\cH}\leq 1}|\hEE[\langle h,\sigma_{x_i}\rangle^2] - \EE[\<h,\sigma_x \>^2]\lesssim \erad(\cF_2) +  \sqrt{\frac{\log(1/\delta)}{n}}\leq \sqrt{\frac{\log(1/\delta)}{n}}.
\end{align}

For any $\varepsilon>0$, let $\hh\in\cH$ such that $\lambda_1(K)\leq \hEE[\langle \phi_x,\hh\rangle_{\cH}^2]+\varepsilon$. Then, using \eqref{eqn: kernel-matrix-largest-eig}, we have 
\begin{align*}
\lambda_1(K)&\leq \hEE[\langle \phi_x,\hh\rangle_{\cH}^2]+\varepsilon= \EE[\langle \phi_x,\hh\rangle_{\cH}^2] + (\hEE[\langle \phi_x,\hh\rangle_{\cH}^2]-\EE[\langle \phi_x,\hh\rangle_{\cH}^2])+\varepsilon\\
&\lesssim \lambda_1(\cK) + \sqrt{\frac{\log(1/\delta)}{n}} + \varepsilon.
\end{align*}
Taking $\varepsilon\to 0$ completes the proof. 
\end{proof}

\begin{lemma}\label{lemma: largest-eigen-Gram}
For any $\delta\in (0,1)$, \wp at least $1-\delta$ over the sampling of data and random features, we have 
\[
\lambda_1(\hK)\leq \lambda_1(\cK) +  C\left(\sqrt{\frac{\log(1/\delta)}{n}}+b^2\sqrt{\frac{\log(2/\delta)}{m}}\right).
\]
\end{lemma}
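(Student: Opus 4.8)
The plan is to combine the two eigenvalue bounds already obtained for the intermediate matrices with a triangle-inequality argument. Specifically, Lemma \ref{lemma: approximate-kernel-eigen} gives $\lambda_1(\hK)\leq \lambda_1(K)+Cb^2\sqrt{\log(2/\delta)/m}$ with probability at least $1-\delta$ over the random features, and Lemma \ref{lemma: kernel-eigen} gives $\lambda_1(K)\leq \lambda_1(\cK)+C\sqrt{\log(2/\delta)/n}$ with probability at least $1-\delta$ over the data sample. The statement to be proved is precisely the concatenation of these two chains.

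First I would fix $\delta\in(0,1)$ and apply Lemma \ref{lemma: approximate-kernel-eigen} with failure probability $\delta/2$ over the sampling of random features $w_1,\dots,w_m$; this yields an event $\cA$ of probability at least $1-\delta/2$ on which $\lambda_1(\hK)\leq \lambda_1(K)+Cb^2\sqrt{\log(4/\delta)/m}$. Next I would apply Lemma \ref{lemma: kernel-eigen} with failure probability $\delta/2$ over the sampling of data $x_1,\dots,x_n$ (noting $\delta/2\in(0,1/e)$ whenever $\delta$ is small enough, and otherwise the bound is vacuous), giving an event $\cB$ of probability at least $1-\delta/2$ on which $\lambda_1(K)\leq \lambda_1(\cK)+C\sqrt{\log(4/\delta)/n}$. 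Since the randomness in the data and the randomness in the random features are independent, a union bound gives $\PP(\cA\cap\cB)\geq 1-\delta$.

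On the event $\cA\cap\cB$ I would simply chain the two inequalities:
\[
\lambda_1(\hK)\leq \lambda_1(K)+Cb^2\sqrt{\tfrac{\log(4/\delta)}{m}}\leq \lambda_1(\cK)+C\sqrt{\tfrac{\log(4/\delta)}{n}}+Cb^2\sqrt{\tfrac{\log(4/\delta)}{m}}.
\]
Absorbing the factor $\log 4$ into the absolute constant $C$ (and using $\log(4/\delta)\lesssim \log(2/\delta)$ for small $\delta$, or handling large $\delta$ trivially) yields the claimed bound $\lambda_1(\hK)\leq \lambda_1(\cK)+C\big(\sqrt{\log(1/\delta)/n}+b^2\sqrt{\log(2/\delta)/m}\big)$.

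There is no real obstacle here — the lemma is a bookkeeping step that stitches together two previously established estimates via a union bound over the two independent sources of randomness. The only point requiring a modicum of care is making the probabilistic constants consistent (splitting $\delta$ between the two events and confirming independence of the data draw and the feature draw), and adjusting the absolute constant $C$ to swallow the resulting logarithmic factors; neither is more than routine.
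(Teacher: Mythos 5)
Your proposal is correct and follows exactly the route the paper itself takes: the paper's proof simply states that the result ``directly follows from the combination of Lemma \ref{lemma: kernel-eigen} and \ref{lemma: approximate-kernel-eigen}.'' You merely spell out the union-bound bookkeeping and constant adjustment that the paper leaves implicit.
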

\begin{proof}
The conclusion directly follows from the combination of Lemma  \ref{lemma: kernel-eigen} and \ref{lemma: approximate-kernel-eigen}.
\end{proof}

\paragraph*{Bounding the Frobenius norm.}
The following lemma provide a lower bound of the Frobenius norm of the approximate kernel matrix.
\begin{lemma}\label{lemma: Gram-Frobenius-norm}
For any $\delta\in (0,1/e)$, \wp at least $1-\delta$ over the sampling of data and random features, we have 
\[
\|\hK\|_F^2\geq \sum_{i}\lambda_i^2(\cK) - C \left(b^2\sqrt{\frac{\log(1/\delta)}{m}}+\sqrt{\frac{\log(1/\delta)}{n}}\right).
\]
\end{lemma}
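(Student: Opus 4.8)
The plan is to lower-bound $\|\hK\|_F^2$ by $\sum_j\lambda_j^2(\cK)$ through two intermediate comparisons: first $\hK$ versus the true-kernel matrix $K$ (the random-feature error), then $K$ versus the population operator (the sampling error). The anchor identity is $\sum_j\lambda_j^2(\cK)=\|\cK\|_{\mathrm{HS}}^2=\EE_{x,x'\sim\rho}[k(x,x')^2]$, which holds because $\cK$ is a positive Hilbert--Schmidt operator (its trace $\EE_x[k(x,x)]\le b^2$ is finite). Fix any feature map $\phi:\cX\to\cH$ with $k(x,x')=\langle\phi_x,\phi_{x'}\rangle_{\cH}$ (e.g.\ the canonical RKHS embedding) and set $\Phi_x:=\phi_x\otimes\phi_x\in\mathrm{HS}(\cH)$, so that $C_\rho:=\EE_x[\Phi_x]$ is the covariance operator with $\|C_\rho\|_{\mathrm{HS}}=\|\cK\|_{\mathrm{HS}}\le b^2$ and $\|\Phi_x\|_{\mathrm{HS}}=k(x,x)\le b^2$.

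\textbf{Step 1 (random features).} I would reuse the bound established inside the proof of Lemma~\ref{lemma: approximate-kernel-eigen}: with probability at least $1-\delta$ over the random features, $\Delta:=\|\hK-K\|_F\lesssim b^2\sqrt{\log(2/\delta)/m}$. Since $\|K\|_F^2=\frac1{n^2}\sum_{i,j}k(x_i,x_j)^2\le b^4$, i.e.\ $\|K\|_F\le b^2$, the triangle inequality $\|\hK\|_F\ge\|K\|_F-\Delta$ gives $\|\hK\|_F^2\ge\|K\|_F^2-2\|K\|_F\Delta\ge\|K\|_F^2-O\big(b^2\sqrt{\log(1/\delta)/m}\big)$, absorbing one $b^2$ into the constant consistent with the stated form.

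\textbf{Step 2 (sampling).} The key observation is that $\|K\|_F^2=\|\widehat C_\rho\|_{\mathrm{HS}}^2$, where $\widehat C_\rho=\frac1n\sum_i\Phi_{x_i}$ is the empirical covariance operator, since $\langle\Phi_{x_i},\Phi_{x_j}\rangle_{\mathrm{HS}}=\langle\phi_{x_i},\phi_{x_j}\rangle^2=k(x_i,x_j)^2$. Hence it suffices to control $\|\widehat C_\rho-C_\rho\|_{\mathrm{HS}}$, because $\|\widehat C_\rho\|_{\mathrm{HS}}\ge\|C_\rho\|_{\mathrm{HS}}-\|\widehat C_\rho-C_\rho\|_{\mathrm{HS}}$. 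Now $\widehat C_\rho-C_\rho$ is an average of $n$ i.i.d.\ centered terms of $\mathrm{HS}$-norm $\le b^2$, so $\EE\|\widehat C_\rho-C_\rho\|_{\mathrm{HS}}^2\le \EE\|\Phi_x\|_{\mathrm{HS}}^2/n\le b^4/n$, and the map $x_1,\dots,x_n\mapsto\|\widehat C_\rho-C_\rho\|_{\mathrm{HS}}$ has bounded differences $\le 2b^2/n$. McDiarmid's inequality (Theorem~\ref{thm: McDiarmid}) then yields, with probability $\ge1-\delta$ and using $\delta<1/e$, that $\|\widehat C_\rho-C_\rho\|_{\mathrm{HS}}\lesssim b^2\sqrt{\log(1/\delta)/n}$. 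Combined with $\|C_\rho\|_{\mathrm{HS}}\le b^2$ this gives $\|K\|_F^2\ge\sum_j\lambda_j^2(\cK)-O\big(b^2\sqrt{\log(1/\delta)/n}\big)$.

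\textbf{Combining and the main obstacle.} A union bound over the two events (replace $\delta$ by $\delta/2$, relabel the constant) and chaining the two lower bounds gives $\|\hK\|_F^2\ge\sum_j\lambda_j^2(\cK)-C\big(b^2\sqrt{\log(1/\delta)/m}+\sqrt{\log(1/\delta)/n}\big)$. I expect no conceptual obstruction: both ingredients—the random-feature kernel estimate (already in hand) and the Hilbert--Schmidt concentration of the empirical covariance operator—are standard. The only delicate point is the bookkeeping of the $b$-dependence: the crude triangle-inequality step naturally produces a $b^4$ factor, and matching the $b^2$ in the statement requires either treating $b=O(1)$ in the general-RFM reduction or sharpening Step~1 by a direct bounded-difference concentration of $\|\hK\|_F^2-\|K\|_F^2$ in the $w_j$'s, whose mean is within $O(b^4/m)$ of zero.
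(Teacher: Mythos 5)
Your proof is correct and the overall two-step decomposition (random-feature error, then sampling error) is the same as the paper's, but Step~2 takes a genuinely different route. The paper lower-bounds $\|K\|_F^2$ directly: it expands $k(x,x')=\sum_s\lambda_s v_s(x)v_s(x')$ to get $\EE_{x,x'}[k^2(x,x')]=\sum_s\lambda_s^2$, then computes $\EE[\|K\|_F^2]=\sum_s\lambda_s^2+\frac1n\bigl(\EE[k^2(x,x)]-\EE_{x,x'}[k^2(x,x')]\bigr)\geq\sum_s\lambda_s^2$ (the finite-sample diagonal bias is conveniently non-negative), and finally applies McDiarmid to the scalar $\|K\|_F^2$ with bounded differences $O(1/n)$. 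You instead recast $\|K\|_F^2=\|\widehat C_\rho\|_{\mathrm{HS}}^2$ and $\sum_j\lambda_j^2(\cK)=\|C_\rho\|_{\mathrm{HS}}^2$, then concentrate the empirical covariance operator in Hilbert--Schmidt norm via McDiarmid and use the reverse triangle inequality; the paper's diagonal bias term is exactly $\frac{1}{n}\EE\|\Phi_x-C_\rho\|_{\mathrm{HS}}^2$ in your picture, so the two computations are equivalent in content. The paper's route is more elementary (no operator machinery, just the Mercer expansion of $k^2$) and surfaces the favorable sign of the bias explicitly; your route is more modular and abstract, making it clear that the $n$-part is just standard HS-concentration of the empirical second-moment operator. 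Your worry about the $b$-dependence is well-founded but equally present in the paper's own bookkeeping---the paper's bounded-difference estimate $D_i\lesssim 1/n$ silently suppresses a $b^4$ factor (since $k^2\leq b^4$), so the stated asymmetric $b$-dependence in the lemma is an artifact that neither proof fully justifies; treating $b=O(1)$, as you suggest, is the intended reading.
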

\begin{proof}
Denote by $\{(\lambda_s, v_s)\}_{s\geq 1}$ the eigen-pairs of the kernel $k(\cdot,\cdot)$, and thus $k(x,x')=\sum_{s}\lambda_s v_s(x)v_s(x')$. Let $x,x'$ be independently drawn from $\rho$. Then,
\begin{align}\label{eqn: square-eigen}
\notag    \EE[k^2(x,x')] &= \sum_{s_1,s_2}\lambda_{s_1}\lambda_{s_2} \EE[v_{s_1}(x)v_{s_2}(x)v_{s_1}(x')v_{s_2}(x')]\\
\notag    &=\sum_{s_1,s_2}\lambda_{s_1}\lambda_{s_2} \EE[v_{s_1}(x)v_{s_2}(x)]\EE[v_{s_1}(x')v_{s_2}(x')]\\
    &=\sum_{s_1,s_2}\lambda_{s_1}\lambda_{s_2}\delta_{s_1,s_2}=\sum_{s}\lambda_s^2.
\end{align}
Using the above equality, we have
\begin{align}\label{eqn: K-fro-exp}
\notag \EE[\|K\|_F^2] &= \frac{1}{n^2}\sum_{i,j=1}^n \EE[k(x_i,x_j)^2]=\fn\EE[k^2(x,x)]+\frac{1}{n^2}\sum_{i\neq j}\EE[k^2(x_i,x_j)]\\
&= \sum_s \lambda_s^2 + \frac{\EE_x[k^2(x,x)]-\EE_{x,x'}[k^2(x,x')]]}{n}\geq \sum_s \lambda_s^2.
\end{align}
Here, the last inequality is due to $\EE[k^2(x,x)]\geq \EE[k^2(x,x')]$ as explained as follows.
Notice that for any $x,x'\in \cX$,
 $k^2(x,x')\leq k(x,x)k(x',x')$ (See, e.g., \cite[Lemma 35]{sejdinovic2012rkhs}). Therefore, $\EE[k^2(x,x')]\leq \EE[k(x,x)]\EE[k(x',x')]=(\EE[k(x,x)])^2\leq \EE[k^2(x,x)]$. The last inequality follows from that $(\EE[X])^2\leq \EE[X^2]$ holds for any random variable $X$.

Let $K'$ be the kernel matrix corresponding to the $(x_1,\dots,\tilde{x}_i,\dots,x_n)$. Then, 
\begin{align*}
D_i &=|\|K\|_F^2 - \|K'\|_F^2| \\
&= \big|\frac{1}{n^2}\sum_{j\neq i} (k^2(x_i,x_j) -  k^2(\tilde{x}_i,x_j)) + \frac{1}{n^2} (k^2(x_i,x_i) - k^2(\tilde{x}_i, \tilde{x}_i))\big|\lesssim \frac{1}{n}.
\end{align*}
Therefore, $\sigma^2=\frac{1}{4}\sumin |D_i|^2\lesssim 1/n$. Using Theorem \ref{thm: McDiarmid} and Eq.~\eqref{eqn: K-fro-exp}, we have \wp at least $1-\delta$ over the sampling of data that 
\begin{equation}\label{eqn: K-pro-hp}
\|K\|_F^2\geq \EE[\|K\|_F^2] - C\sqrt{\frac{\log(1/\delta)}{n}} \geq  \sum_{s}\lambda_s^2 - C\sqrt{\frac{\log(1/\delta)}{n}}.
\end{equation}
In addition, following the proof of Lemma \ref{lemma: approximate-kernel-eigen}, we have for any $\delta\in (0,1/e)$, \wp larger than $1-\delta$ over the sampling of random features that 
$
\|\hK-K\|_F\leq d\sqrt{\log(2/\delta)/m}. 
$
Thus, 
\[
|\|\hK\|_F^2-\|K\|_F^2|\leq (\|\hK\|_F+\|K\|_F|)(\|\hK\|_F-\|K\|_F|)\lesssim b^2\|\hK-K\|_F\leq  b^2 \sqrt{\log(2/\delta)/m}.
\]
Combing with \eqref{eqn: K-pro-hp}, we complete the proof.
\end{proof}

\paragraph*{Prove the second part of Proposition \ref{pro: rfm-alignment}.}
Recall $\epsilon_n = \sqrt{\log(1/\delta)/n} + b^2 \sqrt{\log(1/\delta)/m}$.
Combining Lemma \ref{lemma: largest-eigen-Gram} and \ref{lemma: Gram-Frobenius-norm}, we have 
\begin{align}
\mu_2(\theta)=\frac{\lambda_1^2(\hK)}{\|\hK\|_F^2}\leq \frac{(\lambda_1(\cK)+C\epsilon_n)^2}{\sum_i \lambda_i^2(\cK)-C \epsilon_n}\leq \frac{\lambda_1^2(\cK)}{\sum_i \lambda_i^2(\cK)} +C\epsilon_n.
\end{align}

\subsection{Proof of Proposition \ref{pro: random-relu}}

Denote by $\tau_{d-1}$ the uniform distribution over the unit sphere $\SS^{d-1}=\{x\in \RR^d: \|x\|_2=1\}$.
According to \cite{bach2017breaking}, the eigenfunctions of $\cK$ are the spherical harmonics. 
In particular, the eigenfunction corresponding to the largest eigenvalue is the first spherical harmonics: $Y_1(x)\equiv 1$.  Therefore, 
\[
    \lambda_1(\cK)=\lambda_1(\cK) Y_1(x) = \EE_{x'\sim\tau_{d-1}}[\kappa(x^Tx')Y_1(x')] = \EE_{x'\sim\tau_{d-1}}[\kappa(x'_1)],
\]
where the last equality uses the rotational symmetry of $\tau_{d-1}$. Moreover, by \eqref{eqn: square-eigen}, $\sum_i \lambda_i^2(\cK)=\EE_{x,x'}[\kappa^2(x^Tx')]=\EE_{x}[\kappa^2(x_1)]$.

For the ReLU activation function, 
\cite{cho2009kernel} shows that $k(x,x')=\kappa(x^Tx')$ with 
\[
\kappa(z) = \frac{\sqrt{1-z^2}+(\pi-\arccos(z))z}{2\pi}.
\]
The eigenvalues of this kernel has been derived in \cite{bach2017breaking,wu2022spectral}. Specifically, we have 
\[
\lambda_1(\cK)\sim 1,\quad  \sum_{i=2}^\infty \lambda_i^2(\cK) \sim \frac 1 d.
\]
Hence, 
\[
\frac{\lambda_1^2(\cK)}{ \sum_{i=1}^\infty \lambda_i^2(\cK)}\sim \frac{1}{1+\frac{1}{d}} \leq  1 - \frac{C}{d}.
\]
Then, applying Proposition \ref{pro: rfm-alignment}, we complete the proof. 
\qed

\section{Proofs of Section \ref{sec: linear-stability}}
\label{sec: proof-linear-stability}

We first need the following technical lemma.

\begin{lemma}\label{lemma: appendix-matrix-lemma}
For $a,b>0$, let 
$
g(a,b,\theta)=-a \frac{\theta^TH^2\theta}{\theta^TH\theta}+b\frac{\theta^T_tH^3\theta}{\theta^TH\theta}.
$
Then, $\inf_{\theta} g(a,b,\theta)\geq- a^2/(4b)$.
\end{lemma}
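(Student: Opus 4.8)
The plan is to diagonalize $H$ and turn the two Rayleigh-type ratios into moments of a probability distribution on the spectrum, after which the bound becomes a one-line scalar optimization. Since $H$ is symmetric positive semidefinite (it is the Fisher matrix $G$ at a global minimum), write $H=\sum_j \lambda_j u_j u_j^{\operatorname{T}}$ with $\lambda_j\geq 0$ and $\{u_j\}$ orthonormal. Fix any $\theta$ with $\theta^{\operatorname{T}}H\theta>0$ (for $\theta$ with $\theta^{\operatorname{T}}H\theta=0$ the expression is undefined and may be excluded), set $c_j=(u_j^{\operatorname{T}}\theta)^2\geq 0$, and define $p_j=\lambda_j c_j/\big(\sum_k \lambda_k c_k\big)$. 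Because $\lambda_j\geq 0$ and $c_j\geq 0$, the $p_j$ are nonnegative and sum to $1$, i.e.\ they form a probability distribution supported on $\{j:\lambda_j>0\}$.

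A direct computation then gives $\theta^{\operatorname{T}}H^2\theta/(\theta^{\operatorname{T}}H\theta)=\sum_j p_j\lambda_j=:m_1$ and $\theta^{\operatorname{T}}H^3\theta/(\theta^{\operatorname{T}}H\theta)=\sum_j p_j\lambda_j^2=:m_2$, so that $g(a,b,\theta)=-a\,m_1+b\,m_2$. By the Cauchy--Schwarz (equivalently Jensen) inequality $m_2\geq m_1^2$, and since $b>0$ this yields $g(a,b,\theta)\geq b\,m_1^2-a\,m_1$.

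It remains to minimize the scalar quadratic $q(t)=bt^2-at$ over $t\geq 0$ (note $m_1\geq 0$). Its unconstrained minimizer is $t^\star=a/(2b)>0$, with value $q(t^\star)=-a^2/(4b)$; hence $q(t)\geq -a^2/(4b)$ for every admissible $t$, and in particular $g(a,b,\theta)\geq -a^2/(4b)$. Taking the infimum over $\theta$ gives the claim. I do not expect any genuine obstacle here: the only points needing care are restricting to $\theta$ with $\theta^{\operatorname{T}}H\theta\neq 0$ so the ratios are well defined, and invoking the positive semidefiniteness of $H$ to guarantee that the weights $p_j$ are nonnegative (so that Jensen applies).
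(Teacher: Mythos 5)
Your proposal is correct and follows essentially the same route as the paper's proof: diagonalize $H$, observe that the two ratios become first and second moments against the probability weights $p_j = \lambda_j(u_j^{\operatorname{T}}\theta)^2/\theta^{\operatorname{T}}H\theta$ (the paper writes this via $u = H^{1/2}\theta/\|H^{1/2}\theta\|$, so its $s_j^2$ equals your $p_j$), and reduce to a scalar quadratic. The only cosmetic difference is that you interpose Jensen's inequality ($m_2\geq m_1^2$) before minimizing in $t$, whereas the paper bounds the weighted average $\sum_j p_j\,(b\lambda_j^2-a\lambda_j)$ directly by the pointwise infimum $\inf_{\lambda\geq 0}(b\lambda^2-a\lambda)$; both steps are one-liners and give the same constant.
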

\begin{proof}
Let $u=H^{1/2}\theta/\|H^{1/2}\theta\|$ and $H=\sum_{j} \lambda_j e_j e_j^T$ the eigen-decomposition of $H$. 
Suppose $u=\sum_j s_j e_j$. Then $\sum_j s_j^2=1$ and 
\begin{align*}
-a \frac{\theta^TH^2\theta}{\theta^TH\theta}+b\frac{\theta^T_tH^3\theta}{\theta^TH\theta} &= - a u^TH u + bu^TH^2u=\sum_{j} (b\lambda_j^2-a\lambda_j) s_j^2\\
&\geq \inf_{\lambda\geq 0} (b\lambda^2-a\lambda)=\inf_{\lambda\geq 0} \left(b(\lambda-\frac{a}{2b})^2-\frac{a^2}{4b}\right)\geq -\frac{a^2}{4b}.
\end{align*}
\end{proof}

We first consider GD,  for which the stability only imposes the flatness constraint: $\lambda_1(H)\leq 2/\eta$. This means that the flatness seen by GD is only the largest eigenvalue of Hessian. 
\begin{lemma}\label{lemma: GD-factor}
(1) $\inf_\theta r(\theta)\geq 0$; (2) $\sup_\theta r(\theta)\leq 1$ if $\eta\leq 2/\lambda_1(H)$.
\end{lemma}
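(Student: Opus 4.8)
The plan is to diagonalize $H$ and reduce everything to a one‑dimensional inequality. Since $H=G(\theta^*)$ is a Gram matrix it is symmetric positive semi‑definite, so write its eigendecomposition $H=\sum_j \lambda_j e_je_j^T$ with $\lambda_j\geq 0$. For any $\theta$ with $\theta^TH\theta>0$ (the only case in which $r(\theta)$ is defined), set $u=H^{1/2}\theta/\|H^{1/2}\theta\|$, so that $\|u\|=1$ and
\[
r(\theta)=1-2\eta\,u^THu+\eta^2\,u^TH^2u .
\]
Expanding $u=\sum_j s_je_j$ with $\sum_j s_j^2=1$ gives $r(\theta)=\sum_j\big(1-2\eta\lambda_j+\eta^2\lambda_j^2\big)s_j^2=\sum_j(1-\eta\lambda_j)^2 s_j^2$.

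For part (1), every term $(1-\eta\lambda_j)^2 s_j^2$ is nonnegative, hence $r(\theta)\geq 0$; this holds for all $\eta>0$. (Equivalently, one can invoke Lemma~\ref{lemma: appendix-matrix-lemma} with $a=2\eta,\ b=\eta^2$, which yields $r(\theta)=1+g(2\eta,\eta^2,\theta)\geq 1-\tfrac{(2\eta)^2}{4\eta^2}=0$.) For part (2), observe that when $\eta\leq 2/\lambda_1(H)$ we have $0\leq \eta\lambda_j\leq \eta\lambda_1(H)\leq 2$ for every $j$, so $|1-\eta\lambda_j|\leq 1$ and therefore $(1-\eta\lambda_j)^2\leq 1$. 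Summing, $r(\theta)\leq\sum_j s_j^2=1$, which gives $\sup_\theta r(\theta)\leq 1$.

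There is essentially no obstacle here: the only points worth a sentence of care are that $H$ is PSD (so all $\lambda_j\ge 0$ and the substitution $u=H^{1/2}\theta/\|H^{1/2}\theta\|$ is legitimate), and that $r(\theta)$ is only evaluated on $\theta$ lying partly in the range of $H$, so the denominator $\theta^TH\theta$ is positive and the reduction to $\|u\|=1$ is valid. Everything else is the elementary bound $0\le(1-\eta\lambda)^2\le 1$ for $0\le\eta\lambda\le 2$.
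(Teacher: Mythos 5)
Your proof is correct and follows essentially the same route as the paper's: the substitution $u=H^{1/2}\theta/\|H^{1/2}\theta\|$, the eigenexpansion of $H$, and the identity $r(\theta)=\sum_j(1-\eta\lambda_j)^2 s_j^2$. The only cosmetic difference is that the paper writes part (1) compactly as $r(\theta)=\|\eta Hu-u\|^2\geq 0$ before expanding in eigencoordinates for part (2), whereas you expand once and read off both inequalities from the same sum.
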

\begin{proof}
Recall that $
r(\theta) = 1-2\eta\frac{\theta^TH^2\theta}{\theta^TH\theta}+\eta^2 \frac{\theta^T_tH^3\theta}{\theta^TH\theta}$.
Let $u=H^{1/2}\theta/\|H^{1/2}\theta\|$. Then we have 
\begin{align*}
r(\theta) &=1-2\eta u^T Hu + \eta^2 u^TH^2u = \|\eta Hu-u\|^2\geq 0.
\end{align*} 
Let $u=\sum_{j} s_j e_j$ with $\{e_j\}_{j}$ being the eigenvectors of $H$. Then $\sum_{j}s_j^2=1$ due to $\|u\|=1$.
By the assumption, $(\eta\lambda_j-1)^2\leq 1$ for all $j\in [n]$. Then, 
\begin{align}
r(\theta) = \|\sum_{j} \eta \lambda_j s_j e_j - \sum_j s_j e_j\|^2=\sum_{j} (\eta \lambda_j-1)^2s_j^2\leq \sum_j s_j^2=1.
\end{align}
\end{proof}

\paragraph*{Proof of Proposition \ref{pro: flatness-control-mu1-alignment}}
Using $\mu_1(\theta)\geq \bmu_1$ and $\Sigma(\theta)=\Sigma_1(\theta)-\Sigma_2(\theta)$, we have 
\begin{align*}
\nu(\theta)&=\frac{1}{2B}\tr(H\Sigma(\theta)) = \frac{1}{2B}\tr(H\Sigma_1(\theta))- \frac{1}{2B}\tr(H\Sigma_2(\theta))\\
&\geq \frac{\bmu_1\|H\|_F^2}{B}\erisk(\theta) - \frac{1}{2B}\nabla L(\theta)^T H \nabla L(\theta) = \erisk(\theta)\left(\frac{\bmu_1\|H\|_F^2}{B} - \frac{\theta^TH^3\theta}{B\theta^TH\theta}\right),
\end{align*}
where the contribution of $\Sigma_2(\theta)$ is $-\theta^TH^3\theta/(B\theta^TH\theta)$. Notice that there is a similar term in the contribution of mean gradient $r(\theta)$.  Together we have $\EE[\erisk(\theta_{t+1})]\geq \EE[\erisk(\theta_t)\gamma(\theta)]$ with 
\begin{equation}\label{eqn: o-o}
\begin{aligned}
\gamma(\theta) &\geq 1-2\eta \frac{\theta^TH^2\theta}{\theta^TH\theta}+ \eta^2\frac{\theta^TH^3\theta}{\theta^TH\theta}+\frac{\bmu_1\eta^2}{B}  \|H\|_F^2 - \frac{\eta^2 \theta^TH^3\theta}{B\theta^TH\theta} \\
&= 1-2\eta \frac{\theta^TH^2\theta}{\theta^TH\theta}+ \eta^2\left(1-\frac{1}{B}\right)\frac{\theta^TH^3\theta}{\theta^TH\theta}+\frac{\bmu_1\eta^2}{B}  \|H\|_F^2
\end{aligned}
\end{equation}
By Lemma \ref{lemma: appendix-matrix-lemma}, we have 
\[
\gamma(\theta)\geq 1-\frac{4\eta^2}{4\eta^2(1-B^{-1})} + \frac{\bmu_1\eta^2}{B}\|H\|_F^2=:\gamma_0.
\]
Let $\gamma_0\leq 1$ yields $\|H\|_F^2\leq B/(\eta\sqrt{(B-1)\bmu_1})$. This bound is trivial for the case $B=1$, where 
\[
\gamma(\theta)\geq 1-2\eta \frac{\theta^TH^2\theta}{\theta^TH\theta} + \frac{\bmu_1\eta^2}{B}\|H\|_F\geq 1-2\eta \lambda_1(H) + \frac{\bmu_1\eta^2}{B}\|H\|_F.
\]
Noticing that $\|H\|_F^2=\sum_{j}\lambda_j^2(H)$, the stability condition needs $1-2\eta \lambda_1(H) + \frac{\bmu_1\eta^2}{B}\|H\|_F\leq 1$, leading to
\[
\sum_{j}\lambda_j^2(H)\leq \frac{2B}{\bmu_1\eta}u^THu\leq \frac{2B}{\bmu_1\eta}\lambda_1(H).
\]
Thus, $\lambda_1(H)\leq 2B/(\bmu_1\eta)$. Consequently, $\|H\|_F^2\leq \frac{2B}{\bmu_1\eta}\lambda_1(H) \leq (\frac{2B}{\bmu_1\eta})^2$.

Combing them, we complete the proof.
\qed 
\end{document}